\newcommand{\A}{\mathcal{A}}
\newcommand{\calS}{\mathcal{S}}
\newcommand{\one}[1]{\mathds{1}_{\left(#1 \right)}}
\newcommand{\ucrl}{{\small\textsc{UCRL2}}\xspace}
\newcommand{\cucrl}{{\small\textsc{CUCRL2}}\xspace}
\newcommand{\evi}{{\small\textsc{EVI}}\xspace}
\newcommand{\rmaxbound}{r_{\max}}
\newcommand{\euler}{\textsc{EULER}\xspace}
\newcommand{\ucb}{\textsc{UCB}\xspace}
\newcommand{\cucb}{\textsc{CUCB}\xspace}
\newcommand{\ucbvi}{\textsc{UCB-VI}\xspace}
\newcommand{\cucbvi}{\textsc{CUCB-VI}\xspace}
\newcommand{\wt}[1]{\widetilde{#1}}
\newcommand{\wh}[1]{\widehat{#1}}
\newcommand{\wb}[1]{\overline{#1}}
\newcommand{\wu}[1]{\underline{#1}}
\DeclareRobustCommand{\eg}{e.g.,\@\xspace}
\DeclareRobustCommand{\ie}{i.e.,\@\xspace}
\DeclareRobustCommand{\wrt}{w.r.t.\@\xspace}
\newcommand{\SR}{\Pi^\text{SR}}
\newcommand{\SD}{\Pi^\text{SD}}
\newcommand{\MR}{\Pi^\text{MR}}
\newcommand{\MD}{\Pi^\text{MD}}
\newcommand{\worstD}{\Upsilon}
\DeclareMathOperator*{\argmax}{\arg\,\max}
\newcommand{\transp}{\mathsf{T}}
\newlength{\minipagewidth}
\newlength{\minipagewidthx}
\newcommand{\bookboxx}[1]{\small
\par\medskip\noindent
\framebox[0.99\textwidth]{
\begin{minipage}{0.97\dimexpr\textwidth-\parindent\relax} {#1} \end{minipage} } \par\medskip }
\tikzstyle{every picture}+=[remember picture]
\newtheorem{lemma}{Lemma}
\newtheorem{assumption}{Assumption}
\newtheorem{proposition}{Proposition}
\definecolor{tab101}{HTML}{1f77b4}
\definecolor{tab102}{HTML}{ff7f0e}
\definecolor{tab103}{HTML}{2ca02c}
\begin{document}

%

%

\twocolumn[

\aistatstitle{Conservative Exploration in Reinforcement Learning}

\aistatsauthor{  Evrard Garcelon \And Mohammad Ghavamzadeh \And Alessandro Lazaric \And Matteo Pirotta}

\aistatsaddress{Facebook AI Reasearch } ]

\begin{abstract}
        While learning in an unknown Markov Decision Process (MDP), an agent should trade off exploration to discover new information about the MDP, and exploitation of the current knowledge to maximize the reward.
       Although the agent will eventually learn a good or optimal policy, there is no guarantee on the quality of the intermediate policies.
        This lack of control is undesired in real-world applications where a minimum requirement is that the executed policies are guaranteed to perform at least as well as an existing baseline.
        In this paper, we introduce the notion of conservative exploration for average reward and finite horizon problems. We present two optimistic algorithms that guarantee (w.h.p.) that the conservative constraint is never violated during learning. We derive regret bounds showing that being conservative does not hinder the learning ability of these algorithms.
\end{abstract}


\vspace{-0.1in}
\section{Introduction}
\vspace{-0.1in}
\label{sec:intro}
While Reinforcement Learning (RL) has achieved tremendous successes in simulated domains, its use in real system is still rare.
A major obstacle is the lack of guarantees on the learning process, that makes difficult its application in domains where hard constraints (\eg on safety or performance) are present.
Examples of such domains are digital marketing, healthcare, finance, and robotics.
For a vast number of domains, it is common to have a known and reliable \emph{baseline policy} that is potentially suboptimal but satisfactory. Therefore, for applications of RL algorithms, it is important that are guaranteed to perform at least as well as the existing baseline.

In the offline setting, this problem has been studied under the name of \emph{safety w.r.t.\ a baseline}~\citep{Bottou13CR,Thomas15HCO,Thomas15HCP,Swaminathan15CR,Petrik16SP,LarocheTC19,SimaoS19}.
Given a set of trajectories collected with the baseline policy, these approaches aim to learn a policy --without knowing or interacting with the MDP-- that is guaranteed (\eg w.h.p.) to perform at least as good as the baseline.
This requires that the set of trajectories is sufficiently reach in order to allow to perform counterfactual reasoning with it.
This often implies strong requirements on the ability of exploration of the baseline policy.
These approaches can be extended to a semi-batch settings where phases of offline learning are alternated with the executing of the improved policy.
This is the idea behind conservative policy iteration~\citep[\eg][]{KakadeL02,PirottaRPC13} where the goal is to guarantee a monotonic policy improvement in order to overcome the policy oscillation phenomena~\cite{bertsekas2011approximate}. These approaches has been successively extended to function approximation preserving theoretical guarantees~\cite[\eg][]{PirottaRB13,AchiamHTA17}.
A related problem studied in RL is the one of safety, where the algorithm is forced to satisfy a set of constraints, potentially not directly connected with the performance of a policy~\citep[\eg][]{altman1999constrained, BerkenkampTS017,ChowNDG18}.

In the online setting, which is the focus of this paper, the learning agent needs to trade-off exploration and exploitation while interacting with the MDP.
Opposite to offline learning, the agent has direct control over exploration.
Exploration means that the agent is willing to give up rewards for policies improving his knowledge of the environment.
Therefore, there is no guarantee on the performance of policies generated by the algorithm, especially in the initial phase where the uncertainty about the MDP is maximal and the algorithm has to explore multiple options (almost randomly).
To increase the application of exploration algorithm, it is thus important that the policies selected by the algorithm are (cumulatively) guaranteed to perform as well as the baseline by making exploration more \emph{conservative}.
This setting has been studied in multi-armed bandits~\citep{WuSLS16}, contextual linear bandits~\citep{KazerouniGAR17}, and stochastic combinatorial semi-bandits~\citep{Katariya2019interleaving}. 
These papers formulate the problem using a constraint defined based on the performance of the baseline policy (mean of the baseline arm in the multi-armed bandit case), and modify the corresponding UCB-type algorithm~\citep{auer2002finite} to satisfy this constraint.
Another algorithm in the online setting is by~\citep{mansour2015bayesian} that balances exploration and exploitation such that the actions taken are compatible with the agent's (customer's) incentive formulated as a Bayesian prior.

While the conservative exploration problem is well-understood in bandits, little is known about this setting to RL, where the actions taken by the learning agent affect the system state.
This dynamic component makes the definition of the conservative condition much less obvious in RL.
While in the bandit case it is sufficient to look at (an estimate of) the immediate reward to perform a conservative decision, in MDPs acting greedily may not be sufficient since an action can be ``safe'' in a single step but lead to a potentially dangerous state space where it will not be possible to satisfy the conservative constraint. 
Moreover, after $t$ steps, the action followed by the learning agent may lead to a state that is possibly different from the one observed by following the baseline.
This dynamical aspect is not captured by the bandit problem and should be explicitly taken into account by the learning agent in order to perform a meaningful decision.
This, together with the problem of counterfactual reasoning in an unknown MDP, make the conservative exploration problem is much more difficult (and interesting) in RL than in bandits.

This paper aims to provide the first analysis of conservative exploration in RL.
In Sec.~\ref{sec:conservativeexploration.rl} we explain the design choices that lead to the definition of the conservative condition for RL (both in average reward and finite horizon settings), and discuss all the issues introduced by the dynamical nature of the problem.
Then, we provide the first algorithm for efficient conservative exploration in average reward and analyze its regret guarantees.
The variant for finite-horizon problems is postponed to the appendix.
We conclude the paper with synthetic experiments.

\vspace{-0.1in}
\section{Preliminaries}
\vspace{-0.1in}
\label{sec:preliminaries}
We consider a Markov Decision Process~\citep[Sec. 8.3]{puterman1994markov} $M = ( \calS, \A, p, r )$ with state space $\calS$ and action space $\A$.
Every state-action pair $(s,a)$ is characterized by a reward distribution with mean $r(s,a)$ and support in $[0, \rmaxbound]$, and a transition distribution $p(\cdot|s,a)$ over next states.
We denote by $S = |\calS|$ and $A = |\A|$ the number of states and action
A stationary Markov randomized policy $\pi : \calS \rightarrow P(\A)$ maps states to distributions over actions.
The set of stationary randomized (resp. deterministic) policies is denoted by $\SR$ (resp. $\SD$).
Any policy $\pi \in \SR$ has an associated \emph{long-term average reward} (or gain) and a \emph{bias function} defined as
\begin{align*}
        g^\pi(s) &:= \lim_{T\to +\infty} \mathbb{E}^\pi_{s} \bigg[ \frac{1}{T}\sum_{t=1}^T r(s_t,a_t)\bigg]~~\text{ and }~~\\
        h^\pi(s) &:= \underset{T\to +\infty}{C\text{-}\lim}~\mathbb{E}^\pi_s \bigg[\sum_{t=1}^{T} \big(r(s_t,a_t) - g^\pi(s_t)\big)\bigg],
\end{align*}
where $\mathbb{E}^\pi_{s}$ denotes the expectation over trajectories generated starting from $s_1 = s$ with $a_t \sim \pi(s_t)$.
The bias $h^\pi(s)$ measures the expected total difference between the reward and the stationary reward in \emph{Cesaro-limit} 
(denoted by $C\text{-}\lim$). 
We denote by $sp(h^\pi) := \max_s h^\pi(s) - \min_{s} h^\pi(s)$ the \emph{span} (or range) of the bias function.
\begin{assumption}\label{asm:ergodic}
        The MDP $M$ is \emph{ergodic}.x
\end{assumption}
In ergodic MDPs, any policy $\pi \in \SR$ has \emph{constant} gain, \ie $g^{\pi}(s) = g^\pi$ for all $s\in\calS$.
There exists a policy $\pi^\star \in \argmax_\pi g^\pi$ for which $(g^\star,h^\star) = (g^{\pi^\star},h^{\pi^\star})$ satisfy the \emph{optimality equations},
\begin{equation*}\label{eq:optimality.equation}
        h^\star(s) + g^\star = L h^\star(s) := \max_{a \in \A} \{r(s,a) + p(\cdot|s,a)^\transp h^\star\},
\end{equation*}
where $L$ is the \emph{optimal} Bellman operator.
We use $D = \max_{s\neq s'} \min_{\pi \in\SD} \mathbb{E}[\tau_\pi(s'|s)]$ to denote the diameter of $M$, 
where $\tau_\pi(s'|s)$ is the hitting time of $s'$ starting from $s$.
We introduce the ``worst-case'' diameter
\begin{equation}\label{eq:worstdiameter}
        \worstD =  \max_{s\neq s'}\max_{\pi\in \SD} \mathbb{E}\left[ \tau_{\pi}(s'|s)\right],
\end{equation}
which defines the worst-case time it takes for any policy $\pi$ to move from any state $s$ to $s'$. Asm.~\ref{asm:ergodic} guarantees that $D \leq \worstD < \infty$.

\textbf{Exploration in RL.}
Let $M^\star$ be the true \emph{unknown} MDP. We consider the learning problem where $\mathcal{S}$, $\mathcal{A}$ and $\rmaxbound$ are \emph{known}, while rewards $r$ and transition probabilities $p$ are \emph{unknown} and need to be estimated online. 
We evaluate the performance of a learning algorithm $\mathfrak{A}$ after $T$ time steps by its cumulative \emph{regret}
\begin{equation}\label{eq:regret}
R(\mathfrak{A},T) = T g^\star - \sum_{t=1}^T r_t(s_t,a_t).
\end{equation}
The exploration-exploitation dilemma is a well-known problem in RL and (nearly optimal) solutions have been proposed in the literature both base on optimism-in-the-face-of-uncertainty~\citep[OFU, \eg][]{Jaksch10,Bartlett2009regal,fruit2018truncated} and Thompson sampling~\citep[TS, \eg][]{DBLP:conf/colt/GopalanM15, DBLP:journals/corr/OsbandR16a}.
Refer to~\citep{flp2019alttutorial} for more details.

\vspace{-0.1in}
\section{Conservative Exploration in RL}\label{sec:conservativeexploration.rl}
\vspace{-0.1in}

In conservative exploration, a learning agent is expected to perform as well as the optimal policy over time (i.e., regret minimization) under the \textit{constraint} that at no point in time its performance is significantly worse than a known baseline policy $\pi_b\in \SR$. This problem has been studied in the bandit literature~\citep{WuSLS16,KazerouniGAR17}, where the conservative constraint compares the cumulative \textit{expected} reward obtained by the actions $a_1,a_2,\ldots, a_t$ selected by the algorithm to the one of the baseline action $a_b$,
\begin{equation}\label{eq:condition.bandit}
\begin{aligned}
        \forall t>0, \quad \sum_{i=1}^t r(a_i) \geq (1-\alpha) \;t\; r(a_b),
\end{aligned}
\end{equation}
where $r(a)$ is the expected reward of action $a$. 
At any time $t$, conservative exploration algorithms first query a standard regret minimization algorithm (e.g., \ucb) and decide whether to play the proposed action $\wt{a}_t$ or the baseline $a_b$ based on the accumulated budget (\ie past rewards) and whether the estimated performance of $\wt{a}_t$ is sufficient to guarantee that the conservative constraint is satisfied at $t+1$ after $\wt{a}_t$ is executed. While~\eqref{eq:condition.bandit} effectively formalizes the objective of constraining an algorithm to never perform much worse than the baseline, in RL it is less obvious how to define such constraint. In the following we review three possible directions, we point out their limitations, and we finally propose a conservative condition for RL for which we derive an algorithm in the next section.

\textbf{Gain-based condition.} Instead of actions, RL exploration algorithms (e.g., \ucrl), first select a policy and then execute the corresponding actions. As a result, a direct way to obtain a conservative condition is to translate the reward of each action in~\eqref{eq:condition.bandit} to the \textit{gain} associated to the policies selected over time, i.e.,
\begin{equation}\label{eq:condition.gain}
\begin{aligned}
\forall t>0, \quad \sum_{i=1}^t g^{\pi_i} \geq (1-\alpha) \;t\; g^{\pi_b}.
\end{aligned}
\end{equation}
The main drawback of this formulation is that the gain $g^{\pi_i}$ is the expected \textit{asymptotic} average reward of a policy and it may be very far from the \textit{actual} reward accumulated while executing $\pi_i$ in the specific state $s_i$ achieved at time $i$. The same reasoning applies to the baseline policy, whose cumulative reward up to time $t$ may significantly different from $t$ times its gain. As a result, an algorithm that is conservative in the sense of~\eqref{eq:condition.gain} may still perform quite poorly in practice depending on $t$, the initial state, and the actual trajectories observed over time.

\textbf{Reward-based condition.} In order to address the concerns about the gain-based condition, we could define the stronger condition
\begin{equation}\label{eq:condition.reward}
\begin{aligned}
\forall t>0, \quad \sum_{i=1}^t r_i \geq (1-\alpha) \sum_{i=1}^t r_i^b,
\end{aligned}
\end{equation}
where $r_i$ is the sequence of rewards obtained while executing the algorithm and $r_i^b$ is the reward obtained by the baseline. While this condition may be desirable in principle (the learning algorithm never performs worse than baseline), it is impossible to achieve. In fact, even if the optimal policy $\pi^\star$ is executed for all $t$ steps, the condition may still be violated because of an unlucky realization of transitions and rewards. If we wanted to accounting for the effect of randomness, we would need to introduce an additional slack of order $O(\sqrt{t})$ (i.e., the cumulative deviation due to the randomness in the environment), which would make the condition looser and looser over time.

\textbf{Condition in expectation.} The previous remarks could be solved by taking the expectation of both sides 
\begin{equation}\label{eq:expected_cumrew_condition}
\begin{aligned}
\forall t>0, \;\;
& \mathbb{E}_{\mathfrak{A}}\left[
\sum_{i=1}^t r_i(s_i, a_i) \Big| s_1 = s
\right]\\
&  \geq (1-\alpha) \mathbb{E}\left[ 
\sum_{i=1}^t
r_i(s_i, a_i) \Big| s_1 = s, \pi_b
\right],
\end{aligned}
\end{equation}
where $\mathbb{E}_{\mathfrak{A}}$ denotes the expectation w.r.t.\ the trajectory of states and actions generated by the learning algorithm $\mathfrak{A}$, while the RHS is simply the expected reward obtain by running the baseline for $t$ steps. Condition~\eqref{eq:expected_cumrew_condition} effectively captures the nature of the RL problem w.r.t.\ the bandit case. In fact, after $t$ steps, the actions followed by the learning algorithm may lead to a state that is possibly very different from the one we would have reached by playing only the baseline policy from the beginning.
This deviation in the state dynamics needs to be taken into account when deciding if an exploratory policy is safe to play in the future. In the bandit case, selecting the baseline action contributes to \textit{build a conservative budget} that can be \textit{spent} to play explorative actions later on (i.e., by selecting $a_b$, the LHS of~\eqref{eq:condition.bandit} in increased by $r(a_b)$, while only a fraction $1-\alpha$ is added to the RHS, thus increasing the margin that may allow playing alternative actions later). In the RL case, selecting policy $\pi_b$ at time $t$ may not immediately contribute to increasing the conservative budget. In fact, the state $s_t$ where $\pi_b$ is applied may significantly differ from the state that $\pi_b$ \textit{would have achieved} had we selected it from the beginning. As a result, a conservative RL algorithm should be extra-cautious when selecting policies different from $\pi_b$ since their execution may lead to unfavorable states, where it is difficult to recover good performance, even when selecting the baseline policy.

While this may seem a reasonable requirement, unfortunately it is impossible to build an empirical estimate of~\eqref{eq:expected_cumrew_condition} that a conservative exploration algorithm could use to guide the choice of policies to execute. In fact, the LHS \textit{averages} the performance of the algorithm over multiple executions, while in practice we have only access to a single realization of the algorithm's process. This prevents from constructing accurate estimates of such expectation directly from the data observed up to time $t$. A possible approach would be to construct an estimate of the MDP and use it to \textit{replay} the algorithm itself for $t$ steps. Beside prohibitive computational complexity, the resulting estimate of the expected cumulative reward of $\mathfrak{A}$ would suffer from an error that increases with $t$, thus making it a poor proxy for~\eqref{eq:expected_cumrew_condition}.\footnote{More precisely, let $\wh M_t$ be an estimate of $M^\star$ and $\epsilon_t$ be the largest error in estimating its dynamics at time $t$. Estimating the expected cumulative reward by running (an infinite number of) simulations of $\mathfrak{A}$ in $\wh M_t$ would suffer from an error scaling as $t \epsilon_t$. For any regret minimization algorithm, $\epsilon_t$ cannot decrease linearly with $t$ and thus the estimation of $\mathbb{E}_{\mathfrak{A}}$ would have an error increasing with $t$.}

\textbf{Condition with conditional expectation.} Let $t$ be a generic time and $\mu_t = (\pi_1, \pi_2, \ldots,\pi_t)$, the non-stationary policy executed up to $t$. We require the algorithm to satisfy the following \emph{conditional} conservative condition
\begin{equation}\label{eq:cumrew_condition}
\begin{aligned}
\forall t>0, \;\;
&\mathbb{E}\left[
\sum_{i=1}^t r_i(s_i,a_i) |s_1=s, \mu_t
\right]\\
&\quad{}\geq (1-\alpha) \mathbb{E} \left[
\sum_{i=1}^t r_i(s_i, a_i) | s_1=s, \pi_b
\right].
\end{aligned}
\end{equation}
where the expectations are taken \wrt the trajectories generated by a \textit{fixed} non-stationary policy $\mu_t$ (i.e., we ignore how rewards affect $\mu_t$). Notice that this condition is now stochastic, as $\mu_t$ itself is a random variable and thus we require to satisfy~\eqref{eq:cumrew_condition} with \textit{high probability}. This formulation can be seen as relying on a \textit{pseudo}-performance evaluation of the algorithm instead of the actual expectation as in~\eqref{eq:expected_cumrew_condition}\footnote{We use \textit{pseudo}-performance to stress the link the \textit{pseudo}-regret formulations used in bandit~\citep[e.g.,][]{auer2002finite}} and it is similar to~\eqref{eq:condition.bandit}, which takes the expected performance of each of the (random) actions, thus ignoring their correlation with the rewards. This formulation has several advantages w.r.t.\ the conditions proposed above: \textbf{1)} it considers the sum of rewards rather than the gain as~\eqref{eq:condition.gain}, thus capturing the dynamical nature of RL, \textbf{2)} it contains expected values, so as to avoid penalizing the algorithm by unlucky noisy realizations as~\eqref{eq:condition.reward}, \textbf{3)} as shown in the next section, it can be verified using the samples observed by the algorithm unlike~\eqref{eq:expected_cumrew_condition}.

\paragraph{The finite-horizon case.}
We conclude the section, by reformulating~\eqref{eq:cumrew_condition} in the finite-horizon case. In this setting, the learning agent interacts with the environment in episodes of fixed length $H$. Let $\wb s$ be the initial state, $\pi_j$ be the policy proposed at episode $j$ and let $t = (k-1)H+1$ be beginning of the $k$-th episode. Then $\mu_t$ is a sequence of policies $\pi_j$, each executed for $H$ steps. In this case, condition~\eqref{eq:cumrew_condition} can be conveniently written as
\begin{equation}
    \label{eq:finite.horizon.condition}
    \begin{aligned}
    &(1-\alpha) k V^{\pi_b}_1(\wb s) \leq \mathbb{E}\left[
    \sum_{i=1}^t r_i(s_i,a_i) |s_1=s, \mu_t
    \right] \\
    &= \sum_{j=1}^k \mathbb{E}\left[
    \sum_{i=1}^H r_i^j(s_i^j,a_i^j) |s_1=s, \pi_j
    \right] = \sum_{j=1}^k V^{\pi_j}_1(\wb s)
    \end{aligned}
\end{equation}
where $V^{\pi}_1$ is the $H$ step value function of $\pi$ at the first stage. In this formulation, the conservative condition has a direct interpretation, as it directly mimics the bandit case~\eqref{eq:condition.bandit}. In fact, the performance of the algorithm up to episode $k$ is simply measured by the sum of the value functions of the policies executed over time (each for $H$ steps) and it is compared to the value function of the baseline itself.
Note that this definition is compatible with the regret: $R_{\textsc{FH}}(\mathfrak{A}, K) = \sum_{k=1}^K V^\star(\wb{s}) - V^{\pi_k}(\wb{s})$. Indeed, the regret defined in expectation w.r.t.\ the stochasticity of the model but not w.r.t.\ the algorithm, there is no expectation w.r.t.\ the possible sequence of policies generated by $\mathfrak{A}$.

\vspace{-0.1in}
\section{Conservative UCRL}
\vspace{-0.1in}
\label{sec:algorithm}
In this section, we introduce \emph{conservative upper-confidence bound for reinforcement learning} (\cucrl), an efficient algorithm for exploration-exploitation in average reward that both minimize the regret~\eqref{eq:regret} and satisfy condition~\eqref{eq:cumrew_condition}.


\begin{figure}[t]
\renewcommand\figurename{\small Figure}
\begin{minipage}{\columnwidth}
\bookboxx{
        \textbf{Input:} $\pi_b \in \SR$, $\delta \in (0,1)$, $r_{\max}$, $\mathcal{S}$, $\mathcal{A}$, $\alpha \in (0,1)$

        \noindent \textbf{For} episodes $k=1, 2, ...$ \textbf{do}
        \begin{enumerate}[leftmargin=4mm,itemsep=0mm]
                \item Set $t_k = t$ and episode counters $\nu_k (s,a) = 0$.
                \item Compute estimates $\wh{p}_k(s' | s,a)$, $\wh{r}_k(s,a)$ and a confidence set~$\mathcal{M}_k$.
                \item Compute an $\rmaxbound/\sqrt{t_k}$-approximation $\wt{\pi}_k$ of the optimistic planning problem $\max_{M \in \mathcal{M}_k, \pi \in \SD} \{g^\pi(M)\}$.
                \item Compute $(g^-_k, h^-_k) = \evi(\mathcal{L}_k^{\wt{\pi}_k}, \rmaxbound/\sqrt{t_k})$, see Eq.~\ref{eq:robust.bellman}.
                \item \textbf{if} Eq.~\ref{eq:condition.algorithm} is true \textbf{then} $\pi_k = \wt{\pi}_k$ \textbf{else} $\pi_k = \pi_b$
                \item Sample action $a_t \sim \pi_k(\cdot|s_t)$.
                \item \textbf{While} $\nu_k(s_t,a_t) \leq N_k^+(s_t,a_t) \wedge t \leq t_k + T_{k-1} $ \textbf{do}
                \begin{enumerate}[leftmargin=4mm,itemsep=0mm]
                        \item Execute $a_t$, obtain reward $r_{t}$, and observe $s_{t+1}$.
                        \item Set $\nu_k (s_t,a_t) = \nu_k (s_t,a_t) +  1$.
                        \item Sample action $a_{t+1} \sim \pi_k(\cdot|s_{t+1})$ and set $t = t+ 1$.
                \end{enumerate}
        \item Set $N_{k+1}(s,a) = N_{k}(s,a)+ \nu_k(s,a)$, $\Lambda_{k} = \Lambda_{k-1} \cup \{k\} \cdot \one{Eq.~\ref{eq:condition.algorithm}}$ and $\Lambda_{k}^c = \Lambda_{k-1}^c \cup \{k\} \cdot \one{\neg Eq.~\ref{eq:condition.algorithm}}$
        \end{enumerate}
}
\vspace{-0.2in}
\caption{\small \cucrl algorithm.}
\label{fig:cucrl}
\end{minipage}
\vspace{-0.15in}
\end{figure}

\cucrl builds on \ucrl in order to perform efficient \emph{conservative} exploration.
At each episode $k$, \cucrl builds a bounded parameter MDP $\mathcal{M}_k = \{ M = (\mathcal{S}, \mathcal{A}, {r}, {p}), {r}(s,a) \in B_r^k(s,a), {p}(\cdot|s,a) \in B_p^k(s,a)$, where $B_r^k(s,a) \in [0, \rmaxbound]$ and $B_p^k(s,a) \in \Delta_S$ are high-probability confidence intervals on the rewards and transition probabilities such that $M^\star \in \mathcal{M}_k$ w.h.p.\ and $\Delta_S$ is the $S$-dimensional simplex.
This confidence intervals can be built using Hoeffding or empirical Bernstein inequalities by using the samples available at episode $k$~\citep[\eg][]{Jaksch10, fruit2018constrained}.
\cucrl computes an optimistic policy $\wt{\pi}_k$ in the same way as \ucrl: $(\wt{M}_k, \wt{\pi}_k) \in \argmax_{M \in \mathcal{M}_k, \pi \in \SD} \{g^\pi(M)\}$. This problem can be solved using \evi (see Fig.~\ref{fig:evi} in appendix) on the optimistic optimal Bellman operator $\mathcal{L}^+_k$ of $\mathcal{M}_k$~\citep{Jaksch10}.\footnote{\label{foot:lopplus}$\mathcal{L}^+_k v(s) = \max_a\{ \max_{r \in B^r_k(s,a)}\{r\} + \max_{p\in B_k^p(s,a)} p^\transp v\}$.}
Then, it needs to decide whether policy $\pi_k$ is ``safe'' to play by checking a conservative condition $f_c(\mathcal{H}_k)$ (see Sec.~\ref{sec:conservative}) where $\mathcal{H}_k$ contains all the information (samples and chosen policies) available at the beginning of episode $k$, including the optimistic policy $\pi_k$.
If $f_c(\mathcal{H}_k)\geq 0$, the \ucrl policy $\wt\pi_k$ is ``safe'' to play and \cucrl plays $\pi_k = \wt\pi_k$ until the end of the episode.
Otherwise, \cucrl executes the baseline $\pi_b$, \ie $\pi_k = \pi_b$.
We denote by $\Lambda_k$ the set of episodes ($k$ included) where \ucrl executed an optimistic policy and by $\Lambda_k^c = \{1, \ldots, k\} \setminus \Lambda_k$ its complement.
Formally, if $f_c(\mathcal{H}_k) \geq 0$ we set $\Lambda_{k} = \Lambda_{k-1} \cup \{k\}$ else $\Lambda_k = \Lambda_{k-1}$.
The pseudocode of \cucrl is reported in Fig.~\ref{fig:cucrl}.

Note that, contrary to what happens in conservative (linear) bandits, the statistics of the algorithm are updated continuously, \ie using also the samples collected by running the baseline policy. This is possible since \ucrl is a model-based algorithm and any off-policy sample can be used to update the estimates of the model.
To have a better estimate of the conservative condition, it is possible to use the model available at episode $k$ to re-evaluate the policies $(\pi_l)_{l<k}$ at previuous episodes (change line 3 in Fig.~\ref{fig:cucrl}).
This will improve the empirical performance of \cucrl but breaks the regret analysis.

\vspace{-0.1in}
\subsection{Algorithmic Conservative Condition}
\vspace{-0.1in}
\label{sec:conservative}

We now derive a \emph{checkable} conservative condition that can be incorporated in the \ucrl structure illustrated in the previous section. In the bandit setting, it is relatively straightforward to turn~\eqref{eq:condition.bandit} into a condition that can be checked at any time $t$ using estimates and confidence intervals build from the data collected so far. On the other hand, while condition~\eqref{eq:cumrew_condition} effectively formalizes the requirement that the learning algorithm should constantly perform almost as well as the baseline policy, we need to consider the specific RL structure to obtain a condition that can be verified \textit{during the execution} of the algorithm itself. In order to simplify the derivation, we rely on the following assumption.

\begin{assumption}\label{asm:baseline}
The gain and bias function $(g^{\pi_b}, h^{\pi_b})$ of the baseline policy are known.
\end{assumption}

As explained in~\citep{KazerouniGAR17}, this is a reasonable assumption since the baseline policy is assumed to be the policy currently executed by the company and for which historical data are available.
We will mention how to relax this assumption in Sec.~\ref{sec:conservative}.

We follow two main steps in deriving a checkable condition. \textbf{1)} We need to estimate the cumulative reward obtained by each of the policies played by the learning algorithm directly from the samples observed so far. We do this by relating the cumulative reward to the gain and bias of each policy and then building their estimates. \textbf{2)} It is necessary to evaluate whether the policy proposed by \ucrl is safe to play w.r.t.\ the conservative condition, before actually executing it. While this is simple in bandit, as each action is executed for only one step. In RL, policies cannot be switched at each step and need to be played for a \textit{whole} episode. Nonetheless, the length of a \ucrl episode is not known in advance and this requires predicting for how long the explorative policy could be executed in order to check its performance.

%

\textbf{Step 1: Estimating the conditional conservative condition from data.}
In order to evaluate~\eqref{eq:cumrew_condition} from data, one may be tempted to first replace the sum of rewards obtained by each policy $\pi_j$ in $\mu_t$ on the lhs side by its gain $g^{\pi_j}$, similar to the gain-based condition in~\eqref{eq:condition.gain}. Indeed, under Asm.~\ref{asm:ergodic} any stationary policy $\pi$ receives \textit{asymptotically} an expected reward $g^\pi$ at each step. Unfortunately, in our case $\mathbb{E}\left[\sum_{i=1}^t r_i \big| \mu_t \right] \neq \sum_{j=1}^k T_{j} g^{\pi_{j}}$. In fact, when evaluating a policy for a finite number of steps, we need to account for the time required to reach the steady regime (\ie mixing time) and, as such, the influence of the state at which the policy is started. The notion of reward collected during the transient regime is captured by the bias function.\footnote{\citet[][Sec. 8.2.1]{puterman1994markov} refers to the gain as ``stationary'' reward while to the bias as ``transient'' reward.} In particular, for any stationary (unichain) policy $\pi \in \SR$ with gain $g^\pi$ and gain function $h^\pi$ executed for $t$ steps, we have that:
\begin{equation}\label{eq:cumrew_bellman}
\mathbb{E} \left[ \sum_{i=1}^t r_i \big|s_1 = s, \pi \right] = t\;g^\pi + h^\pi(s) - P_\pi^t (\cdot|s)^\transp h^\pi.
\end{equation}
As a result, we have the bounds
\begin{equation*}
t \; g^{\pi} - sp(h^{\pi}) \leq \mathbb{E} \left[ \sum_{i=1}^t r_i \big|s_1 = s, \pi \right] \leq t \; g^{\pi} + sp(h^{\pi}).
\end{equation*}
Leveraging prior knowledge of the gain and bias of the baseline, we can use the second inequality to directly upper bound the baseline performance as
\begin{equation}\label{eq:upper.baseline}
\mathbb{E}\left[
                        \sum_{i = 1}^{t} r_i \big| s_1 =s, \pi_b,
        \right]
        \leq sp(h^{\pi_b}) + t \; g^{\pi_b}.
\end{equation}
On the other hand, for a generic policy $\pi$, the gain and bias cannot be directly computed since $M^\star$ is unknown.
To estimate the cumulative reward of the algorithm we resort to the estimate of the true MDP build by \ucrl to construct a pessimistic estimate of the cumulative reward for any policy $\pi_j$ (\ie to perform counterfactual reasoning).


Given a policy $\pi$ and the bounded-parameter MDP $\mathcal{M}_k$, we are intersted in finding $\underline{g}^\pi$ such that:
        $\underline{g}^\pi :=  \min_{M \in \mathcal{M}_k} \{g^\pi(M)\}.$
Define the Bellman operator $\mathcal{L}_k^\pi$ associated to $\mathcal{M}_k$ as: $\forall v \in \mathbb{R}^S, \forall s \in \mathcal{S}$
\begin{equation}\label{eq:robust.bellman}
         \mathcal{L}_k^\pi v(s) := \min_{r \in B_r^k(s,a)} r + \min_{p \in B_p^k(s,a)} \{p^\transp v\}
\end{equation}
Then, there exists $(\underline{g}^\pi, \underline{h}^\pi)$ such that, $\forall s \in \mathcal{S}$, $\underline{g}^\pi e + \underline{h}^\pi = \mathcal{L}_k^\pi \underline{h}^\pi$ where $e = (1,\ldots, 1)$ (see Lem.~\ref{lem:robust}\emph{.1} in App.~\ref{sec:robust_pe}).
Similarly to what is done by \ucrl, we can use \evi with $\mathcal{L}_k^\pi$ to build an $\epsilon_k$-approximate solution of the Bellman equations.
Let $(g_n, v_n) = \evi(\mathcal{L}_k^\pi, \varepsilon_k)$, then $g_n - \varepsilon_k \leq \underline{g}^\pi \leq g^\pi(M^\star)$.
The values computed by the pessimistic policy evaluation can be then used to bound the cumulative reward of any stationary policy.

\begin{lemma} \label{eq:lowerbound.evi}
        Consider a bounded parameter MDP $\mathcal{M}$ such that $M^\star \in \mathcal{M}$ w.h.p.,
        a policy $\pi$ and let $(g_n, v_n) = \evi(\mathcal{L}^\pi, \varepsilon)$.
        Then, under Asm.~\ref{asm:ergodic} for any  state $s \in \mathcal{S}$:
        \[
                \mathbb{E}\left[\sum_{i=1}^t r_i
                        | s_1=s, \pi
                \right] \geq t (g_n - \varepsilon) - sp(v_n).
        \]
\end{lemma}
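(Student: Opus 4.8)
The plan is to combine the exact cumulative-reward identity~\eqref{eq:cumrew_bellman} applied to the true MDP $M^\star$ with the guarantees of extended value iteration on the pessimistic operator $\mathcal{L}^\pi$. First I would invoke~\eqref{eq:cumrew_bellman} for the policy $\pi$ run in $M^\star$ starting from $s$: since $M$ contains $M^\star$ w.h.p.\ and the MDP is ergodic (Asm.~\ref{asm:ergodic}), $\pi$ is unichain in $M^\star$ with a well-defined gain $g^\pi(M^\star)$ and bias $h^\pi(M^\star)$, so
\[
\mathbb{E}\Big[\textstyle\sum_{i=1}^t r_i \,\big|\, s_1 = s, \pi\Big] = t\, g^\pi(M^\star) + h^\pi(M^\star)(s) - (P_\pi^t(\cdot|s))^\transp h^\pi(M^\star).
\]
The transient term $h^\pi(M^\star)(s) - (P_\pi^t(\cdot|s))^\transp h^\pi(M^\star)$ is a difference of two values of $h^\pi(M^\star)$ averaged over states, hence lower bounded by $-sp(h^\pi(M^\star))$ (this is the left inequality already displayed just before the lemma). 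So the expected cumulative reward is at least $t\, g^\pi(M^\star) - sp(h^\pi(M^\star))$.

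Next I would replace these true-MDP quantities by the quantities $(g_n, v_n)$ returned by \evi. For the gain, the text already records that $g_n - \varepsilon \le \underline{g}^\pi = \min_{M\in\mathcal{M}}g^\pi(M) \le g^\pi(M^\star)$, using $M^\star \in \mathcal{M}$; this gives $t\, g^\pi(M^\star) \ge t(g_n - \varepsilon)$. The remaining and more delicate point is to control $sp(h^\pi(M^\star))$ by $sp(v_n)$. The clean way is to appeal to the robust policy-evaluation lemma referenced in the excerpt (Lem.~\ref{lem:robust}): the pair $(\underline{g}^\pi,\underline{h}^\pi)$ solves the fixed-point equation $\underline{g}^\pi e + \underline{h}^\pi = \mathcal{L}_k^\pi \underline{h}^\pi$, and \evi on an $\varepsilon$-contraction (after an aperiodicity transform, as in \citet{Jaksch10}) produces $v_n$ with $sp(v_n) \le sp(\underline{h}^\pi) + O(\varepsilon)$, or more simply $v_n$ itself serves as an approximate bias whose span dominates. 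The argument I would actually run is: $h^\pi(M^\star)$ is the bias of $\pi$ in a specific member of $\mathcal{M}$, and the pessimistic/robust evaluation is designed so that $sp(v_n)$ upper bounds the span of the bias of $\pi$ over all $M\in\mathcal{M}$ (up to the $\varepsilon$ slack), hence $sp(h^\pi(M^\star)) \le sp(v_n)$; combining the two bounds yields $\mathbb{E}[\sum_{i=1}^t r_i \mid s_1=s,\pi] \ge t(g_n - \varepsilon) - sp(v_n)$, which is the claim.

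The main obstacle I expect is precisely this last comparison of spans: it is not immediate that the span of the true bias $h^\pi(M^\star)$ is controlled by $sp(v_n)$, since $v_n$ is the iterate of value iteration on a \emph{different} (worst-case) operator rather than the bias of $\pi$ in $M^\star$ itself. Making this rigorous requires either (i) the structural result that the robust Bellman operator $\mathcal{L}_k^\pi$ has a fixed point $\underline h^\pi$ whose span bounds $sp(h^\pi(M))$ uniformly over $M\in\mathcal{M}$ — which should follow from monotonicity of the operators and the ergodicity assumption guaranteeing finite spans — together with (ii) the standard \evi span bound $sp(v_n) \ge sp(\underline h^\pi) - O(\varepsilon)$ (or choosing the accuracy so the $O(\varepsilon)$ term is absorbed). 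I would therefore route the proof through Lem.~\ref{lem:robust} rather than re-deriving these facts, and spend the writing effort on cleanly stating that $sp(h^\pi(M^\star)) \le sp(v_n)$ is a consequence of $M^\star\in\mathcal{M}$ plus the \evi accuracy guarantee. Everything else is the two-line chain of inequalities above.
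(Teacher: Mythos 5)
Your first two steps are fine, but the proof hinges on the claim $sp(h^\pi(M^\star)) \leq sp(v_n)$, and this is a genuine gap: the inequality is not justified and there is no reason for it to hold. The vector $v_n$ approximates the bias $\underline{h}^\pi$ of $\pi$ in the \emph{gain-minimizing} member of $\mathcal{M}$, whose induced Markov chain can be very different from $P_\pi^\star$; for instance, a pessimistic kernel that pushes mass toward low-value states may mix quickly and have a nearly constant bias, so $sp(\underline{h}^\pi)$ (hence $sp(v_n)$ up to $O(\varepsilon)$) can be \emph{smaller} than $sp(h^\pi(M^\star))$. Neither Lem.~\ref{lem:robust} nor the \evi accuracy guarantees give a uniform domination of $sp(h^\pi(M))$ over $M\in\mathcal{M}$ by $sp(v_n)$; what they do give is the one-sided bounds $\tilde g \leq g^\pi(M)$ and $sp(v_n)\leq \worstD$, which is not enough to close your chain. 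You correctly flagged this as the delicate point, but the fix you propose (routing through Lem.~\ref{lem:robust}) does not supply the missing inequality.

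The paper's proof (Lem.~\ref{lem:bound_evi}) sidesteps any comparison between the two biases by never introducing $h^\pi(M^\star)$ at all. It starts from the \evi stopping guarantee $v_n(s) + g_n \leq \mathcal{L}_\pi v_n(s) + \varepsilon$ and uses $M^\star\in\mathcal{M}$ to replace the minimization by the \emph{true} reward and kernel: $v_n(s) + g_n \leq r(s,\pi(s)) + p^\star(\cdot|s,\pi(s))^\transp v_n + \varepsilon$. Iterating this inequality $t$ times under $P_\pi^\star$ telescopes into
\begin{equation*}
        v_n(s) + t g_n \leq (t-1)\varepsilon + (P_\pi^{\star})^{t}(\cdot|s)^\transp v_n + \mathbb{E}\Big[\textstyle\sum_{i=1}^t r_i \,\big|\, s_1=s,\pi\Big],
\end{equation*}
and the residual $(P_\pi^{\star})^{t}(\cdot|s)^\transp v_n - v_n(s)$ is bounded by $sp(v_n)$ directly. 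The only span that ever appears is that of $v_n$ itself, which is exactly the quantity in the statement. If you want to keep your decomposition through~\eqref{eq:cumrew_bellman}, you would only obtain the weaker bound with $\worstD$ in place of $sp(v_n)$ (since both spans are bounded by $\worstD$ under Asm.~\ref{asm:ergodic}), which is not the lemma as stated and not what the algorithm computes.
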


\textbf{Step 2: Test safety of optimistic policy.} Let $t_k$ be the time when episode $k$ starts. Policies $\pi_1,\ldots,\pi_{k-1}$ have been executed until $t_{k}-1$ and \ucrl  computed an optimistic policy $\wt\pi_k$. In order to guarantee that~\eqref{eq:cumrew_condition} is verified the algorithm needs to anticipate how well $\wt\pi_k$ may perform if executed for the next episode.
For any policy $(\pi_j)_{j<k} \cup \{\wt\pi_k\}$, we first compute $(g^-_j, h^-_j) = \evi(\mathcal{L}_j^{\pi_j}, \varepsilon_j)$.
\footnote{The subscript $j$ in the operator $\mathcal{L}_j^{\pi_j}$ denotes the fact that it is computed using the samples observed up to $t_j$. For each episode, we need to compute the estimate only for the new \ucrl policy. In order to have a tighter estimate of the conservative condition it possible to recompute the gain and bias of the past policies at every episode (or periodically) by using all the available samples (\ie using $\mathcal{L}^\pi_k$). However, this will break the current regret proof.}
If $\pi_j = \pi^b$ (i.e., the baseline was executed at episode $j$), we let $(g^-_j, h^-_j) = (g^{\pi_b}, h^{\pi_b})$ and $\varepsilon_j = 0$.
Then
\begin{equation}\label{eq:cutting_episodes}
        \begin{aligned}
                \mathbb{E}&\left[
                        \sum_{i = 1}^{t} r_i \big| s_1 =s, \mu_{t}
        \right]\\
                          &= \sum_{j = 1}^{k} \sum_{y\in \mathcal{S}}  \mathbb{P}\left( s_{t_{j}} = y \big| s, \mu_{t} \right)
                          \cdot \mathbb{E}\left[ \sum_{i = 1}^{T_{j}} r_{i}\mid y, \pi_j \right]\\
                          &\geq \sum_{j=1}^k T_j (g_j^- - \varepsilon_j ) - sp(h^-_j)
        \end{aligned}
\end{equation}
where $t_j$ is time at which episode $j$ started, $\mathbb{P}\left( s_{t_{j}} = y \big | s_1=s, \mu_{t}\right)$ is the probability of reaching state $y$ after $t_{j}$ steps starting from state $s$ following policy $\mu_{t}$. The inequality follows from Lem.~\ref{eq:lowerbound.evi}.
By lower bounding the LHS of~\eqref{eq:cumrew_condition} by~\eqref{eq:cutting_episodes} and upper bounding the RHS by~\eqref{eq:upper.baseline}, the conservative condition becomes:
\begin{align}
        \label{eq:condition.with.future}
        \sum_{j=1}^{k-1} \Big( T_j (g_j^- & - \varepsilon_j - g^{\pi_b} ) - sp(h^-_j) \Big)
           - sp(h^{\pi_b})\\
                                                     &\qquad{}+ T_k (g_k^- - \varepsilon_k - (1-\alpha) g^{\pi_b}) \geq 0 \nonumber
\end{align}
Note that the algorithm should check this condition at the beginning of episode $k$ in order to understand if the policy $\wt\pi_k$ is safe or if it should resort to playing policy $\pi_b$.
In many OFU algorithms, including \ucrl, the length of episode $k$ (\ie $T_k$) is not known at the beginning of the episode.
As a consequence, condition~\eqref{eq:condition.with.future} is not directly computable.
To overcome this limitation, we consider the dynamic episode condition introduced by~\citep{Ouyang2017learning}.
This stopping condition provides an upper-bound on the length of each episode as $T_k \leq T_{k-1} +1$, without affecting the regret bound of \ucrl (up to constants).
This condition can be used to further lower-bound the last term in \eqref{eq:condition.with.future} by
\begin{align}\label{eq:condition.algorithm.step}
&T_k (g_k^- - \varepsilon_k - (1-\alpha) g^{\pi_b}) \\
&\geq(T_{k-1} + 1)(g_k^- - \varepsilon_k - (1-\alpha) g^{\pi_b}) \cdot \one{(1-\alpha) g^{\pi_b} \geq g^-_k -\varepsilon_k}.\nonumber
\end{align}
Plugging this lower bound into~\eqref{eq:condition.with.future} gives the final conservative condition
\begin{align}\label{eq:condition.algorithm}
        &\sum_{j=1}^{k-1} \Big( T_j (g_j^-  - \varepsilon_j - g^{\pi_b} ) - sp(h^-_j) \Big)
- sp(h^{\pi_b}) +\nonumber\\
&(T_{k-1} + 1)(g_k^- - \varepsilon_k - (1-\alpha) g^{\pi_b}) \cdot \one{(1-\alpha) g^{\pi_b} \geq g^-_k -\varepsilon_k} \nonumber \\
&\geq 0,
\end{align}
tested by \cucrl at the beginning of each episode.
\textit{Unknown $(g^{\pi_b}, h^{\pi_b})$.} If the gain and bias of the baseline are unknown, we can use \evi on $\mathcal{L}^{+,\pi_b}_l$
(Eq.~\ref{eq:robust.bellman} with $\max$ instead of $\min$) to compute an optimistic estimate of the cumulative reward of the baseline up to time $t_l + T_{l-1} + 1$. While this account for the RHS of Eq.~\ref{eq:cumrew_condition}, we simply define $(g_l^-, h^-_l) = \evi(\mathcal{L}_l^{\pi_b}, \epsilon_l)$ for every episode $l \in \Lambda_{k-1}^c$ to compute a lower bound to the cumulative reward obtained by the algorithm by playing the baseline in episode before $k$.
Clearly, this approach is very pessimistic and it may be possible to design better strategies for this case.

\paragraph{The finite-horizon case.}
We conclude this section with a remark on the finite horizon case.
This case is much simpler and resemble the bandit setting.
We can directly build a lower bound $\wu{v}_{l,1}$ to the value function $V_1^{\pi_l}$ by using the model estimate and its uncertainty at episode $l$. This estimate can be computed via extended backward induction --see~\citep[][Alg. 2]{azar2017minimax}-- simply subtracting the exploration bonus, see Lem.~\ref{lem:value_function_pessimism_fh} in App.~\ref{app:finite.horizon}.
The same approach can be used to construct an optimistic and pessimistic estimate of $V^{\pi_b}_1$ when it is unknown.
This values can be directly plugged in~\eqref{eq:finite.horizon.condition} to define a checkable condition for the algorithm.

\subsection{Regret Guarantees}
\label{sec:regret}

We start providing an upper-bound to the regret of \cucrl showing the dependence on \ucrl and on the baseline $\pi_b$.
Since the set $\Lambda_{k}$ is updated at the end of the episode, we denote by $\Lambda_{T} = \Lambda_{k_T} \cup \{k_T\} \cdot \one{Eq.~\eqref{eq:condition.algorithm}}$ the set containing all the episodes where \cucrl played an optimistic policy. The set $\Lambda_{T}^c$ is its complement.
\begin{lemma}\label{lem:cucrl.first.ub}
        Under Asm.~\ref{asm:ergodic} and~\ref{asm:baseline}, for any $T$ and any conservative level $\alpha$, there exists a numerical constant $\beta > 0$ such that the regret of \cucrl is upper-bounded as
        \begin{align*}
                &R(\cucrl, T) \leq \beta \cdot \bigg( R_{\ucrl}(T|\Lambda_{T})  \\
                &\quad{} + \left(g^{\star} - g^{\pi_{b}}\right) \sum_{l \in \Lambda_T^c} T_l
                        + sp\big(h^{\pi_b}\big) \sqrt{SAT \ln(T/\delta)}
        \bigg),
        \end{align*}
        and the conservative condition~\eqref{eq:cumrew_condition} is met at every step $t=1,\ldots,T$ with probability at least $1-\frac{2\delta}{5}$.\footnote{The probability refers to both events: the regret bound and the conservative condition.}
\end{lemma}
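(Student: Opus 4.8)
The plan is to prove the two claims on a common high-probability event. Define $\mathcal{E}$ to be the intersection, over all episodes $k\le k_T$, of the events $\{M^\star\in\mathcal{M}_k\}$ together with the Azuma--Hoeffding events controlling the reward noise $\sum_t\big(r_t(s_t,a_t)-r(s_t,a_t)\big)$ and the bias martingales $\sum_t\big(p(\cdot|s_t,a_t)^\transp h-h(s_{t+1})\big)$ for the relevant $h$'s (namely $\wt h_k$ on optimistic episodes and $h^{\pi_b}$ on baseline episodes). With the confidence radii used by \cucrl and a union bound over state--action pairs and episodes, $\mathbb{P}(\mathcal{E})\ge 1-\tfrac{2\delta}{5}$, and everything below is deterministic on $\mathcal{E}$.

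\textbf{Regret bound.} On $\mathcal{E}$ I would split $R(\cucrl,T)=\sum_{k\in\Lambda_T}\sum_{t\in\text{ep }k}\big(g^\star-r_t\big)+\sum_{k\in\Lambda_T^c}\sum_{t\in\text{ep }k}\big(g^\star-r_t\big)$. For the optimistic episodes I reuse the standard \ucrl chain: optimism gives $g^\star\le g^{\wt\pi_k}(\wt M_k)$ up to the $\rmaxbound/\sqrt{t_k}$ planning error, substituting the optimistic Bellman equation for $(\wt g_k,\wt h_k)$ turns each per-step gap into reward/transition confidence widths plus the telescoping term $p(\cdot|s_t,a_t)^\transp\wt h_k-\wt h_k(s_{t+1})$, and summing over $k\in\Lambda_T$ and $t$ reproduces verbatim the \ucrl regret restricted to $\Lambda_T$, i.e. $R_{\ucrl}(T|\Lambda_T)$. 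This is legitimate because off-policy samples gathered during baseline episodes only shrink the confidence sets and the dynamic episode rule of~\citep{Ouyang2017learning} inflates the number of episodes by at most a constant. For the baseline episodes I use the Poisson equation $g^{\pi_b}+h^{\pi_b}(s)=r(s,\pi_b(s))+p(\cdot|s,\pi_b(s))^\transp h^{\pi_b}$ to rewrite $\sum_{t\in\text{ep }k}\big(g^{\pi_b}-r(s_t,a_t)\big)$ as a telescoping of $h^{\pi_b}$ (at most $sp(h^{\pi_b})$ per episode) plus a bounded martingale, then add back $\big(g^\star-g^{\pi_b}\big)\sum_{l\in\Lambda_T^c}T_l$ and the reward-noise martingale; the resulting $sp(h^{\pi_b})\,|\Lambda_T^c|$ and $\wt O\!\big((sp(h^{\pi_b})+\rmaxbound)\sqrt{T\ln(1/\delta)}\big)$ collapse into $sp(h^{\pi_b})\sqrt{SAT\ln(T/\delta)}$ after bounding the total number of episodes by $O(\sqrt{SAT\ln T})$ (episodes closed by count doubling are $O(SA\ln T)$; within each such block the episodes closed by the dynamic cap have lengths growing by one, so Cauchy--Schwarz over the $O(SA\ln T)$ blocks gives $O(\sqrt{SAT\ln T})$).

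\textbf{Conservative guarantee.} On $\mathcal{E}$ I would show~\eqref{eq:cumrew_condition} holds at every $t$ by induction over episodes. While only $\pi_b$ has been played the condition is trivial (the LHS equals $\mathbb{E}[\sum r_i^b]$ and $1\ge 1-\alpha$), which settles episode $1$, where~\eqref{eq:condition.algorithm} is always false. For an episode $k$ in which the test passes, the chain ``lower-bound the LHS of~\eqref{eq:cumrew_condition} by~\eqref{eq:cutting_episodes} (valid on $\mathcal{E}$ through Lem.~\ref{eq:lowerbound.evi}, applied with horizon $t-t_k+1$ for the ongoing episode and $T_j$ for past ones), upper-bound the RHS by~\eqref{eq:upper.baseline}'' reduces~\eqref{eq:cumrew_condition} to~\eqref{eq:condition.with.future} with $T_k$ replaced by the partial length $t-t_k+1\in\{1,\dots,T_k\}$; since $T_k$ is unknown at $t_k$, the bound $T_k\le T_{k-1}+1$ together with the sign split~\eqref{eq:condition.algorithm.step} covers the worst partial length whichever sign $g^-_k-\varepsilon_k-(1-\alpha)g^{\pi_b}$ has, and this is exactly~\eqref{eq:condition.algorithm}. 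For an episode $k$ in which the test fails and $\pi_b$ is executed, I would not re-run the test but argue directly on the exact bias identity~\eqref{eq:cumrew_bellman}: within a maximal run of consecutive baseline episodes the $h^{\pi_b}$ terms telescope, so the true margin at $t$ equals the margin at the start of the run plus $\alpha g^{\pi_b}\cdot(\#\text{baseline steps so far})$ minus a single $O(sp(h^{\pi_b}))$ correction --- the RL counterpart of the bandit ``budget'' accrued by playing the baseline, with the twist that the first $O\!\big(sp(h^{\pi_b})/(\alpha g^{\pi_b})\big)$ steps of the run are spent recovering from restarting $\pi_b$ in the off-trajectory state $s_{t_k}$.

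The main obstacle I anticipate is this last step: reconciling the once-per-episode check~\eqref{eq:condition.algorithm} --- which lower-bounds each episode's contribution by its gain minus a span, hence is loose by one $sp(h^{\pi_b})$ per baseline episode --- with the requirement that~\eqref{eq:cumrew_condition} hold at \emph{every} intermediate $t$, including inside long baseline runs. Getting the bookkeeping of the $\alpha$-slack and of the $sp(h^{\pi_b})$ and $sp(h^-_j)$ corrections to close, so that the invariant ``the algorithm's surrogate margin is nonnegative'' actually implies the exact condition at all $t$, is where the care goes; the regret decomposition, the episode counting, and the union bound are routine adaptations of the \ucrl analysis.
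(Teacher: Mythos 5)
Your regret decomposition is exactly the paper's (App.~B, Lem.~\ref{lem:regret_decomposition}): an Azuma step to pass from realized to expected rewards, a split of episodes into $\Lambda_T$ and $\Lambda_T^c$, the Poisson equation for $\pi_b$ on the baseline episodes yielding a telescoping term of size $sp(h^{\pi_b})$ per episode plus a bounded MDS, and the episode count $k_T \le \sqrt{2SAT\ln T}$ (the paper imports this from \citet{Ouyang2017learning} rather than re-deriving it as you do); your observation that off-policy samples only shrink the confidence sets is also the paper's justification for $R_{\ucrl}(T|\Lambda_T)$ behaving as usual. So the first half of your proof is essentially identical to the paper's.

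Where you diverge is the conservative guarantee. The paper's proof of this lemma does not argue it at all: the claim is delegated entirely to the derivation of the checkable condition in Sec.~\ref{sec:conservative}, i.e., to the chain \eqref{eq:upper.baseline}--\eqref{eq:cutting_episodes}--\eqref{eq:condition.with.future}--\eqref{eq:condition.algorithm.step}, which shows that \eqref{eq:condition.algorithm} is sufficient for \eqref{eq:cumrew_condition} on the event $M^\star\in\mathcal{M}_k$. Your explicit induction covers the episodes where the test passes in the same way (and your treatment of intermediate times $t'=t-t_k+1$ via the sign split in \eqref{eq:condition.algorithm.step} is correct: the negative case is dominated by $T_{k-1}+1$ and the positive case by $0$). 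The genuinely delicate case you flag --- episodes where the test \emph{fails} and $\pi_b$ is played --- is one the paper handles only implicitly: in its bookkeeping a baseline episode $j$ enters the condition with $(g_j^-,h_j^-)=(g^{\pi_b},h^{\pi_b})$ and $\varepsilon_j=0$, so it contributes $\alpha T_j g^{\pi_b}-sp(h^{\pi_b})$ to the margin, which can be negative for short episodes; the per-episode decomposition \eqref{eq:cutting_episodes} is therefore lossy across runs of consecutive baseline episodes. Your proposed fix --- telescoping $h^{\pi_b}$ across a maximal baseline run via the exact identity \eqref{eq:cumrew_bellman} so that only one $sp(h^{\pi_b})$ is paid per run --- is sound and actually tighter than what the paper writes, but you correctly note (and should be aware) that neither your sketch nor the paper fully closes the invariant showing that the surrogate margin tested once per episode implies \eqref{eq:cumrew_condition} at every intermediate $t$ of a baseline episode. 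This is a gap in the paper's exposition rather than an error in your approach; if you complete the run-telescoping argument, your proof of the conservative half will be more rigorous than the published one.
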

$R_{\ucrl}(T|\Lambda_{T})$ denotes the regret of \ucrl over an horizon $T$ conditioned on the fact that the \ucrl policy is executed only at episodes $i \in \Lambda_T$. During the other episodes, the internal statistics of \ucrl are updated using the samples collected by the baseline policy $\pi_b$.
This does not pose any major technical challenge and, as shown in App.~\ref{app:proof_cucrl}, the \ucrl regret can be bounded as follows.
\begin{lemma}[\citep{Jaksch10}]\label{lem:ucrl.regret}
Let $L_T = \ln\left(\frac{5T}{\delta}\right)$, for any $T$, there exists a numerical constant $\beta> 0$ such that, with probability at least $1-\frac{2\delta}{5}$,
\begin{align*}
        R_{\ucrl}(T|\Lambda_{T}) \leq \beta DS\sqrt{AT L_T} +  \beta DS^{2}A L_T
\end{align*}
\end{lemma}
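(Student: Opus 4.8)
The plan is to replay the \ucrl regret analysis of~\citep{Jaksch10}, checking that the two features specific to our setting do not break it: (i) the internal statistics of \ucrl are updated also during the episodes $k\in\Lambda_T^c$ in which the baseline is executed, and (ii) the quantity $R_{\ucrl}(T|\Lambda_T)$ only accounts for the regret accumulated during the episodes $k\in\Lambda_T$. For (i), the only place the identity of the data-generating policy could matter is in the validity of the confidence region $\mathcal{M}_k$; but the Hoeffding bound on rewards and the $\ell_1$ (Weissman-type) bound on transitions, each combined with a union bound over the pairs $(s,a)$ and over the possible values of the counter $N_k(s,a)$, hold for \emph{any} sequence of policies chosen adaptively from the past (it is a martingale, not an i.i.d., argument). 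Hence the good event $\mathcal{E}=\{M^\star\in\mathcal{M}_k\text{ for all }k\}$ still holds with probability at least $1-\delta/5$, and the extra baseline samples only increase the counters $N_k(s,a)$, hence shrink the confidence widths. For (ii), restricting attention to $\Lambda_T$ is harmless because, on $\mathcal{E}$, the optimistic gain of the policy $\wt\pi_k$ run at episode $k\in\Lambda_T$ satisfies $\wt g_k\ge g^\star-\rmaxbound/\sqrt{t_k}$, so each episode in $\Lambda_T$ contributes essentially nonnegatively and the sums below are only made smaller by dropping the episodes in $\Lambda_T^c$.

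Conditioning on $\mathcal{E}$, I would bound the per-episode regret of each $k\in\Lambda_T$. Writing it as $\sum_{t=t_k}^{t_k+T_k-1}(g^\star-r_t(s_t,a_t))$ and inserting $\wt g_k$ (at a cost of $T_k\,\rmaxbound/\sqrt{t_k}$), the standard decomposition through the optimistic Poisson equation $\wt g_k e+\wt h_k=\mathcal{L}^+_k\wt h_k$ produces: a transition term $\sum_{s,a}\nu_k(s,a)\,\|\wh p_k(\cdot|s,a)-\wt p_k(\cdot|s,a)\|_1\,sp(\wt h_k)$, a reward term $\sum_{s,a}\nu_k(s,a)\cdot(\text{width of }B^k_r(s,a))$, and a martingale-plus-telescoping term $\sum_{t}\big(p(\cdot|s_t,a_t)^\transp\wt h_k-\wt h_k(s_{t+1})\big)+\big(\wt h_k(s_{t_k})-\wt h_k(s_{t_k+T_k})\big)$. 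Here one uses $sp(\wt h_k)\le D$, which holds on $\mathcal{E}$ because the extended MDP $\mathcal{M}_k$ contains $M^\star$ and thus has diameter at most $D$, and the confidence widths are $O\big(\sqrt{S\ln(SAT/\delta)/N_k(s,a)}\big)$ for transitions and $O\big(\sqrt{\ln(SAT/\delta)/N_k(s,a)}\big)$ for rewards (with the usual $\max(1,N_k(s,a))$ convention).

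Summing over $k\in\Lambda_T$, the pigeonhole inequality $\sum_{k}\sum_{s,a}\nu_k(s,a)/\sqrt{\max(1,N_k(s,a))}\le(\sqrt 2+1)\sum_{s,a}\sqrt{N_T(s,a)}\le(\sqrt 2+1)\sqrt{SAT}$ is untouched by the restriction to $\Lambda_T$ (fewer nonnegative summands) and by the extra baseline samples (monotone counters, $\sum_{s,a}N_T(s,a)=T$), giving the leading $DS\sqrt{AT}$ term up to logs; the reward term contributes a smaller $\sqrt{SAT}$ term. The martingale pieces (including the $r_t$ versus $r(s_t,a_t)$ fluctuation) are bounded by $O(D\sqrt{T\ln(1/\delta)})$ via Azuma--Hoeffding, the telescoping of the biases across episodes by $O(D\,K_T)$, and the \evi errors by $\sum_k T_k\,\rmaxbound/\sqrt{t_k}=O(\rmaxbound\sqrt{T})$. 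Finally $K_T$ is bounded by noting that an episode ends either because some $\nu_k(s,a)$ reaches $N_k^+(s,a)$ — which can happen at most $O(SA\log_2 T)$ times in total — or because the dynamic horizon of~\citep{Ouyang2017learning} is hit, which with $T_k\le T_{k-1}+1$ gives $K_T=O(\sqrt T+SA\log T)$; every lower-order term is then absorbed into $\beta DS\sqrt{ATL_T}+\beta DS^2AL_T$. Accounting for the failure of $\mathcal{E}$ and of the Azuma--Hoeffding events yields the stated probability $1-\tfrac{2\delta}{5}$.

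The only real obstacle is the bookkeeping of points (i)--(ii): convincing oneself that the original \ucrl argument nowhere uses that \ucrl's own policy generated the data. Once the adaptive validity of $\mathcal{M}_k$, the monotonicity of the counters, and the (near-)nonnegativity of the per-episode regret of the dropped episodes are in place, the remainder is the verbatim computation of~\citep{Jaksch10} (with the $\rmaxbound$-dependence and the dynamic-episode modification carried through), so I do not expect any further difficulty.
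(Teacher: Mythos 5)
Your proposal is correct and follows essentially the same route as the paper: the paper's own proof likewise reduces to the standard \ucrl analysis of \citep{Jaksch10}, observing only that (a) the extra baseline samples keep the confidence sets valid and can only shrink them, (b) the pigeonhole sum $\sum_{k\in\Lambda_T}\sum_t 1/\sqrt{N_k^+(s_t,a_t)}$ is still bounded by $O(\sqrt{SAT})$ since the counters include all collected samples, and (c) the term $\sum_t \one{k_t\in\Lambda_T}\big(p(\cdot|s_t,\pi_{k_t}(s_t))^\transp u_{k_t}-u_{k_t}(s_{t+1})\big)$ remains a martingale difference sequence bounded by $2D$, so Azuma--Hoeffding applies unchanged. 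Your write-up simply spells out the full decomposition that the paper defers to the original reference.
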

The second term in Lem.~\ref{lem:cucrl.first.ub} represents the regret incurred by the algorithm when playing the baseline policy $\pi_b$.
The following lemma shows that the total time spent executing conservative actions is sublinear in time (see Lem.~\ref{lem:nb_non_conservative_episodes} in App.~\ref{app:proof_cucrl} for details).
\begin{lemma}\label{lem:baseline.regret}
        For any $T>0$ and any conservative level $\alpha$, with probability at least $1 -\frac{2\delta}{5}$, the total number of play of conservative actions is bounded by:
\begin{align*}
   &\sum_{l\in\Lambda_{T}^c} T_{l} \leq 2\sqrt{SAT\ln(T)} +
   \frac{112SAL_{T}}{(\alpha g^{\pi_{b}})^{2}}(1 + S(D+\worstD)^{2})\\
   &+ \frac{16\sqrt{TL_{T}}}{\alpha g^{\pi_{b}}}\Big[ (D + \worstD)\sqrt{SA}  + \rmaxbound +  \sqrt{SA}sp(h^{\pi_{b}}) \Big]
\end{align*}
where $L_T = \ln\left( \frac{5SAT}{\delta} \right)$ and $\worstD < \infty$ as in Eq.~\ref{eq:worstdiameter}
\end{lemma}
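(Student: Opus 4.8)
The plan is to adapt the conservative--budget argument from the bandit literature~\citep{WuSLS16,KazerouniGAR17} to the episodic, model-based structure of \cucrl. Throughout we work on the high-probability event $\mathcal{E}$ on which $M^\star\in\mathcal{M}_k$ for every episode $k$ and all the Hoeffding/Bernstein inequalities used to build the sets $\mathcal{M}_k$ hold simultaneously; a union bound gives $\mathbb{P}(\mathcal{E})\geq 1-\tfrac{2\delta}{5}$, which is the failure probability in the statement.

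The first step is to fix an \emph{arbitrary} episode $k$ in which \cucrl plays the baseline, i.e.\ the checkable condition~\eqref{eq:condition.algorithm} is violated at the start of episode $k$, and to rewrite the negated inequality. For the past baseline episodes $j$ we substitute $(g^-_j,h^-_j)=(g^{\pi_b},h^{\pi_b})$ and $\varepsilon_j=0$. For the past optimistic episodes we use two facts: (i) optimism, $g^{\wt{\pi}_j}(\wt{M}_j)=\max_{M\in\mathcal M_j,\,\pi}g^\pi(M)\geq g^\star$ because $M^\star\in\mathcal M_j$; and (ii) the robust policy-evaluation bound of Lem.~\ref{eq:lowerbound.evi} applied to $\mathcal{L}_j^{\wt{\pi}_j}$, which together give $g^-_j-\varepsilon_j\geq g^\star-b_j$, where $b_j$ is the oscillation of $g^{\wt{\pi}_j}(\cdot)$ over $\mathcal M_j$ --- controlled by $sp(h^-_j)$ times the $\ell_1$-radius of the transition confidence set plus the reward radius, hence by the visit counts $N_j(\cdot)$. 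Since $\pi^\star$ is optimal, $g^\star\geq g^{\pi_b}$, so $g^\star-(1-\alpha)g^{\pi_b}\geq\alpha g^{\pi_b}$; rearranging the violated condition so that the baseline-gain terms line up with the $(1-\alpha)$ slack of~\eqref{eq:cumrew_condition}, and using the dynamic-episode inequality $T_k\leq T_{k-1}+1$ to control the single ``future'' term of~\eqref{eq:condition.algorithm}, every episode $j$ then contributes at least $\alpha T_j g^{\pi_b}$ to the left-hand side, minus the estimation error $T_j b_j$, the \evi error $T_j\varepsilon_j$ and the transient correction $sp(h^-_j)$ coming from~\eqref{eq:cutting_episodes}. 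One thus obtains an inequality of the form
\begin{equation*}
\alpha g^{\pi_b}\,t_k \;\leq\; \sum_{j\in\Lambda_{k-1}}T_j b_j \;+\; \sum_{j\leq k}T_j\varepsilon_j \;+\; \sum_{j\leq k} sp(h^-_j) \;+\; O\!\Big(\rmaxbound + sp(h^{\pi_b}) + (T_{k-1}+1)\varepsilon_k\Big).
\end{equation*}

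Next one bounds the three sums, using $t_k\leq T$. The first, $\sum_j T_j b_j$, is a \ucrl-style regret sum: a pigeonhole argument over the doubling visit counts (as behind Lem.~\ref{lem:ucrl.regret}), but with the bias span of the \emph{pessimistic} MDP bounded by $(D+\worstD)\rmaxbound$ rather than $D\rmaxbound$ --- this is where $\worstD$ enters, since $\wt{\pi}_j$ need not be near-optimal in $\mathcal M_j$ --- which yields $O\big((D+\worstD)S\sqrt{AT L_T}+(D+\worstD)^2S^2AL_T\big)$. The second, with $\varepsilon_j\propto\rmaxbound/\sqrt{t_j}$ and $t_{j+1}\leq 2t_j$, telescopes to $O(\rmaxbound\sqrt{T L_T})$. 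The third splits: $sp(h^-_j)\leq(D+\worstD)\rmaxbound$ on optimistic episodes and $sp(h^-_j)=sp(h^{\pi_b})$ on baseline episodes, so $\sum_{j\leq k} sp(h^-_j)\leq K_T(D+\worstD)\rmaxbound + |\Lambda_{k-1}^c|\,sp(h^{\pi_b})$, where both the total number of episodes $K_T$ and $|\Lambda_{k-1}^c|$ are $O(\sqrt{SAT L_T})$ by a crude a-priori count (not a bound on $\sum_l T_l$, so the mildly self-referential term is harmless). Substituting and collecting terms gives $\alpha g^{\pi_b} t_k\leq c\,\sqrt{T L_T}\big[(D+\worstD)\sqrt{SA}+\rmaxbound+\sqrt{SA}\,sp(h^{\pi_b})\big]+c\,SAL_T\big(1+S(D+\worstD)^2\big)$, i.e.\ the claimed bound on $t_k$ after dividing by $\alpha g^{\pi_b}$. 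Finally, all baseline episodes precede and include the last one $k^\star$, so $\sum_{l\in\Lambda_T^c}T_l\leq t_{k^\star}+T_{k^\star}-1$; bounding $t_{k^\star}$ by the previous display and $T_{k^\star}\leq K_T=O(\sqrt{SAT\ln T})$ produces the extra $2\sqrt{SAT\ln T}$ term.

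The main obstacle is the first sum: one must simultaneously pass from the \evi outputs $g^-_j$ to $g^\star$ through optimism and the robust Bellman operator~\eqref{eq:robust.bellman} with the correct confidence width $b_j$, and carry out the pigeonhole summation even though the counts $N_j$ are refreshed during \emph{both} optimistic and baseline episodes and each policy is executed for a whole episode of a priori unknown length. It is precisely this variable-length feature that forces the $T_k\leq T_{k-1}+1$ device and turns what is an $O(\log T)$ bound in the bandit case into the $\sqrt{T}$ terms above; a secondary point is keeping the $\worstD$-dependence (rather than only $D$) in both $sp(h^-_j)$ and $b_j$ under control.
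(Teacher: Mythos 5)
Your proposal follows the same architecture as the paper's proof (Lem.~\ref{lem:nb_non_conservative_episodes} in App.~\ref{app:proof_cucrl}): take the last conservatively played episode, negate the checkable condition~\eqref{eq:condition.algorithm}, use $g^{\pi_b}\le g^\star\le \wt g_l$ together with the pessimistic \evi guarantee to reduce everything to $\sum_l T_l(\wt g_l-\underline g^{\pi_l})$ plus span and $\varepsilon_l$ terms, bound that sum \ucrl-style with spans controlled by $D+\worstD$, and close with $k_T\le\sqrt{2SAT\ln T}$ and $T_\tau\le k_T$. Two points deserve attention, though. First, your accounting of the lower-order term is internally inconsistent: the paper keeps the pigeonhole in terms of $X=\sum_{l<\tau}T_l$, obtains the quadratic inequality $\alpha g^{\pi_b}X\le b_T+c\sqrt{X}$ with $c=O(\sqrt{SAL_T}(1+S(D+\worstD)))$, and it is \emph{solving this quadratic} that produces the $\tfrac{SAL_T}{(\alpha g^{\pi_b})^{2}}(1+S(D+\worstD)^2)$ term with the squared denominator. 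You instead bound $\sqrt{X}\le\sqrt{T}$ up front and divide once by $\alpha g^{\pi_b}$; that route yields only a $1/(\alpha g^{\pi_b})$ factor on the additive term, so your displayed inequality does not give ``the claimed bound after dividing by $\alpha g^{\pi_b}$'' as written (what you get is of the same or better order, but it is not the stated inequality, and your $(D+\worstD)^2S^2AL_T$ additive term has no derivation in your sketch).

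Second, you define $b_j$ as the oscillation of the gain over $\mathcal M_j$ and then apply a visit-count pigeonhole to $\sum_j T_jb_j$. The step that makes the pigeonhole applicable is missing: one must expand $T_j(\wt g_j-\underline g^{\pi_j})$ along the \emph{observed trajectory} using the optimistic and pessimistic Bellman equations ($\wt g_l+\wt h_l=\mathcal L^+_l\wt h_l$ and $\underline g^{\pi_l}+\underline h^{\pi_l}=\mathcal L^{\pi_l}_l\underline h^{\pi_l}$), which turns the gain gap into per-step confidence widths $\beta^l_r(s_t,a_t)+\beta^l_p(s_t,a_t)\big(sp(\wt h_l)+sp(\underline h^{\pi_l})\big)$ \emph{plus} a martingale-difference term $p^\star(\cdot|s_t,a_t)^\transp(\wt h_l-\underline h^{\pi_l})-(\wt h_l(s_{t+1})-\underline h^{\pi_l}(s_{t+1}))$ that must be controlled separately by Azuma--Hoeffding, contributing $(D+\worstD)\sqrt{TL_T}$. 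Your event $\mathcal E$ only covers the model confidence sets, not this martingale, and a state-uniform perturbation bound on the gain (which is what ``oscillation over $\mathcal M_j$'' suggests) would not admit a pigeonhole over visit counts. Both gaps are repairable with the standard \ucrl machinery you allude to, but as written the proof does not establish the lemma with its stated form.
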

\begin{proof}
        Let $\tau$ be the last episode played conservatively: $\tau = \sup\{k>0 : k \in \Lambda_{k}^c\}$.
        This means that at the beginning of episode $\tau$ the conservative condition was not verified. By rearranging the terms in Eq.~\ref{eq:condition.algorithm} and using simple bounds, we can write that:
        \[
                 \Delta_t + 4 k_T \left( sp(g^{\pi_b}) + \worstD + (1-\alpha)\rmaxbound \right) \geq \alpha \sum_{l=1}^{\tau-1} T_l g^{\pi_b}
        \]
        where $\Delta_\tau := \sum_{l \in \Lambda_{t-1}} T_l (\wt{g}_l - \underline{g}_l)$ and $(\wt{g}_l, \underline{g}_l)$ are optimistic and pessimistic gain of policy $\pi_l = \wt\pi_l$.
        Note that both satisfies the Bellman equation: $\wt g_l +\wt h_l = \mathcal{L}^+_l \wt h_l$ (see footnote \ref{foot:lopplus}) and $\wu g_l +\wu h_l = \mathcal{L}^{\pi_l}_l \wu h_l$ (see Eq.~\ref{eq:robust.bellman}).
        At this point, the important terms in upper-bounding $\Delta_\tau$ are similar to the one analysed in \ucrl.
        In particular, we have a term depending on the confidence intervals $\Delta_{ci}^{l,t} := 2\beta_{r}^{l}(s_{t},a_{t})  + \beta_{p}^{l}(s_{t},a_{t})\big( sp\big(\wt{h}_{l}\big) + sp\big(\underline{h}^{\pi_{l}}\big)\big)$ and one depending on the transitions $\Delta_{p}^{l,t} :=  p^{\star}(\cdot|s_{t},a_{t})^{\transp}\left( \wt{h}_{l} + \underline{h}^{\pi_{l}}\right) -  (\wt{h}_{l}(s_{t+1}) - \underline{h}^{\pi_{l}}(s_{t+1}))$.
        Let $X = \sum_{l=1}^{\tau-1} T_l$.
        By using the definition of the confidence intervals, it is easy to show that $\sum_{l \in \Lambda_{\tau-1}} \sum_{t=t_l}^{t_{l+1}-1} \Delta_{ci}^{l,t} \lesssim \sqrt{SAT X} + (D+\worstD)\sqrt{S^2AX}.$
Define the $\sigma$-algebra based on past history at $t$: $\mathcal{F}_t = \sigma(s_1,a_1,r_1,\ldots, s_t, a_t, r_t, s_{t+1})$.
        The sequence $(\Delta_{p}^{l,t}, \mathcal{F}_t)_{l,t}$ is an MDS. Thus, using Azuma inequality we have that $\sum_{l \in \Lambda_{\tau-1}} \sum_{t=t_l}^{t_{l+1}-1} \Delta_{ci}^{l,t} \lesssim (D+\Upsilon) \sqrt{T}$.
        Putting everything together we have a quadratic form in $X$ and solving it we can write that $\alpha g^{\pi_{b}}X \lesssim b_{T} + S^{2}AL_{T}^{\delta}(D+\worstD)^{2}/(\alpha g^{\pi_{b}})$ where $b_T = \wt{O}((D+\worstD)\sqrt{SAT})$ (see App.~\ref{app:proof_cucrl}).
        The result follows noticing that $\sum_{l \in \Lambda_{\tau}^c}T_{l} \leq \sum_{l=1}^{\tau-1} T_l + T_\tau$.
\end{proof}


Combining the results of
Lem.~\ref{lem:ucrl.regret} and Lem.~\ref{lem:baseline.regret} into Lem.~\ref{lem:cucrl.first.ub} leads to an overall regret of order $\wt O(\sqrt{T})$, which matches the regret of \ucrl. This shows that \cucrl is able to satisfy the conservative condition without compromising the learning performance. Nonetheless, the bound in Lem.~\ref{lem:baseline.regret} shows how conservative exploration is more challenging in RL compared to the bandit setting. While the dependency on the conservative level $\alpha$ is the same, the number of steps the baseline policy is executed can be as large as $\wt O(\sqrt{T})$ instead of constant as in \cucb~\citep{WuSLS16}. Furthermore, Lem.~\ref{lem:cucrl.first.ub} relies on an ergodicity assumption instead of the much milder communicating assumption needed by \ucrl to satisfy Lem.~\ref{lem:ucrl.regret}. Asm.~\ref{asm:ergodic} translates into the bound through the ``worst-case'' diameter $\worstD$, which in general is much larger than the diameter $D$. This dependency is due to the need of computing a lower bound to the reward accumulated by policies $\pi_j$ in the past (see~Lem.~\ref{eq:lowerbound.evi}). In fact, \ucrl only needs to compute \textit{upper bounds} on the gain and the value function returned by \evi by applying the optimistic Bellman operator $\mathcal{L}^+_k$ has span bounded by the diameter $D$. This is no longer the case for computing pessimistic estimates of the value of a policy. Whether Asm.~\ref{asm:ergodic} and the worst-case diameter $\worstD$ are the unavoidable price to pay for conservative exploration in infinite horizon RL remains as an open question.

\paragraph{The finite-horizon case.}
App.~\ref{app:finite.horizon} shows how to modify \ucbvi~\citep{azar2017minimax} to satisfy the conservative condition in Eq.~\ref{eq:finite.horizon.condition}.
In this setting, it is possible to show (see Prop.~\ref{prop:conservative_fh_regret} in appendix) that the number of conservative episodes is simply logarithmic in $T = KH$. Formally, $|\Lambda_{T}^c| = O(H^5S^2A\ln(T/\delta)/(\alpha r_b(\wu{\Delta}_b +\alpha r_b))$ where $0 < r_b \leq r(s,a)$, for all $(s,a)$, and $\wu{\Delta}_b = \min_s \{V_1^\star(s) - V^{\pi_b}_1(s)\}$ is the optimality gap. This problem dependent terms resemble the one in the bandit analysis.
The regret of conservative \ucbvi is bounded by 
$\wt{O}(H\sqrt{SAT}+1/(\alpha r_{b}(\wu{\Delta}_{b} + \alpha r_{b})))$.

\vspace{-0.1in}
\section{Experiments}
\vspace{-0.1in}
\label{sec:experiments}

\begin{figure}[t]
        \vspace{-.1in}
        \centering
        \includegraphics[width=.24\textwidth]{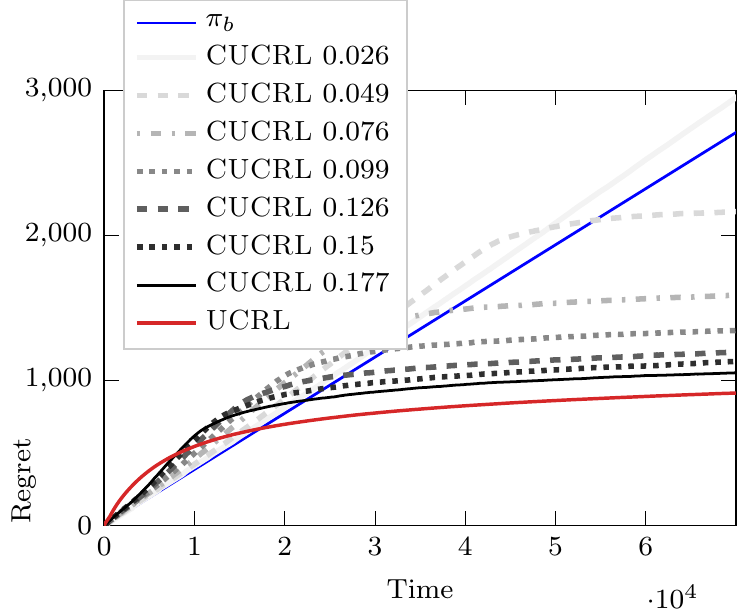}
        \includegraphics[width=.2\textwidth]{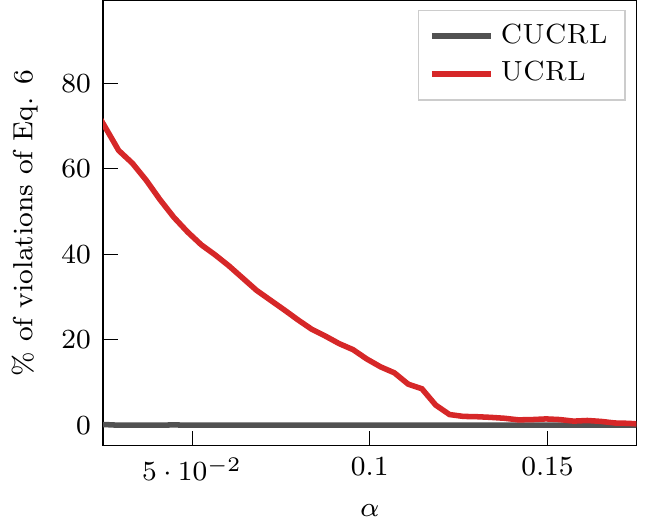}
        \vspace{-.1in}
        \caption{Inventory control problem.}
        \label{fig:sic_m6}
        \vspace{-.1in}
\end{figure}

In this section, we report results in the inventory control problem to illustrate the performance of \cucrl compared to unconstrained \ucrl and how it varies with the conservative level. See App.~\ref{app:experiments} for additional experiments for both average reward and finite horizon.
In order to have a better estimate of the budget, we re-evaluate past policies at each episode.
We start considering the \emph{stochastic inventory control} problem~\citep[][Sec. 3.2.1]{puterman1994markov} with capacity $M=6$ and uniform demand.
At the beginning of a month $t$, the manager has to decide the number of items to order in order to satisfy the random demand, taking into account the cost of ordering and maintainance of the inventory (see App.~\ref{app:experiments}).
Since the optimal policy is a threshold policy, as baseline we consider a $(\sigma, \Sigma)$ policy~\cite[][Sec. 3.2.1]{puterman1994markov} with target stock $\Sigma =4$ and capacity threshold $\sigma=4$.
Note that $g^\star=0.603$ and $(g^{\pi_b}, sp(h^{\pi_b}))=(0.565,0.651)$.
We use this domain to perform an ablation study \wrt the conservative level $\alpha$. 
We have taken $T=70000$ and the results are averaged over $100$ realizations.

Fig.~\ref{fig:sic_m6}(\emph{left}) shows that the regret of \cucrl grows at the same speed as the one of the baseline policy $\pi_b$ at the beginning (the conservative phase), because during this phase \cucrl is constrained to follow $\pi_b$ to make sure that constraint~\eqref{eq:cumrew_condition} is satisfied. Clearly, the duration of this conservative phase is proportional to the conservative level $\alpha$. As soon as \cucrl has built margin, it starts interleaving exploratory (optimistic) policies with the baseline.
After this phase, \cucrl has learn enough about the system and has a sufficient margin to behave as \ucrl.
As expected, Fig.~\ref{fig:sic_m6}(\emph{left}) confirms that the convergence to the \ucrl behavior happens more quickly for larger values of $\alpha$, \ie when the conservative condition is relaxed and \cucrl can explore more freely.
On the other hand, \ucrl converges faster since it is agnostic to the safety constraint and may explore very poor policies in the initial phase.
To better understand this condition, Fig.~\ref{fig:sic_m6}(\emph{right}) shows the percentage of time the constraint was violated in the first $15000$ steps (about $20\%$ of the overall time $T$).
\cucrl always satisfies the constraint for all values of $\alpha$ while \ucrl fails a significant number of times, especially when the conditions is tight (small values of $\alpha$).

\vspace{-0.1in}
\section{Conclusion}
\vspace{-0.1in}
\label{sec:conclusion}
We presented algorithms for conservative exploration for both finite horizon and average reward problems with $O(\sqrt{T})$ regret.
We have shown that the non-episodic nature of average reward problems makes the definition of the conservative condition much harder than in finite horizon problems.
In both cases, we used a model-based approach to perform counterfactual reasoning required by the conservative condition.
Recent papers have focused on model-free exploration in tabular settings or linear function approximation~\citep{qlearning2018,Yang2019matrix,jin2019lin}, thus a question is if it is possible for model-free algorithms to be conservative and still achieve  $\wt{O}(\sqrt{T})$ regret.


%

\bibliography{conservative,span}
\bibliographystyle{plainnat}

\appendix
\onecolumn
\section{Policy Evaluation with Uncertainties}
\label{sec:robust_pe}
\begin{figure}[t]
\renewcommand\figurename{\small Figure}
\begin{minipage}{\columnwidth}
\bookboxx{
        \textbf{Input:} Operator $\mathcal{L} : \mathbb{R}^S \to \mathbb{R}^S$ and accuracy $\epsilon > 0$  \\
        Set $v_0 = 0$, $v_1 = \mathcal{L} v_0$, $n=0$
        \begin{enumerate}[leftmargin=4mm,itemsep=0mm]
                \item \textbf{While} $sp(v_{n+1} - v_n) > \epsilon$ \textbf{do}
                \begin{enumerate}[leftmargin=4mm,itemsep=0mm]
                    \item $n = n+1$
                    \item $v_{n+1} = \mathcal{L} v_{n}$
                \end{enumerate}
        \item \textbf{Return:} $g_{n} = \frac{1}{2} \big(\max_s \{v_{n+1}(s) -v_{n}(s)\} + \min_{s} \{v_{n+1}(s) - v_{n}(s) \} \big)$ and $v_{n}$ 
        \end{enumerate}
}
\vspace{-0.2in}
\caption{\small \evi.}
\label{fig:evi}
\end{minipage}
\vspace{-0.1in}
\end{figure}
Consider a bounded parameter MDP $\mathcal{M}$ defined by a compact set $B_r(s,a) \subseteq [0, r_{\max}]$ and $B_p(s,a) \in \Delta_S$:
\begin{equation}\label{eq:bounded.parameter}
        \mathcal{M} = \{ M = (\mathcal{S}, \mathcal{A}, {r}, {p}), {r}(s,a) \in B_r(s,a), {p}(\cdot|s,a) \in B_p(s,a), \forall (s,a) \in \mathcal{S} \times \mathcal{A}\}
\end{equation}
In this paper, we consider confidence sets $B_r$ and $B_p$ that are polytopes.
We are interested in building a pessimistic (robust) estimate of the performance of a policy $\pi \in \SD$ in $\mathcal{M}$.
This robust optimization problem can be written as:
\begin{align}\label{eq:robust.problem}
    \underline{g}^\pi := \inf_{M \in \mathcal{M}} \{g^\pi(M)\}
\end{align}
where $g^\pi(M)$ is the gain of policy $\pi$ in the MDP $M$. Lemma~\ref{lem:robust} shows that there exists a solution to this problem that can be computed using \evi when the set $\mathcal{M}$ contains an ergodic MDP.

We recall that any bounded parameter MDP admits an equivalent representation as an extended MDP~\citep{Jaksch10} with identical state space $\mathcal{S}$ but compact action space.
For a deterministic policy  $\pi \in \SD$, the extended (pessimistic) Bellman operator $\mathcal{L}_\pi$ is defined as:
\begin{equation}\label{eq:extended.bellman}
        \forall v \in \mathbb{R}^S, \forall s \in \mathcal{S}, \quad \mathcal{L}_\pi v(s) := \min_{r \in B_r(s, \pi(s))} r + \min_{p \in B_p(s, \pi(s))} \{p^\transp v\}
\end{equation}

\begin{lemma}\label{lem:robust}
        Let $\mathcal{M}$ be a bounded-parameter MDP defined as in Eq.~\ref{eq:bounded.parameter} such that exists an \emph{ergodic} MDP $M \in \mathcal{M}$ w.h.p.
        Consider a policy $\pi \in \SD$, then:
        \begin{enumerate}
                \item There exists a tuple $(\wt{g}, \wt{h})\in \mathbb{R} \times \mathbb{R}^S$ such that:
                        \[
                                \forall s \in \mathcal{S}, \quad \wt{g} + \wt{h}(s) = \mathcal{L}_\pi \wt{h}(s)
                        \]
                        where $\mathcal{L}_\pi$ is the Bellman operator of the extended MDP $\mathcal{M}^+$ associated to $\mathcal{M}$ (see Eq.~\ref{eq:extended.bellman}).
                \item In addition, we have the following inequalities on the pair $(\tilde{g}, \tilde{h})$:
                    \begin{align*}
                    	 \tilde{g} \leq g^{\pi}(M) \qquad \text{and} \qquad sp(\tilde{h}) \leq \max_{\pi\in \Pi^{SD}(M)} \max_{s\neq s'} \mathbb{E}_{M}^{\pi}\left( \tau(s')|s\right) := \worstD < +\infty
                    \end{align*}
                    where $\mathbb{E}_{M}^{\pi}$ is the expectation of using policy $\pi$ in the MDP $M$ and $\tau(s')$ is the minimal number of steps to reach state $s'$.
        \end{enumerate}
\end{lemma}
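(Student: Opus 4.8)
The plan is to read~\eqref{eq:robust.problem} as an ordinary average-reward control problem on the extended MDP $\mathcal{M}^+$ with the base policy $\pi$ \emph{frozen}: the state space is $\calS$, at state $s$ an ``adversary action'' is a pair $(r,p)\in B_r(s,\pi(s))\times B_p(s,\pi(s))$, and the adversary \emph{minimizes} the long-run average reward; the operator $\mathcal{L}_\pi$ of~\eqref{eq:extended.bellman} is precisely the (minimizing) optimal Bellman operator of this model. Since $B_r$ and $B_p$ are polytopes, the adversary action set is compact and the reward and the transition kernel are continuous — in fact linear — in the adversary action, and stationary deterministic adversary policies are in bijection with the MDPs $M'\in\mathcal{M}$ (up to coordinates never visited under $\pi$), so that $\underline g^\pi=\inf_{M'\in\mathcal{M}}g^\pi(M')$ is exactly the optimal (minimal) gain of $\mathcal{M}^+$.

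For part~1 I would first exhibit a distinguished adversary policy $\rho_M$ that, at every $s$, selects the reward $r_M(s,\pi(s))$ and transition $p_M(\cdot\,|\,s,\pi(s))$ of the ergodic MDP $M\in\mathcal{M}$; this is feasible because these lie in $B_r(s,\pi(s))\times B_p(s,\pi(s))$, and the Markov chain it induces on $\calS$ is exactly the chain of $M$ under $\pi$, hence irreducible. Consequently $\mathcal{M}^+$ (with $\pi$ frozen) is \emph{communicating}, and I would invoke the standard existence theory for average-reward MDPs with compact action sets over a communicating model — applied to $-r$; this is the same machinery used for the optimistic operator $\mathcal{L}^+_k$ in~\citep{Jaksch10} and in~\citep[Sec.~8.3]{puterman1994markov} — to obtain a scalar $\wt g$ and a vector $\wt h$ with $\wt g\,e+\wt h=\mathcal{L}_\pi\wt h$, where $\wt g$ is the optimal gain, independent of the initial state and equal to $\underline g^\pi$. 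Convergence of \evi to an $\varepsilon$-accurate pair then follows exactly as in \ucrl, after the usual aperiodicity transform of $\mathcal{L}_\pi$ if needed to turn it into a span contraction.

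For part~2, $\wt g\le g^\pi(M)$ is immediate since $M$ is a feasible adversary, so $\wt g=\underline g^\pi\le g^\pi(M)$. For the span bound, pick $s^+\in\argmax_s\wt h(s)$ and $s^-\in\argmin_s\wt h(s)$ and, after shifting $\wt h$ by a constant (which leaves the optimality equation invariant), assume $\wt h(s^-)=0$ and $\wt h\ge 0$, so $sp(\wt h)=\wt h(s^+)$. Plugging the adversary action of $M$ into the optimality equation gives, for every $s$,
\[
\wt g+\wt h(s)=\mathcal{L}_\pi\wt h(s)\le r_M(s,\pi(s))+p_M(\cdot\,|\,s,\pi(s))^{\transp}\wt h ,
\]
and using $\wt g\ge 0$ and $r_M\le\rmaxbound$ we get $\wt h(s)\le\rmaxbound+\mathbb{E}_{s'\sim p_M(\cdot|s,\pi(s))}[\wt h(s')]$. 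This makes $\big(\wt h(s_{n\wedge\tau})+\rmaxbound(n\wedge\tau)\big)_n$ a submartingale along the chain $\rho_M$ started at $s^+$, where $\tau$ is the (a.s.\ finite, by ergodicity) hitting time of $s^-$; optional stopping and dominated convergence, together with $\wt h(s^-)=0$, yield $sp(\wt h)=\wt h(s^+)\le\rmaxbound\,\mathbb{E}^{\pi}_{M}[\tau(s^-)\mid s^+]\le\rmaxbound\,\worstD$, i.e.\ the claimed bound up to the standard $\rmaxbound$ normalization, with $\worstD<\infty$ because $M$ is ergodic.

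The main obstacle is part~1: one must verify carefully the hypotheses of the compact-action average-reward existence theorem (compactness of the polytopes, continuity of the reward and kernel in the adversary action, and the communicating property just established) and handle the possible periodicity of $\mathcal{L}_\pi$ so that \evi converges in span — this essentially mirrors, line for line, the analysis of the optimistic extended operator in~\citep{Jaksch10}, now carried out for a minimization rather than a maximization. The only other non-routine point is turning the one-step inequality of part~2 into a hitting-time bound via a proper stopping-time argument, which is exactly where finiteness of $\worstD$ (equivalently, ergodicity of the chain induced by $\rho_M$) enters.
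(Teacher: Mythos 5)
Your proof is correct and, for Point~1 and the gain inequality, follows essentially the same route as the paper: both arguments freeze $\pi$, view the confidence sets as a compact (polytopic) adversary action space, observe that the true ergodic $M\in\mathcal{M}$ induces a feasible stationary adversary whose chain is irreducible --- hence the extended model is communicating --- and then invoke the standard average-reward existence theory after negating the reward so that the minimization becomes the usual maximization (the paper makes this negation explicit via the auxiliary MDP $\mathcal{M}^-$, and treats the polytopes as a finite action set by restricting to vertices rather than appealing to the compact-action theory, a cosmetic difference). Where you genuinely diverge is the span bound. The paper cites Theorem~4 of \citet{Bartlett2009regal} to bound $sp(\wt{h})$ by the diameter of the extended MDP $\mathcal{M}^-$, and then bounds that diameter by exhibiting the feasible adversary policy that plays the true parameters of $M$, yielding $sp(\wt{h})\leq \mathbb{E}^{\pi}_{M}[\tau(s')|s]\leq\worstD$. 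You instead plug that same feasible action directly into the optimality equation to obtain the one-step inequality $\wt{h}(s)\leq \rmaxbound+p_M(\cdot|s,\pi(s))^{\transp}\wt{h}$ and convert it into a hitting-time bound by optional stopping. Both arguments rest on the same key fact (feasibility of the true model in the confidence sets) and both are valid; yours is more elementary and self-contained, avoiding the external diameter-to-span lemma at the cost of a slightly longer derivation, and it has the virtue of making explicit the factor $\rmaxbound$ in $sp(\wt{h})\leq\rmaxbound\,\worstD$, which the lemma statement and the paper's proof silently absorb by normalizing rewards to $[0,1]$.
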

\begin{proof}
        \textbf{Point 1.}
        We show that this policy evaluation problem is equivalent to a planning problem in an extended MDP $\mathcal{M}^-$ with negative reward.
        Consider the extended MDP $\mathcal{M}^- = (\mathcal{S}, \mathcal{A}^-, p^-, r^-)$ such that $\mathcal{A}^-_{s} = \{\pi(s)\} \times B_r(s,\pi(s)) \times B_p(s,\pi(s))$.
        For any state $s \in \mathcal{S}$ and action $a^- = (\pi(s), r(s,\pi(s)), p(\cdot|s,\pi(s))) \in \mathcal{A}_s$, 
        \begin{align*}
                r^-(s,a^-) &= {\color{blue}-r(s,\pi(s))}\\
                p^-(\cdot|s,a^-) &= p(\cdot|s,\pi(s))
        \end{align*}
        
        Denote by $\mathcal{L}^-$ the optimal Bellman operator of $\mathcal{M}^-$.
        Since $B_r(s, \pi(s))$ and $B_p(s, \pi(s))$ are polytopes, $\mathcal{L}^-$ can be interpreted as an optimal Bellman operator with finite number of actions.
        A sufficient condition for the existence of a solution of the optimality equations is that the MDP is weakly communicating~\citep[][Chap. 8-9]{puterman1994markov}.
        Note that $\mathcal{M}^-$ contains the model defined by $P^\pi$, \ie the Markov chain induced by $\pi$ in $M$.\footnote{We abuse of language since $\mathcal{M}^-$ is not formally a set. We should formally refer to the bounded parameter MDP associated to $\mathcal{M}^-$, \ie built considering $B_p(s, \pi(s))$ and $B_r(s, \pi(s))$. Note that $p(\cdot|s, \pi(s)) \in B_p(s, \pi(s))$ w.h.p.}
        Since $P^\pi$ is ergodic, $\mathcal{M}^-$ is at least communicating and thus $\mathcal{L}^-$ converges to a solution of the optimality equations.
        Extended value iteration~\citep{Jaksch10} on $\mathcal{L}^-$ converges toward a gain and bias $(g^-, h^-)$ such that:
        \begin{equation*}\label{eq:extended.pess.mdp}
                \begin{aligned}
                g^- + h^-(s) 
                &= L^- h^-(s) = \max_{a\in\mathcal{A}^-_s} \{ r^-(s,a) + p^-(\cdot|s,a)^\transp h^-\}\\
                &= \max_{r \in B_r(s,\pi(s))} \{-r\} + \max_{p \in B_p(s,\pi(s))} p^\transp h^-\\
                &= -\min \{B_r(s,\pi(s))\} +\max_{p \in B_p(s,\pi(s))} p^\transp h^-
                \end{aligned}
        \end{equation*}
        By rearranging, we have that:
        \begin{align*}
                -g^- + (-h^-)(s) &= \min \{B_r(s,\pi(s))\} + \min_{p \in B_p(s,\pi(s))} p^\transp (-h^-)\\
                                 &= \mathcal{L}_\pi (-h^-)(s)
        \end{align*}
        Thus follows that $\wt{g} = -g^-$ and $\wt{h} = -h^-$.
        This shows the relationship between maximizing over policies in the extended MDP $\mathcal{M}^-$ and minimizing over the set of models induced by $\pi$.
        
        \textbf{Point 2.} Let's begin by bounding the span of the bias $\tilde{h}$. Thanks to Theorem $4$ of \cite{Bartlett2009regal}, we have that the span of $\tilde{h}$ is upper-bounded by the diameter of the extended MDP $\mathcal{M}^{-}$, i.e:
        \begin{align*}
        sp\big(\tilde{h}\big)\leq  \max_{s\neq s'}\inf_{\pi^{-}\in \Pi^{SD}\left(\mathcal{M}^{-}\right)} \mathbb{E}_{\pi^{-}}\left( \tau(s')|s\right)
        \end{align*} 
        where $\mathbb{E}_{\pi^{-}}$ is the expectation of using policy $\pi^{-}$ in the extended MDP $\mathcal{M}^{-}$ and  $\tau(s')$ is the hitting time of state $s'$. But let's define the policy $\pi^{\star}$ in the extended MDP $\mathcal{M}^{-}$ such that for a state $s$, it chooses the action:
        \begin{align*}
        \pi^{\star}(s) = (\pi(s), r^{\star}(s,\pi(s)), p^{\star}(.|s,\pi(s)))
        \end{align*}
        with $r^{\star}$ and $p^{\star}$ the true parameter of the MDP $M$, this is possible because w.h.p the MDP $M^{\pi} \in \mathcal{M}^{-}$ with $M^{\pi}$ the Markov chain induced by using policy $\pi$ in the MDP $M$. Thus for any pair of states $(s,s')$:
        \begin{align*}
        \mathbb{E}_{\pi^{\star}}\left( \tau(s')|s\right) = \mathbb{E}_{M}^{\pi}(\tau(s')|s)
        \end{align*}
        with $\mathbb{E}_{M}^{\pi}$ the expectation of using policy $\pi$ in the MDP $M$. Therefore:
        \begin{align*}
         sp(\tilde{h})&\leq  \max_{s\neq s'}\inf_{\pi^{-}\in \Pi^{SD}\left(M^{-}\right)} \mathbb{E}_{\pi^{-}}\left( \tau(s')|s\right) \\
         &\leq \mathbb{E}_{M}^{\pi}(\tau(s')|s) \leq \worstD:= \max_{\pi\in \Pi^{SD}(M)} \max_{s\neq s'} \mathbb{E}_{M}^{\pi}\left( \tau(s')|s\right)
        \end{align*}
        And $\worstD < +\infty$ because $M$ is assumed to be ergodic.
        
        Let's show that the gain $\tilde{g}$ is a lower bound on the gain of the policy $\pi$ in the MDP $M$. Indeed, because the operator $\mathcal{L}^{-}$ converges toward solution of the optimality equations for negative rewards, we have that, see~\cite[][Th. 8.4.1]{puterman1994markov}:
        \begin{align*}
        g^{-} \geq -g^{\pi}(M)
        \end{align*}
        because reversing the sign of the rewards in the MDP $M$ changes the sign of the gain of a policy. Thus, $\tilde{g} \leq g^{\pi}(M)$.
\end{proof}
As a consequence, we can use \evi on $\mathcal{L}_\pi$ to compute a solution for problem~\ref{eq:robust.problem}.
\evi generates a sequence of vectors $(v_i)$ such that $v_{i+1} = \mathcal{L}_\pi v_i$ and $v_0 = 0$.
If the algorithm is stopped when $sp(v_{n+1} - v_n) \leq \epsilon$ we have~\citep[][Sec. 8.3.1]{puterman1994markov} that:
\begin{equation}\label{eq:evi_guarantees}
        |g_n - \wt{g}| \leq \epsilon/2 \quad \text{ and } \quad \|\mathcal{L}_\pi v_n - v_n - g_n e\|_{\infty} \leq \epsilon
\end{equation}
where $e = (1, \ldots, 1)$ and $g_n = \frac{1}{2} (\max_s \{v_{n+1}(s) - v_{n}(s)\} + \min_s \{ v_{n+1}(s) - v_n(s)\})$.
The following lemma shows how we can use the value produced by \evi to lower bound the expected sum of rewards under a policy $\pi$.
\begin{lemma}\label{lem:bound_evi}
        Let $(g_n, v_n)$ the values computed by \evi using $\mathcal{L}_\pi$ and an accuracy $\epsilon$.
        Then, the cumulative reward collected by policy $\pi$ in $M$ after $t$ steps can be lower bounded by:
        \[
                \forall y \in\mathcal{S}, \quad \mathbb{E}_M\left[ \sum_{i=1}^t r_i | s_1 = y, \pi \right] \geq t (g_n - \epsilon) - sp(v_n)
        \]
        In addition, 
         \[
         	sp(v_{n}) \leq \worstD
        \]
\end{lemma}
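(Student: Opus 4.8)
The plan is to convert the \evi stopping guarantee~\eqref{eq:evi_guarantees} into a pointwise one-step inequality and telescope it along the Markov chain induced by $\pi$ in $M$. Write $r_\pi(s) := r(s,\pi(s))$ and let $T_\pi$ be the transition matrix of $\pi$ in $M$, so that $(T_\pi^{\,i-1} r_\pi)(y) = \mathbb{E}_M[r_i \mid s_1=y,\pi]$ for every $i\geq 1$. Since the reward and transition of $\pi$ in $M$ belong to the confidence set $\mathcal{M}$ (this is the ``w.h.p.'' event), $r_\pi(s) \geq \min_{r\in B_r(s,\pi(s))} r$ and $(T_\pi v_n)(s)\geq \min_{p \in B_p(s,\pi(s))} p^\transp v_n$, hence $r_\pi(s) + (T_\pi v_n)(s) \geq \mathcal{L}_\pi v_n(s)$. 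Combining with the second bound in~\eqref{eq:evi_guarantees}, namely $\mathcal{L}_\pi v_n(s) \geq v_n(s) + g_n - \epsilon$, gives for every $s$
\[
r_\pi(s) \;\geq\; v_n(s) - (T_\pi v_n)(s) + g_n - \epsilon .
\]

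Next I would apply the monotone stochastic operator $T_\pi^{\,i-1}$ to both sides and sum over $i=1,\dots,t$; the $v_n$ terms telescope, yielding
\[
\mathbb{E}_M\!\left[\sum_{i=1}^t r_i \,\Big|\, s_1=y,\pi\right] = \sum_{i=1}^t (T_\pi^{\,i-1} r_\pi)(y) \;\geq\; v_n(y) - (T_\pi^{\,t} v_n)(y) + t(g_n - \epsilon).
\]
Both $v_n(y)$ and $(T_\pi^{\,t} v_n)(y)$, the latter a convex combination of entries of $v_n$, lie in $[\min_s v_n(s),\max_s v_n(s)]$, so $v_n(y) - (T_\pi^{\,t} v_n)(y) \geq -sp(v_n)$, which is exactly the first claim.

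For the span bound, I would reuse the reduction in the proof of Lemma~\ref{lem:robust}: because $\mathcal{L}_\pi v = -\mathcal{L}^-(-v)$, where $\mathcal{L}^-$ is the optimal Bellman operator of the extended negative-reward MDP $\mathcal{M}^-$, running \evi on $\mathcal{L}_\pi$ from $v_0=0$ produces exactly $v_n = -w_n$, with $w_n$ the extended value-iteration iterates of $\mathcal{M}^-$ started at $w_0=0$; hence $sp(v_n)=sp(w_n)$. It is standard~\citep{Jaksch10} that the span of such iterates on a communicating extended MDP is bounded by its diameter, and Point 2 of Lemma~\ref{lem:robust} (via the policy in $\mathcal{M}^-$ that always selects the true parameters of $M$) bounds the diameter of $\mathcal{M}^-$ by $\worstD<\infty$.

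The telescoping step is routine; the two points needing care are (i) that the confidence set genuinely contains the $\pi$-dynamics of $M$, so that $\mathcal{L}_\pi v_n(s)\le r_\pi(s)+(T_\pi v_n)(s)$, and (ii) invoking the span-of-iterates bound for \evi on $\mathcal{M}^-$ and checking that the argument behind Point 2 of Lemma~\ref{lem:robust} controls the diameter of $\mathcal{M}^-$ itself --- not just the span of its limiting bias --- by $\worstD$.
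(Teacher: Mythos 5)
Your proof is correct and follows essentially the same route as the paper: the same one-step inequality $r_\pi(s) + (T_\pi v_n)(s) \geq \mathcal{L}_\pi v_n(s) \geq v_n(s) + g_n - \epsilon$ (valid on the event $M \in \mathcal{M}$), the same telescoping under $T_\pi^{\,i-1}$, and the same $-sp(v_n)$ bound on the boundary terms. For the span claim the paper simply cites the argument of Sec.~4.3.1 of \citet{Jaksch10} applied to the extended MDP with zero-span initialization, which is exactly the $\mathcal{M}^-$ reduction you spell out (your extra care in noting that the \emph{diameter} of $\mathcal{M}^-$, not just the limiting bias span, must be controlled by $\worstD$ is a welcome clarification but not a different argument).
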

\begin{proof}
        Using the inequalities in~\eqref{eq:evi_guarantees} we can write that:
        \begin{align*}
                v_n(s) + g_n \leq \mathcal{L}_\pi v_n(s) &= \min_{r \in B_r(s, \pi(s))} r + \min_{p \in B_p(s, \pi(s))} \{p^\transp v\} + \epsilon\\
                                                         &\leq r(s,\pi(s)) + p(\cdot|s, \pi(s))^\transp v_n + \epsilon
        \end{align*}
        since $r(s,\pi(s)) \in B_r(s, \pi(s))$ and $p(\cdot|s, \pi(s)) \in B_p(s,\pi(s))$ w.h.p. 
        By iterating this inequality, we get that for all $t > 0$ and state $s$ :
\begin{align*}
        v_{n}(s) + tg_{n} \leq (t-1)\varepsilon + p^{t}(\cdot| s,\pi(s))^{\intercal} v_{n} + \mathbb{E}\left[ \sum_{i=1}^{t}  r_i\left(s_{i},\pi(s_{i})\right) | s_1=s\right]
\end{align*}
The statement follows by noticing that 
\[
        sp(v_n) = \max_s v_n(s) - \min_s v_n(s) 
        \geq \underbrace{p^{t}(\cdot| y,\pi(y))^{\intercal} v_{n}}_{\leq \max_s v_n(s)}
        - \underbrace{v_n(y)}_{\geq \min_s v_n(s)}, \quad \forall y \in \mathcal{S}
\]
The last statement is a direct consequence of the argument developed in section $4.3.1$ of \cite{Jaksch10}. This reasoning relies on the fact that the initial vector used in EVI is a zero span vector. 
\end{proof}

\section{Regret Bound for CUCRL}\label{app:proof_cucrl}
\begin{lemma}\label{lem:regret_decomposition}
        The regret of \cucrl can be upper-bounded for some $\beta > 0$, with probability at least $1-\frac{2\delta}{5}$, by:
        \begin{align*}
                R(\cucrl, T) \leq \beta \cdot \left( R(\ucrl, T|\Lambda_{T}) + \left(g^{\star} - g^{\pi_{b}}\right)
                \sum_{k \in \Lambda_T^c} T_k 
                + \max\{\rmaxbound, sp\left(h^{\pi_{b}}\right)\} \sqrt{SAT\ln(T/\delta)}
        \right)
        \end{align*}
\end{lemma}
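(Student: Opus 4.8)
The plan is to split the regret episode by episode according to whether \cucrl ran the optimistic policy $\wt\pi_k$ ($k\in\Lambda_T$) or the baseline $\pi_b$ ($k\in\Lambda_T^c$), i.e.\
\[
R(\cucrl,T) \;=\; \sum_{k\in\Lambda_T}\sum_{t=t_k}^{t_{k+1}-1}\!\big(g^\star-r_t(s_t,a_t)\big) \;+\; \sum_{k\in\Lambda_T^c}\sum_{t=t_k}^{t_{k+1}-1}\!\big(g^\star-r_t(s_t,a_t)\big).
\]
By definition the first sum is $R(\ucrl,T\mid\Lambda_T)$; the only point to check is that it is a meaningful \ucrl-style quantity even though the estimates $\wh p_k,\wh r_k$ and the confidence region $\mathcal{M}_k$ are refreshed also with the off-policy samples gathered while running $\pi_b$. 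This is fine because $\mathcal{M}_k$ is built through a single union bound over all episodes (of both types), so $M^\star\in\mathcal{M}_k$ w.h.p.\ and the optimism/telescoping steps of the \ucrl analysis are unaffected. The remaining work is to show that the second sum is at most $(g^\star-g^{\pi_b})\sum_{k\in\Lambda_T^c}T_k$ plus a concentration term of the claimed order.

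First I would rewrite the per-episode reward of a conservative episode via the Poisson equation of the baseline. With $\bar r^{\pi_b}(s):=\sum_a\pi_b(a\mid s)r(s,a)$ one has $\bar r^{\pi_b}(s)=g^{\pi_b}+h^{\pi_b}(s)-(P^{\pi_b}h^{\pi_b})(s)$, and since $a_t\sim\pi_b(s_t)$ in a conservative episode, both $\zeta_t:=r_t(s_t,a_t)-\bar r^{\pi_b}(s_t)$ and $\eta_t:=h^{\pi_b}(s_{t+1})-(P^{\pi_b}h^{\pi_b})(s_t)$ are martingale differences w.r.t.\ $(\mathcal{F}_{t-1})$, with $|\zeta_t|\le r_{\max}$ and $|\eta_t|\le sp(h^{\pi_b})$. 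Substituting $r_t=\bar r^{\pi_b}(s_t)+\zeta_t$ and telescoping $h^{\pi_b}(s_t)-(P^{\pi_b}h^{\pi_b})(s_t)=h^{\pi_b}(s_t)-h^{\pi_b}(s_{t+1})+\eta_t$ over an episode gives, for $k\in\Lambda_T^c$,
\[
\sum_{t=t_k}^{t_{k+1}-1}r_t(s_t,a_t)\;=\;T_k\,g^{\pi_b}+h^{\pi_b}(s_{t_k})-h^{\pi_b}(s_{t_{k+1}})+\sum_{t=t_k}^{t_{k+1}-1}\big(\zeta_t+\eta_t\big).
\]
Summing over $k\in\Lambda_T^c$, the boundary terms $h^{\pi_b}(s_{t_k})-h^{\pi_b}(s_{t_{k+1}})$ do not fully telescope (conservative episodes need not be consecutive), but each is at most $sp(h^{\pi_b})$ in absolute value, so their total is bounded by $sp(h^{\pi_b})\,|\Lambda_T^c|$.

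It then remains to control two things. (i) The count $|\Lambda_T^c|$ is at most the total number of episodes, which under the dynamic episode rule of~\citep{Ouyang2017learning} ($T_k\le T_{k-1}+1$ together with the visit-count trigger) is $\wt{O}(\sqrt{SAT})$, so $sp(h^{\pi_b})|\Lambda_T^c|=\wt{O}\big(sp(h^{\pi_b})\sqrt{SAT}\big)$. (ii) The martingale sum $\sum_t(\zeta_t+\eta_t)$ over the conservative steps -- extended to all $t$ by multiplying by the predictable indicator $\mathbf{1}\{t\in\text{a conservative episode}\}$, so that it becomes an MDS over the whole horizon -- is bounded by Azuma--Hoeffding by $O\big((r_{\max}+sp(h^{\pi_b}))\sqrt{T\ln(T/\delta)}\big)$; working with this single horizon-long martingale sidesteps the fact that each $T_k$ is a stopping time. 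Collecting the pieces, $\sum_{k\in\Lambda_T^c}\sum_{t=t_k}^{t_{k+1}-1}(g^\star-r_t)\le (g^\star-g^{\pi_b})\sum_{k\in\Lambda_T^c}T_k+O\big(\max\{r_{\max},sp(h^{\pi_b})\}\sqrt{SAT\ln(T/\delta)}\big)$, and adding back $R(\ucrl,T\mid\Lambda_T)$ and absorbing universal constants into $\beta$ yields the bound; the failure probability $\tfrac{2\delta}{5}$ comes from a union bound over $\{M^\star\in\bigcap_k\mathcal{M}_k\}$ and the Azuma bound, each invoked with a suitable fraction of $\delta$.

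The hard part is essentially bookkeeping rather than a deep new idea: one has to make sure that (a) sharing the statistics across the two episode types and interleaving baseline episodes leave the confidence sets -- and hence the legitimacy of calling the first term $R(\ucrl,T\mid\Lambda_T)$ -- intact, and (b) the random episode lengths $T_k$ (only deterministically bounded via $T_k\le T_{k-1}+1$ and the visit-count rule) are handled correctly, both in the episode-count bound that controls $sp(h^{\pi_b})|\Lambda_T^c|$ and in the concentration step, where restricting to the predictable conservative-episode indicator is precisely what makes Azuma--Hoeffding applicable.
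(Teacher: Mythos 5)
Your proof is correct and follows essentially the same route as the paper: split the regret over $\Lambda_T$ versus $\Lambda_T^c$, identify the first part as $R(\ucrl,T\mid\Lambda_T)$ while noting that off-policy updates only shrink the confidence sets, and on the conservative episodes apply the baseline's Poisson equation, bound the non-telescoping bias boundary terms by $sp(h^{\pi_b})$ times the episode count $k_T \le \sqrt{2SAT\ln T}$, and control the remaining martingale differences with Azuma--Hoeffding. The only cosmetic difference is that the paper first replaces $r_t$ by its conditional mean via a single horizon-long Azuma step before the episode decomposition, whereas you fold that noise into $\zeta_t$ on the conservative episodes; the resulting bound and failure probability are the same.
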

\begin{proof}
Recall that $k_t = \sup\{k>0 : t> t_k\}$ is the episode at time $t$ and that the regret is defined as $R(\cucrl, T) = \sum_{t=1}^T \Big(g^\star - r_t(s_t,a_t)\Big)$.

Since the baseline policy $\pi_b$ may be stochastic, as a first step we replace the observed reward by its expectation.
As done in~\citep{fruit2018constrained} we use Azuma's inequality that gives, with probability at least $1- \frac{\delta}{5}$:
 \begin{align}\label{eq:bounding.rewards.expectation}
  \forall T \geq 1,~~ -\sum_{t=1}^T r_t \leq -\sum_{t=1}^T \sum_{a \in \A} \pi_{k_t}(s_t,a) r(s_t,a) + 2\rmaxbound\sqrt{T\ln \left(\frac{5T}{\delta}\right)}
 \end{align}

 We denote by $\Lambda_T = \Lambda_{k_T} \cup \{k_T\} \cdot \one{Eq.~\ref{eq:condition.algorithm}}$ the set of episodes where the algorithm played an UCRL policy. Note that we cannot directly consider $\Lambda_{k_T}$ since the set is updated at the end of the episode and the last episode may not have ended at $T$. Similarly we denote by $\Lambda_T^c = \Lambda_{k_T}^c \cup \{k_T\} \cdot \one{\neg Eq.~\ref{eq:condition.algorithm}}$.
Then, the regret of \cucrl can be decomposed as follow:
\begin{equation}\label{eq:regret_decomposition_avg_rwd}
        \begin{aligned}
                R(\cucrl,T) &= \sum_{t = 1}^{T} \left( g^{\star} - \sum_{a \in \A} \pi_{k_t}(s_t,a) r(s_t,a) \right) + 2\rmaxbound\sqrt{T\ln \left(\frac{5T}{\delta}\right)}\\
                    &= 2\rmaxbound\sqrt{T\ln \left(\frac{5T}{\delta}\right)} +
                    \underbrace{
                            \sum_{k = 1}^{k_T}\one{k \in \Lambda_{T}} \sum_{t = t_{k}}^{t_{k+1}-1} (g^{\star} - r(s_{t},a_{t}))
                            }_{:= R(\ucrl, T|\Lambda_{T})}\\
                    &\quad{}+ \sum_{k = 1}^{k_T}\one{k\in \Lambda_{T}^{c}}\Bigg(
                     \left(g^{\star} - g^{\pi_{b}}\right)(t_{k+1} - t_{k}) 
+
\underbrace{
            \sum_{t = t_{k}}^{t_{k+1}-1} \left( g^{\pi_{b}} - \sum_{a \in \A} \pi_{b}(s_t,a) r(s_t,a) \right)  }_{:= {\color{blue}\Delta^c_k}}  \Bigg)
        \end{aligned}
\end{equation}
Moreover, note that the \ucrl policy is deterministic so we hate that $\sum_{a \in \A} \pi_{k_t}(s_t,a) r(s_t,a) = r(s_t,a_t)$ when $k_t \in \Lambda_{T}$.  
The second term, denoted $R(\ucrl,T|\Lambda_{k_T})$, is the regret suffered by \ucrl over $\sum_{k \in \Lambda_{k_T}} T_k$ steps. 
The only difference with the orginal analysis~\citep{Jaksch10} is that the confidence intervals used by UCRL are updated when using the baseline policy, however it does not affect the regret of UCRL because it only means the confidence intervals used shrinks faster for some state-action pairs.
We will analyze this term in Lem.~\ref{lem:regret_ucrl}.
To decompose $\Delta^c_k$ we can use the Bellman equations ($g^{\pi_b} e = L^{\pi_b} h^{\pi_b} - h^{\pi_b}$):
\begin{align*}
        \sum_{k \in \Lambda_{T}^c} {\color{blue}\Delta^c_k}
        &=\sum_{k\in \Lambda_{T}^{c}} \sum_{t = t_{k}}^{t_{k+1}-1}  \sum_a \pi_b(s_t, a) p(\cdot|s_{t},a)^{\intercal}h^{\pi_{b}} - h_{\pi^{b}}(s_{t}) \\
        &= \sum_{k\in \Lambda_{T}^{c}} \sum_{t = t_{k}}^{t_{k+1}-1}  
        \underbrace{
\sum_{a \in \A} \pi_b(s,a)
\Big(p(\cdot|s_{t},a)^{\intercal}h^{\pi_{b}} \Big) - h^{\pi_{b}}(s_{t+1})}_{:= {\color{Purple}\Delta_{k,t}^{c,p}}}
+ \sum_{k\in \Lambda_{T}^{c}} \underbrace{\sum_{t = t_{k}}^{t_{k+1}-1} \left( h^{\pi_{b}}(s_{t+1}) - h^{\pi_{b}}(s_{t})\right)}_{:={\color{CadetBlue}\Delta_{k}^{c,2}}}
\end{align*}
But, $\Delta_{k}^{c,2}$ can be bounded using a telescopic sum argument and the number of episodes:
\begin{align*}
\sum_{k\in \Lambda_{T}^{c}} {\color{CadetBlue}\Delta_{k}^{c,2}}
=  \sum_{k\in \Lambda_{T}^{c}} h^{\pi_{b}}(s_{t_{k+1}}) - h^{\pi_{b}}(s_{t_{k}}) \leq |\Lambda_{T}^{c}|sp\left( h^{\pi_{b}}\right)
\end{align*}
Then it is easy to see that $(\Delta_{k,t}^{c,p})_{k,t}$ is a Martingale Difference Sequence with respect to the filtration $(\mathcal{F}_{t})_{t\in \mathbb{N}}$ which is generated by all the randomness in the environment and in the algorithm up until time $t$: $| \Delta_{k,t}^{c,p}| \leq 2 \| h^{\pi_b}\|_{\infty} \leq 2 sp(h^{\pi_b})$ and $\mathbb{E}[\Delta_{k,t}^{c,p}| \mathcal{F}_t] = 0$. Thus with probability $1 - \frac{\delta}{5}$:
\begin{align*}
\sum_{k\in \Lambda_{k_T}^{c}} \sum_{t = t_{k}}^{t_{k+1}-1} {\color{Purple}\Delta_{k,t}^{c,p}} \leq 4 sp\left(h^{\pi_{b}}\right)\sqrt{T\ln\left(\frac{5T}{\delta}\right)}
\end{align*}
Therefore putting all the above together, we have that with probability at least $1- \frac{2\delta}{5}$:
\begin{align*}
 R(\cucrl,T) \leq 2\rmaxbound\sqrt{T\ln \left(\frac{5T}{\delta}\right)} + R(\ucrl, T|\Lambda_{T}) + \left(g^{\star} - g^{\pi_{b}}\right)\sum_{k = 1}^{k_T}\one{k\in \Lambda_{T}^{c}}(t_{k+1} - t_{k})& \\
 + sp\left(h^{\pi_{b}}\right) \left( |\Lambda_{T}^{c}| + 4\sqrt{T\ln\left(\frac{5T}{\delta}\right)} \right)&
  \end{align*}
  
  As shown in~\citep[][Lem. 1]{Ouyang2017learning}, $k_{T} \leq \sqrt{2SAT\ln(T)}$ thus we can simply write that  $|\Lambda_{T}^{c}| \leq \sqrt{2SAT\ln(T)}$.
\end{proof}
In the next lemma, we bound the total number of steps where \cucrl used the baseline policy.
\begin{lemma}\label{lem:nb_non_conservative_episodes}
For any, $\delta>0$, the total length of episodes where the baseline policy is played by CUCRL after $T$ steps is upper-bounded with probability $1-2\delta/5$ by:
\begin{align*}
   \sum_{l\in\Lambda_{k_T}^{c}} T_{l} \leq &  2\sqrt{SAT\ln(T)} + \frac{16\sqrt{TL_{T}^{\delta}}}{\alpha g^{\pi_{b}}}\Big[ (D + \worstD)\sqrt{SA}  + \rmaxbound +  \sqrt{SA}sp(h^{\pi_{b}}) \Big] + \frac{112SAL_{T}^{\delta}}{(\alpha g^{\pi_{b}})^{2}}(1 + S(D+\worstD)^{2})
\end{align*}
with $L_{T}^{\delta} := \ln\left(\frac{5SAT}{\delta}\right)$ a logarithmic term in $T$.
\end{lemma}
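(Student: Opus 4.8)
The argument expands the sketch given for Lem.~\ref{lem:baseline.regret}.

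\textbf{Step 1: reduce to the gain gap.} Let $\tau = \sup\{k>0 : k\in\Lambda_k^c\}$ be the last episode in which \cucrl plays the baseline (if there is none, the bound is trivial). By construction, condition~\eqref{eq:condition.algorithm} is violated at the beginning of episode $\tau$. Set $X := \sum_{l=1}^{\tau-1}T_l$ and $\Delta_\tau := \sum_{l\in\Lambda_{\tau-1}}T_l(\wt g_l - \wu g_l)$, the cumulated gap between the optimistic gain $\wt g_l$ (satisfying $\wt g_l e + \wt h_l = \mathcal{L}^+_l\wt h_l$) and the pessimistic gain $\wu g_l$ (satisfying $\wu g_l e + \wu h^{\pi_l} = \mathcal{L}^{\pi_l}_l\wu h^{\pi_l}$) of the optimistic policies. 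Rearranging the violated inequality: for $l\in\Lambda_{\tau-1}$ one has $T_l(g_l^- - \varepsilon_l - g^{\pi_b}) \ge -T_l(\wt g_l-\wu g_l)$ because $g_l^- - \varepsilon_l\le\wu g_l$ and $\wt g_l\ge g^\star\ge g^{\pi_b}$; the remaining span terms $sp(h_l^-)$ are bounded by $\worstD$ (Lem.~\ref{lem:bound_evi}, Lem.~\ref{lem:robust}\emph{.2}), $sp(h^{\pi_b})$ is a constant, and each is multiplied by at most the number of episodes $k_T\le\sqrt{2SAT\ln T}$~\citep{Ouyang2017learning}; the truncated $\tau$-term is $\ge -(T_{\tau-1}+1)\rmaxbound\ge -k_T\rmaxbound$. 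This yields, for a numerical constant $c_0$,
\[
\alpha\,g^{\pi_b}\,X \;\le\; \Delta_\tau \;+\; c_0\,\sqrt{SAT\ln T}\;\big(\worstD + sp(h^{\pi_b}) + \rmaxbound\big).
\]

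\textbf{Step 2: bound $\Delta_\tau$ (the core step).} On the event $\{M^\star\in\mathcal{M}_l~\forall l\}$, combining the two Bellman equations with $M^\star\in\mathcal{M}_l$ gives, for every step $t$ inside an episode $l\in\Lambda_{\tau-1}$,
\[
\wt g_l - \wu g_l \;\le\; \Delta_{ci}^{l,t} + \Delta_p^{l,t} + 2\varepsilon_l,
\]
with $\Delta_{ci}^{l,t} = 2\beta_r^l(s_t,a_t) + \beta_p^l(s_t,a_t)\big(sp(\wt h_l)+sp(\wu h^{\pi_l})\big)$ and $\Delta_p^{l,t} = p^\star(\cdot|s_t,a_t)^\transp(\wt h_l+\wu h^{\pi_l}) - \big(\wt h_l(s_{t+1})+\wu h^{\pi_l}(s_{t+1})\big)$. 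Summing over $t$ and $l\in\Lambda_{\tau-1}$: since $sp(\wt h_l)\le D$ (optimistic operator, as in \ucrl) and $sp(\wu h^{\pi_l})\le\worstD$ (Lem.~\ref{lem:robust}\emph{.2}), and since the pigeonhole bound $\sum_{l,t}1/\sqrt{N_l(s_t,a_t)}\lesssim\sqrt{SAX}$~\citep{Jaksch10} survives the fact that the counts $N_l$ are only enlarged by off-policy (baseline) samples, we get $\sum_{l,t}\Delta_{ci}^{l,t}\lesssim \rmaxbound\sqrt{SAX\,L_T} + (D+\worstD)\sqrt{S^2AX\,L_T}$; the sequence $(\Delta_p^{l,t},\mathcal{F}_t)$ is a martingale difference sequence with $|\Delta_p^{l,t}|\le 2(D+\worstD)$, so Azuma's inequality gives $\sum_{l,t}\Delta_p^{l,t}\lesssim (D+\worstD)\sqrt{T\,L_T}$ with probability $1-\delta/5$; and $\sum_l T_l\varepsilon_l\lesssim\rmaxbound\sqrt{T}$ since $\varepsilon_l=\rmaxbound/\sqrt{t_l}$.

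\textbf{Step 3: solve the quadratic and conclude.} Plugging Step 2 into Step 1 produces an inequality of the form $\alpha g^{\pi_b}X \le A\sqrt X + B$ with $A = \wt O\big(\sqrt{SA}\,(\rmaxbound+\sqrt S(D+\worstD))\big)$ and $B = \wt O\big((D+\worstD)\sqrt{T} + \sqrt{SAT}\,(\worstD+sp(h^{\pi_b}))+\rmaxbound\sqrt{T}\big)$. Using $x\le a\sqrt x + b\Rightarrow x\le 2a^2+2b$ with $a=A/(\alpha g^{\pi_b})$ and $b=B/(\alpha g^{\pi_b})$ gives
\[
X \;\lesssim\; \frac{SA\,L_T\big(1+S(D+\worstD)^2\big)}{(\alpha g^{\pi_b})^2} \;+\; \frac{\sqrt{T\,L_T}}{\alpha g^{\pi_b}}\Big[(D+\worstD)\sqrt{SA}+\rmaxbound+\sqrt{SA}\,sp(h^{\pi_b})\Big].
\]
Finally, every conservative episode with index $\le k_T$ has index $\le\tau$, hence $\sum_{l\in\Lambda_{k_T}^c}T_l\le X+T_\tau$, and the dynamic-episode stopping rule gives $T_\tau\le T_{\tau-1}+1\le\tau\le k_T\le\sqrt{2SAT\ln T}$, which accounts for the leading $2\sqrt{SAT\ln T}$ term; a union bound over $\{M^\star\in\mathcal{M}_l~\forall l\}$ and the Azuma event produces the stated probability $1-2\delta/5$.

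\textbf{Main obstacle.} The delicate point is Step 2: obtaining the correct $D$-versus-$\worstD$ split (the optimistic biases keep span $\le D$, whereas pessimistic policy evaluation inflates the span to $\worstD$, which is precisely why ergodicity rather than mere communication is needed), and rerunning the confidence-sum and pigeonhole arguments of \ucrl in the present setting, where the empirical counts $N_l$ are updated continuously with baseline samples and the episode lengths obey the $T_k\le T_{k-1}+1$ rule instead of a doubling rule.
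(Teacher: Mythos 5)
Your proposal is correct and follows essentially the same route as the paper's proof: the same reduction via the last conservative episode $\tau$, the same bound on the span terms via $\worstD$ and $k_T\le\sqrt{2SAT\ln T}$, the same decomposition of $\sum_l T_l(\wt g_l-\wu g_l)$ into confidence-interval and martingale-difference terms with the $D$ versus $\worstD$ span split, and the same quadratic inequality in $X$ plus the $T_\tau\le k_T$ correction. The only cosmetic difference is that you fold the $2T_l\varepsilon_l$ slack into Step 2 rather than extracting it upfront as the paper does when passing from $g_l^-$ to $\wu g_l$.
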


\begin{proof}
        Let $\tau$ be the last episode played conservatively: $\tau = \sup\{k>0 : k \in \Lambda_{k}^c\}$.
        At the beginning of episode $\tau$ the conservative condition is not verified that is to say: 
\begin{align}\label{eq:conservative_check}
        \underbrace{\sum_{l\in \Lambda_{\tau-1}} T_{l}\left(g^{\pi_{b}} - g_{l}^{-} + \varepsilon_{l} \right)}_{:= {\color{Orange}\Delta_{\tau}^{1}}} + sp(h^{\pi_{b}}) \left( |\Lambda^{c}_{\tau-1}| + (1-\alpha)\right)  + \sum_{l\in \Lambda_{\tau-1} \cup \{ \tau\}} sp\left( h_{l}^{-}\right) + \nonumber& \\
+\left( T_{\tau-1} +1\right)\left((1-\alpha)g^{\pi_{b}} - g_{\tau}^{-} + \epsilon_{\tau}\right)\mathds{1}_{\{ (1-\alpha)g^{\pi_{b}} \geq g_{\tau}^{-} + \epsilon_{\tau}\}}&
\geq \alpha \sum_{l = 1}^{\tau-1} T_{l} g^{\pi_{b}} 
\end{align}
Let's proceeding by analysing each term on the RHS of Eq.~\ref{eq:conservative_check}. First, we have that $|\Lambda^{c}_{\tau-1}| \leq k_{T} \leq \sqrt{2SAT\ln(T)}$, thus:
\begin{align}\label{eq:bound_span_baseline}
sp(h^{\pi_{b}}) \left( |\Lambda^{c}_{\tau-1}| + (1-\alpha)\right)  \leq \Big(\sqrt{2SAT\ln(T)} + 1\Big)sp(h^{\pi_{b}})
\end{align}
On the other hand, thanks to Lem.~\ref{lem:bound_evi}, we have:
\begin{align}\label{eq:bound_span_evi}
        \sum_{l\in \Lambda_{\tau-1} \cup \{ \tau\}} sp\left( h_{l}^{-}\right) \leq (|\Lambda_{\tau-1}| +1) \worstD \leq 2\sqrt{2SAT\ln(T)}\worstD
\end{align}
Before analysing $\Delta_{\tau}^{1}$, let's bound the contribution of episode $\tau$:
\begin{align}\label{eq:contribution_tau}
\left( T_{\tau-1} +1\right)\left((1-\alpha)g^{\pi_{b}} - g_{\tau}^{-} - \epsilon_{\tau}\right)\mathds{1}_{\{ (1-\alpha)g^{\pi_{b}} \geq g_{\tau}^{-} + \epsilon_{\tau}\}} \leq (1-\alpha)g^{\pi_{b}}k_{T} \leq (1-\alpha)\rmaxbound\sqrt{2SAT\ln(T)} 
\end{align}
where we used the fact that for all episode $k$, we have $T_{k} \leq k$. 
Indeed the dynamic episode condition is such that for an episode $k$, $T_{k} \leq T_{k-1} + 1$ thus by iterating this inequality, $T_{k} \leq T_{0} + k = k$. 
At this point using equations~\ref{eq:conservative_check},~\ref{eq:bound_span_evi} and~\ref{eq:contribution_tau} we have:
\begin{align*}
        {\color{Orange}\Delta_{\tau}^{1}} + \Big(\sqrt{2SAT\ln(T)} + 1\Big)sp(h^{\pi_{b}})  + 2\sqrt{2SAT\ln(T)}\worstD +  (1-\alpha)\rmaxbound\sqrt{2SAT\ln(T)}  \geq \alpha \sum_{l = 1}^{\tau-1} T_{l} g^{\pi_{b}}
\end{align*}
Let's finish by analysing ${\color{Orange}\Delta_{\tau}^{1}}$. Let's define the event, $ \Gamma= \Bigg\{ \exists T > 0, \exists k \geq 1, \text{ s.t } M \not\in \mathcal{M}_{k}\Bigg\}$, by definition of $B_{r}^{k}$ and $B_{p}^{k}$, $\mathbb{P}\left( \Gamma \right) \leq \delta/5$, see~\citep[][App. B.2]{flp2019alttutorial} for a complete proof. 
We have that on the event $\Gamma^c$, for any $l \in \Lambda_{\tau-1}$,  $(g^-_l, h^-_l) = \evi(\mathcal{L}^{\pi_l}_l, \varepsilon_l)$ is such that $|\underline{g}^{\pi_{l}} - g_{l}^{-}| \leq \epsilon_{l}$ (see App.~\ref{sec:robust_pe}) where $\underline{g}^{\pi_l}$ is the true gain: $\underline{g}^{\pi_l} + \underline{h}^{\pi_l} = \mathcal{L}^{\pi_l}_l \underline{h}^{\pi_l}$.
Thus, since $\varepsilon_l \leq \rmaxbound / \sqrt{t_l}$:

\begin{align*}
        {\color{Orange}\Delta_{\tau}^{1}} = \sum_{l\in \Lambda_{\tau-1}} T_{l}\left(g^{\pi_{b}} - g_{l}^{-} + \epsilon_{l}\right) &\leq  2\sum_{l\in \Lambda_{\tau - 1}} T_{l}\varepsilon_{l} + \sum_{l\in \Lambda_{\tau-1}}T_{l}(g^{\pi_{b}} - \underline{g}^{\pi_{l}}) \\
                                                                                                                  &  \leq 4\rmaxbound\sqrt{T} + \sum_{l\in \Lambda_{\tau-1}}T_{l}(\wt{g}_{l} - \underline{g}^{\pi_{l}}) 
\end{align*}
where $\wt{g}_{l}$ is the optimistic gain at episode $l$ (see~\cite{flp2019alttutorial}) thus the last inequality comes from $g^{\pi_{b}} \leq g^{\star} \leq \wt{g}_{l}$ for every episode $l$. We can also define the optimistic bias at episode $l$, $\wt{h}_{l}$, the pair $(\wt{g}_{l}, \wt{h}_{l})$ is such that:
\begin{align*}
        \forall s\in \calS,  \qquad \wt{g}_{l} + \wt{h}_{l}(s) := \mathcal{L}^+_l \wt{h}_l := \max_{a} \left\{ \max_{r \in B_r^l(s,a)} r + \max_{p \in B_p^l(s,a))} p^\transp \wt{h}_{l} \right\}
\end{align*}
Recall that $\wt{\pi}_l \in \SD$ is the optimistic policy at episode $l$ and when $l \in \Lambda_{\tau-1}$, $\pi_l = \wt{\pi}_l$. Then, by using Bellman equations:
\begin{align*}
        \sum_{l\in \Lambda_{\tau-1}}T_{l}(\wt{g}_{l} - \underline{g}^{\pi_{l}})  
&= \sum_{l\in \Lambda_{\tau-1}}\sum_{t  = t_{l}}^{t_{l+1}-1}(\wt{g}_{l} - \underline{g}^{\pi_{l}}) 
= \sum_{l\in \Lambda_{\tau-1}}\sum_{t  = t_{l}}^{t_{l+1}-1} (\underbrace{\mathcal{L}^+_l \wt{h}_l(s_t)}_{:=\mathcal{L}^{+,\pi_l}_l\wt{h}_l(s_t)}, - \wt{h}_l(s_t) - \mathcal{L}^{\pi_l}_l \underline{h}^{\pi_l}(s_t) + \underline{h}^{\pi_l}(s_t))\\
&= \sum_{l\in \Lambda_{\tau-1}}\sum_{t  = t_{l}}^{t_{l+1}-1} \max_{r\in B_{r}^{l}(s_{t},a_{t})} r - \min_{r\in B_{r}^{l}(s_{t},a_t)} r 
+ \max_{p \in B_p^l(s_t,a_t)} p^\transp \wt{h}_{l} - \min_{p \in B_p^{l}(s_t,a_t)} p^\transp \underline{h}^{\pi_l} - \wt{h}_l(s_t) + \underline{h}^{\pi_l}(s_t)\\ 
&\leq \sum_{l\in \Lambda_{\tau-1}}\sum_{t  = t_{l}}^{t_{l+1}-1} 2\max_{r\in B_{r}^{l}(s_{t},a_{t})} r + \max_{q \in B_{p}^{l}(s_{t},a_{t})} (q - p^{\star})^{\intercal}\wt{h}_{l}  \\
&\quad{}- \min_{q \in B_{p}^{l}(s_{t},a_t)} (q - p^{\star})^{\intercal}\underline{h}^{\pi_l} +  p^{\star}(\cdot|s_{t},a_{t})^{\transp}\left( \wt{h}_{l} - \underline{h}^{\pi_{l}}\right)  -  (\wt{h}_{l}(s_{t+1}) - \underline{h}^{\pi_l}(s_{t+1}))  \\
&\quad{}+ (\wt{h}_{l}(s_{t+1}) - \wt{h}_{l}(s_{t}) + \underline{h}^{\pi_l}(s_{t+1})- \underline{h}^{\pi_l}(s_{t}))
\end{align*}
where $p^{\star}$ is the transition probability of the true MDP, $M^{\star}$. By a simple telescopic sum argument, we have:
\begin{align*}
  \sum_{l\in \Lambda_{\tau-1}}\sum_{t  = t_{l}}^{t_{l+1}-1} \wt{h}_{l}(s_{t+1}) - \wt{h}_{l}(s_{t}) + \underline{h}^{\pi_l}(s_{t+1})- \underline{h}^{\pi_l}(s_{t})  = |\Lambda_{\tau-1}|\big( sp\big(\wt{h}_{l}\big) + sp\big(\underline{h}^{\pi_{l}}\big)\big)
\end{align*}
At this point we need to explicitly define the concentration inequality used to construct the confidence sets $B_r^l$ and $B_p^l$.
For every $(s,a)\in \calS\times\mathcal{A}$, we define $\beta_{l}^{k}(s,a)$ such that:
\begin{align*}
\forall l\geq 1, \qquad B_{r}^{l}(s,a)\subset [\wt{r}_{l}(s,a) - \beta_{r}^{l}(s,a), \wt{r}_{l}(s,a) + \beta_{r}^{l}(s,a)]
\end{align*}
where $\wh{r}_{l}(s,a)$ is the empirical average of the reward received when visiting the state-action pairs $(s,a)$ at the beginning of episode $l$. 
For every $(s,a)\in \calS\times\mathcal{A}$, we define $\beta_{p}^{l}(s,a)$ as:
\begin{align*}
        B_p^l(s,a) = \left\{p \in \Delta_S : \|p(\cdot|s,a) - \wh{p}_l(\cdot|s,a)\|_1 \leq \beta_p^l(s,a) \right\}
\end{align*}
with $\wh{p}_{l}$ is the empirical average of the observed transitions. 
Choosing those $\beta_{r}^{l}$ and $\beta_{p}^{l}$ is done thanks to concentration inequalities such that event $\Gamma^c$ holds with high enough probability. In the following, we use:
\begin{equation*}
\forall s,a \qquad \beta_{r}^{l}(s,a) = \sqrt{\frac{7SAL_{T}^{\delta}}{2\max\{1, N_{l}(s,a)\}}} \text{ and } \beta_{p}^{l}(s,a) = S\sqrt{\frac{14AL_{T}^{\delta}}{\max\{1, N_{l}(s,a)\}}}
\end{equation*}
where $L_{T}^{\delta} = \ln\left(\frac{5SAT}{\delta}\right)$. For other choices of $\beta_{r}^{l}$ and $\beta_{p}^{l}$ refer to \citep{flp2019alttutorial}.
Similarly to what done in~\citep[][Sec. 4.3.1 and 4.3.2]{Jaksch10}, by using Holder's inequality and recentering the bias functions, we write:
\begin{align*}
        \sum_{l\in \Lambda_{\tau-1}}T_{l}(\wt{g}_{l} - \underline{g}^{\pi_{l}})  
        \leq |\Lambda_{\tau-1}|\big( sp\big(\wt{h}_{l}\big) + sp\big(\underline{h}^{\pi_{l}}\big)\big)  + 
        \underbrace{\sum_{l\in \Lambda_{\tau-1}}\sum_{t  = t_{l}}^{t_{l+1}-1} 2\beta_{r}^{l}(s_{t},a_{t})  + \beta_{p}^{l}(s_{t},a_{t})\big( sp\big(\wt{h}_{l}\big) + sp\big(\underline{h}^{\pi_{l}}\big)\big)}_{:=(a)}&  \\
        + \underbrace{\sum_{l\in \Lambda_{\tau-1}}\sum_{t  = t_{l}}^{t_{l+1}-1}p^{\star}(\cdot|s_{t},a_{t})^{\transp}\left( \wt{h}_{l} + \underline{h}^{\pi_{l}}\right) -  (\wt{h}_{l}(s_{t+1}) - \underline{h}^{\pi_{l}}(s_{t+1}))}_{:=(b)}& 
\end{align*}
To finish, the proof of this lemma, we need to bound the term $(a)$ and $(b)$.
In the following, we use the fact that  $sp\big(\wt{h}_{l}\big) + sp\big(\underline{h}^{\pi_{l}}\big)\leq D + \worstD$ (see Lem.~\ref{lem:bound_evi}) and again that $|\Lambda_{\tau-1}|\leq k_T \leq \sqrt{2SAT\ln(T)}$.
Let's begin with $(a)$, by definition of the radius of the confidence sets, we have:
\begin{align*}
\sum_{l\in \Lambda_{\tau-1}}\sum_{t  = t_{l}}^{t_{l+1}-1} \beta_{r}^{l}(s_{t},a_{t}) &= \sqrt{\frac{7SAL_{T}^{\delta}}{2}}\sum_{l\in \Lambda_{\tau-1}}\sum_{t  = t_{l}}^{t_{l+1}-1} \sqrt{\frac{1}{\max\{1, N_{l}(s_{t}, a_{t})\}}}
\leq  \sqrt{\frac{7SAL_{T}^{\delta}}{2}}\sqrt{\sum_{l = 1}^{\tau-1} T_{l}}
\end{align*}
and,
\begin{align*}
        \sum_{l\in \Lambda_{\tau-1}}\sum_{t=t_l}^{t_{l+1}-1} \beta_{p}^{l}(s_{t}, a_{t}) \leq S\sqrt{14L_{T}^{\delta}A}\sqrt{\sum_{l = 1}^{\tau-1} T_{l}}
\end{align*}
The second term $(b)$ is easy to bound because it is a Martingale Difference Sequence with respect to the filtration generated by all the randomness in the algorithm and the environment before the current step. 
For any time $t$, the $\sigma$-algebra generated by the history up to time $t$ included is $\mathcal{F}_t = \sigma(s_1, a_1, r_1, \ldots, s_t, a_t, r_t, s_{t+1})$. 
Define $X_t = \one{k_t \in \Lambda_T} (p(\cdot|s_t,\pi_{k_t}(s_t))^\transp u_{k_t} - u_{k_t}(s_{t+1}))$ with $u_{k_t}=\wt{h}_{k_t} - \underline{h}^{\pi_{k_t}}$.
Since $\pi_{k_t}$ is $\mathcal{F}_t$ measurable, $E[X_t|\mathcal{F}_{t-1}] = 0$ and $|X_t| \leq 2(D+\worstD)$.
Then $(X_t,\mathcal{F}_{t})_t$ is an MDS and nothing change compared to the analysis of \ucrl.
Therefore using Azuma-Hoeffding inequality, we have, with probability $1-\frac{\delta}{5}$ that: 
\begin{align*}
(b) \leq 2(D + \worstD)\sqrt{2TL_{T}^{\delta}}
\end{align*}
\faExclamationTriangle~Algorithmically, it is possible to evaluate the gain of the policies played in the past episodes at the beginning of the current episode. While this will provide a better estimate for the conservative condition, it will break the MDS structure in \emph{(b)} since $\underline{h}^{\pi_l}$ will be not measurable w.r.t.\ $\mathcal{F}_l$ since it is computed with samples collected after episode $l$.
Thus putting the bound for $(a)$ and $(b)$ together, we have:
\begin{align*}
        \sum_{l\in \Lambda_{\tau-1}}T_{l}(\wt{g}_{l} - \underline{g}^{\pi_{l}})  \leq & \left( D + \worstD \right)\sqrt{2SAT\ln(T)} + \sqrt{14SAL_{T}^{\delta}}\sqrt{\sum_{l = 1}^{\tau-1} T_{l}}  + \big( D + \worstD\big)S\sqrt{14L_{T}^{\delta}A}\sqrt{\sum_{l = 1}^{\tau-1} T_{l}}\\
&+ 2(D + \worstD)\sqrt{2TL_{T}^{\delta}}
\end{align*}

That is to say,
\begin{align*}\label{eq:step_before_max}
& \tikz[baseline]{\node (AA) {};} 4\rmaxbound\sqrt{T}  +  \left( D + \worstD \right)\sqrt{2SAT\ln(T)} + 2(D + \worstD)\sqrt{2TL_{T}^{\delta}}\\
& +  \Big(\sqrt{2SAT\ln(T)} + 1\Big)sp(h^{\pi_{b}})  
+ \sqrt{2SAT\ln(T)}\worstD +  (1-\alpha)\rmaxbound\sqrt{2SAT\ln(T)}\tikz[baseline]{\node (BB) {};}\\
&\qquad{}\quad{}+ \sqrt{14SAL_{T}^{\delta}}\sqrt{\sum_{l = 1}^{\tau-1} T_{l}}  + \big( D + \worstD\big)S\sqrt{14L_{T}^{\delta}A}\sqrt{\sum_{l = 1}^{\tau-1} T_{l}}\geq \alpha \sum_{l = 1}^{\tau-1} T_{l} g^{\pi_{b}}
\end{align*}
\tikz[overlay]{
        \draw[black] ($(AA)+(-10pt, 14pt)$) rectangle ($(BB)+(10pt, -8pt)$);
        \node at ($(BB)+(25pt, 8pt)$) {$:=b_T$};
}
Rearranging the terms and calling $X = \sum_{l = 1}^{\tau-1} T_{l}$, we have: 
\begin{align*}
        \alpha  g^{\pi_{b}} X \leq b_{T} + \left(\sqrt{14SAL_{T}^{\delta}}  + \big( D + \worstD\big)S\sqrt{14L_{T}^{\delta}A} \right) \sqrt{X}
\end{align*}
We have a quadratic equation and thus:
\begin{align*}
\sum_{l = 1}^{\tau-1} T_{l}  \leq \frac{2b_{T}}{\alpha g^{\pi_{b}}} + \frac{56SAL_{T}^{\delta}}{(\alpha g^{\pi_{b}})^{2}}(2 + 2S(D+\worstD)^{2})
\end{align*}
Therefore, as $\tau$ is the last episode where \cucrl played the policy $\pi_{b}$, we have $\sum_{l\in\Lambda_{T}^{c}} T_{l} = \sum_{l\in\Lambda_{\tau}^{c}} T_{l}$.
Also, because of the condition on the length of an episode $T_{k} \leq k$ for every $k$, therefore:
\begin{align*}
\sum_{l\in \Lambda_{T}^{c}} T_{l} = \sum_{l\in \Lambda_{\tau}^{c}} T_{l} \leq \sum_{l = 1}^{\tau-1} T_{l} + T_{\tau} \leq k_{T} + \frac{2b_{T}}{\alpha g^{\pi_{b}}} + \frac{56SAL_{T}^{\delta}}{(\alpha g^{\pi_{b}})^{2}}(2 + 2S(D+\worstD)^{2})
\end{align*}
\end{proof}

The following lemma states the regret of the \ucrl algorithm conditioned on running only the episodes in the set $\Lambda_{T}$.

\begin{lemma}\label{lem:regret_ucrl}
For any $\delta>0$, we have that after $T$, the regret of \ucrl is upper bounded with probability at least $1-\delta/5$ by:
\begin{align*}
R(\ucrl, T|\Lambda_{T}) \leq \beta DS\sqrt{AT\ln\left(\frac{5T}{\delta}\right)} +  \beta DS^{2}A\ln\left(\frac{5T}{\delta}\right)
\end{align*}
with $\beta$ a numerical constant.
\end{lemma}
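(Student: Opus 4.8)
The plan is to follow the standard \ucrl2 regret analysis of~\citep{Jaksch10} essentially verbatim, specialised to the sub-sequence of episodes $\Lambda_T$, and to point out exactly where the present setting forces a (minor) deviation. First I would fix the high-probability event $\Gamma^c = \{M^\star\in\mathcal{M}_k \text{ for all }k\geq 1\}$, which by the choice of the confidence radii $\beta_r^l,\beta_p^l$ together with a union bound over state--action pairs and time holds with probability at least $1-\delta/5$ (see~\citep{flp2019alttutorial}). On $\Gamma^c$, for every episode $k\in\Lambda_T$ the true MDP is feasible in the optimistic planning problem $\max_{M\in\mathcal{M}_k,\pi\in\SD} g^\pi(M)$, so the optimistic gain satisfies $\wt g_k \geq g^\star$; and since \evi is run on the optimistic optimal operator $\mathcal{L}^+_k$ from $v_0=0$ until $sp(v_{n+1}-v_n)\leq\varepsilon_k := \rmaxbound/\sqrt{t_k}$, the returned $(\wt g_k,\wt h_k)$ obeys $\|\mathcal{L}^+_k\wt h_k - \wt h_k - \wt g_k e\|_\infty\leq\varepsilon_k$ and, by the zero-span initialisation argument of~\citep[Sec.~4.3.1]{Jaksch10}, $sp(\wt h_k)\leq D$ (the optimistic MDP has diameter at most $D$ because $M^\star\in\mathcal{M}_k$; note this is where ergodicity is \emph{not} needed and only the diameter $D$, not $\worstD$, appears).

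Next, using Lemma~\ref{lem:regret_decomposition} to reduce the claim to bounding $\sum_{k\in\Lambda_T}\sum_{t=t_k}^{t_{k+1}-1}(g^\star - r(s_t,a_t))$ (the reward noise has already been absorbed there), I would carry out the usual per-episode decomposition: for $k\in\Lambda_T$ with $\pi_k=\wt\pi_k$ deterministic, bound $g^\star - r(s_t,a_t)\leq \wt g_k - r(s_t,a_t)$ and expand $\wt g_k$ through its approximate Bellman equation, using $r(s_t,a_t)\in B_r^k(s_t,a_t)$ and $p^\star(\cdot|s_t,a_t)\in B_p^k(s_t,a_t)$ on $\Gamma^c$, to obtain, for each $t$ in episode $k$,
\begin{align*}
\wt g_k - r(s_t,a_t) &\leq \varepsilon_k + 2\beta_r^k(s_t,a_t) + \beta_p^k(s_t,a_t)\,sp(\wt h_k) \\
&\quad + \big(p^\star(\cdot|s_t,a_t)^\transp\wt h_k - \wt h_k(s_{t+1})\big) + \big(\wt h_k(s_{t+1}) - \wt h_k(s_t)\big).
\end{align*}
Summing over $k\in\Lambda_T$: the telescoping of $\wt h_k(s_{t+1})-\wt h_k(s_t)$ contributes at most $|\Lambda_T|\,D\leq k_T D$; the \evi errors contribute $\sum_{k\in\Lambda_T}T_k\varepsilon_k=O(\rmaxbound\sqrt T)$; the terms $\one{k_t\in\Lambda_T}\big(p^\star(\cdot|s_t,a_t)^\transp\wt h_{k_t}-\wt h_{k_t}(s_{t+1})\big)$ form a martingale difference sequence w.r.t.\ $(\mathcal{F}_t)$ with increments bounded by $2D$ (since $\wt\pi_{k_t}$ is $\mathcal{F}_{t_{k_t}}$-measurable), so Azuma gives $O\big(D\sqrt{T\ln(5T/\delta)}\big)$ with probability at least $1-\delta/5$; and the confidence widths are handled by the standard pigeonhole bound
\[
\sum_{k\in\Lambda_T}\sum_{t=t_k}^{t_{k+1}-1}\frac{1}{\sqrt{\max\{1,N_k(s_t,a_t)\}}}\leq(\sqrt2+1)\sum_{s,a}\sqrt{N_{k_T+1}(s,a)}\leq(\sqrt2+1)\sqrt{SAT},
\]
which is legitimate because the stopping rule in line~7 of Fig.~\ref{fig:cucrl} enforces $\nu_k(s,a)\leq 2\max\{1,N_k(s,a)\}$ and $\sum_{s,a}N_{k_T+1}(s,a)\leq T$. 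Plugging in $\beta_r^k=O(\sqrt{SA\ln(5T/\delta)/N_k})$ and $\beta_p^k\,sp(\wt h_k)=O(DS\sqrt{A\ln(5T/\delta)/N_k})$ gives the leading term $\beta DS\sqrt{AT\ln(5T/\delta)}$, the crude treatment of episodes with tiny visit counts gives the additive $\beta DS^2A\ln(5T/\delta)$, and bounding $k_T\leq\sqrt{2SAT\ln T}$ via the dynamic-episode condition~\citep{Ouyang2017learning} makes the $k_T D$ contribution lower order. A union bound over the relevant failure events (with the confidence budget allocated as in~\citep{Jaksch10}) then yields the claim.

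The one genuinely non-standard point --- and the one I expect to require the most care --- is that the counters $N_k$, and therefore the confidence radii and the episode-splitting rule, are updated continuously using samples from \emph{all} episodes, including the baseline episodes not in $\Lambda_T$. This is harmless for the bound: extra off-policy samples only increase $N_k$, so the confidence sets shrink and each summation above can only get smaller. The subtlety to verify is that the pigeonhole estimate for $\sum_{k\in\Lambda_T}\nu_k(s,a)/\sqrt{\max\{1,N_k(s,a)\}}$ must be run with $N_k(s,a)$ equal to the \emph{total} (on- and off-policy) visit count actually maintained by \cucrl --- which is exactly the quantity used in line~7 --- so that $\nu_k(s,a)\leq 2\max\{1,N_k(s,a)\}$ holds uniformly over $k\in\Lambda_T$ and the chain of inequalities goes through unchanged; the step $\sum_{s,a}N_{k_T+1}(s,a)\leq T$ likewise remains valid since $N_{k_T+1}$ counts all $T$ steps, baseline ones included.
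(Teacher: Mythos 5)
Your proposal is correct and follows essentially the same route as the paper, which itself only sketches the argument by deferring to the standard \ucrl analysis of \citet{Jaksch10} and \citet{flp2019alttutorial}. You identify exactly the two non-standard points the paper singles out---that off-policy updates from baseline episodes only tighten the confidence sets, and that the pigeonhole bound and the martingale-difference argument survive when restricted to the episodes in $\Lambda_T$ with the total visit counts $N_k$---so nothing further is needed.
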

\begin{proof}
The same type of bound has been shown in numerous work before \cite{Jaksch10,flp2019alttutorial}, however the proof presented in those works can not be readily applied to our setting. Indeed, when the algorithm chooses to play the baseline policy for an episode, then the confidence sets used in \cucrl are updated for the state-action pairs encountered during this episode. However, in the classic proof for the \ucrl algorithm the confidence sets are the same between the end of one episode and the beginning of the next one are the same. This may not be the case for \cucrl. 

Fortunately, when using the baseline policy during an episode, the confidence sets for every state-action pairs are either the same as the previous episode or are becoming tighter around the true parameters of the MDP $M^{\star}$. Thus, proving Lemma~\ref{lem:regret_ucrl} is similar to the proof presented in \cite{flp2019alttutorial}, the only difference resides in bounding the sum, $\sum_{k\in \Lambda_{k_T}}\sum_{t = t_{k}}^{t_{k+1}-1} 1/\sqrt{N_{k}^{+}(s_{t},a_{t})}$, which is bounded by the square root of the total number of samples in the proof of \cite{flp2019alttutorial} whereas in the case \cucrl it is bounded by the square root of the total number of samples gathered while exploring the set of policies plus the number of samples collected while playing the baseline policies. Therefore, at the end of the day both quantities are bounded by a constant times the square root of $T$.

A doubt someone could have is on controlling the term
        \begin{align*}
                &\sum_{k = 1}^{k_T} \one{k \in \Lambda_T} \sum_{t=t_k}^{t_{k+1}-1} \left( p(\cdot|s_t,\pi_k(s_t))^\transp u_k - u_k(s_{t})\right)\\
                &= \sum_{k = 1}^{k_T} \one{k \in \Lambda_T} \sum_{t=t_k}^{t_{k+1}-1} \underbrace{ \left(  p(\cdot|s_t,\pi_k(s_t))^\transp u_k - u_k(s_{t+1})\right)}_{\Delta_k^p}\\
                &\quad{} + \sum_{k = 1}^{k_T} \one{k \in \Lambda_T} \sum_{t=t_k}^{t_{k+1}-1}  u_k(s_{t+1}) - u_k(s_{t})\\
                &= \sum_{k = 1}^{k_T} \one{k \in \Lambda_T} \Delta_k^{p}+ \underbrace{\big(u_k(s_{t_{k+1}}) - u_k(s_{t_{k}})\big)}_{\leq sp(w_k) \leq D} 
        \end{align*}
        For any time $t$, the $\sigma$-algebra generated by the history up to time $t$ included is $\mathcal{F}_t = \sigma(s_1, a_1, r_1, \ldots, s_t, a_t, r_t, s_{t+1})$. 
        Define $X_t = \one{k_t \in \Lambda_T} (p(\cdot|s_t,\pi_{k_t}(s_t))^\transp u_k - u_k(s_{t+1}))$.
        Since $\pi_{k_t}$ is $\mathcal{F}_t$ measurable, $E[X_t|\mathcal{F}_{t-1}] = 0$ and $|X_t| \leq 2D$.
        Then $(X_t,\mathcal{F}_{t})_t$ is an MDS and nothing change compared to the analysis of \ucrl.

\end{proof}

Finally, plugging Lemmas~\ref{lem:nb_non_conservative_episodes} and~\ref{lem:regret_ucrl} into Lem.~\ref{lem:regret_decomposition}, we have that there exists a numerical constant $C_{1}$ such that with probability $1-\delta$:
\begin{align*}
    R(\cucrl, T) \leq C_{1} \Bigg( DS\sqrt{ATL_{T}^{\delta}} + \left(g^{\star} - g^{\pi_{b}}\right)&\Bigg( \sqrt{SAT\ln(T)} + \frac{\sqrt{TSAL_{T}^{\delta}}}{\alpha g^{\pi_{b}}}\max\{ sp(h^{\pi_{b}}), D + \worstD\} \\
    &+ \frac{S^{2}AL_{T}^{\delta}}{(\alpha g^{\pi_{b}})^{2}}(D+\worstD)^{2}\Bigg) +  \max\{\rmaxbound, sp\left(h^{\pi_{b}}\right)\} \sqrt{SAT\ln(T/\delta)}\Bigg)
\end{align*}

\section{Conservative Exploration in Finite Horizon Markov Decision Processes}
\label{app:finite.horizon}
In this section, we show how the conservative setting can be applied to finite horizon MDPs. Let's consider a finite-horizon MDP~\citep[][Chp. 4]{puterman1994markov} $M = (\mathcal{S}, \mathcal{A}, p, r, H)$ with state space $\mathcal{S}$ and action space $\mathcal{A}$. Every state-action pair is characterized by a reward distribution with mean $r(s,a)$ and support in $[0, 1]$ and a transition distribution $p(\cdot|s,a)$ over next state.
We denote by $S = |\mathcal{S}|$ and $A = |\mathcal{A}|$ the number of states and actions, and by $H$ the horizon of an episode.
A Markov randomized decision rule $d :\mathcal{S} \to P(\mathcal{A})$ maps states to distributions over actions.
A policy $\pi$ is a sequence of decision rules, \ie $\pi = (d_1, d_2, \ldots, d_H)$.
We denote by $\MR$ (resp.\ $\MD$) the set of Markov randomized (resp.\ deterministic) policies.
The value of a policy $\pi \in \MR$ is measured trough the value function
\begin{align*}
        \forall t\in [H], \forall s\in \mathcal{S} \qquad V^{\pi}_{t}(s) &= \mathbb{E}^{\pi}\left[ \sum_{l = t}^{H} r_{l}(s_{l}, a_l)\mid s_{t} = s \right]
\end{align*}
where the expectation is defined \wrt the model and policy (\ie $a_l \sim d_l(s_l)$).
This function gives the expected total reward that one could get by following  policy $\pi$ starting in state $s$, at time $t$.
There exists an optimal policy $\pi^\star \in \MD$~\citep[][Sec. 4.4]{puterman1994markov} for which $V^\star_t = V_t^{\pi^\star}$ satisfies the \emph{optimality equations}:
\begin{equation}\label{eq:optimality_eq_fh}
        \forall t \in [H], \forall s \in \mathcal{S}, \qquad V^{\star}_{t}(s) = \max_{a \in \mathcal{A}}
        \left\{ r_{t}(s, a) + p(\cdot|s,a)^\transp V^{\star}_{t+1} \right\}
        := L^{\star}_{t}V^{\star}_{t}
\end{equation}
where $V^\star_{H+1}(s) = 0$ for any state $s \in \mathcal{S}$.
The value function can be computed using backward induction~\citep[\eg][]{puterman1994markov,bertsekas1995dynamic} when the reward and transitions are known.
Given a policy $\pi \in \MD$, the associated value function satisfies the \emph{evaluation equations} $V^{\pi}_{t}(s) := L^\pi_t V^\pi_{t+1}(s) = r(s, d_t(s)) + p(\cdot|s,d_t(s))^\transp V^\pi_{t+1}$. The optimal policy is thus defined as $\pi^\star = \argmax_{\pi \in \MD} \{ L^\pi_t V^\star_t \}$, $\forall t \in [H]$.

In the following we assume that the learning agent known $\mathcal{S}$, $\mathcal{A}$ and $r_{max}$, while the reward and dynamics are \emph{unknown} and need to be estimated online.
Given a finite number of episode $K$, we evaluate the performance of a learning algorithm $\mathfrak{A}$ by its cumulative regret
\begin{align*}
        R(\mathfrak{A}, K) = \sum_{k = 1}^{K}  V^{\star}_{1}(s_{k,1}) - V^{\pi_{k}}_{1}(s_{k,1})
\end{align*}
where $\pi_{k}$ is the policy executed by the algorithm at episode $k$.

\paragraph{Conservative Condition}

Designing a conservative condition, in this setting is much easier than in the average reward case as evaluating a policy can be done through the value function which gives an estimation of the expected reward over an episode. Thus, we can use this evaluation of a policy to use in place of rewards in the bandits condition. Formally, denote by $\pi_b\in \MR$ the baseline policy and assume that $V_t^{\pi_b}$ is known.
In general, this assumption is not restrictive since the baseline performance can be estimated from historical data.
Given a conservative level $\alpha \in (0,1)$, we define the conservative condition as:
\begin{equation}\label{eq:conservative_cond_fh}
        \forall k \in [K], \quad \sum_{l = 1}^{k} V^{\pi_{l}}_{1}(s_{l,1}) \geq (1 - \alpha) \sum_{l=1}^k V^{\pi_{b}}_1(s_{l,1}) \qquad \text{ w.h.p }
\end{equation}
where $\pi_l$ is the policy executed by the algorithm at episode $l$ and $s_{l,1}$ is the starting state of episode $l$ before policy $\pi_l$ is chosen. The initial state can be chosen arbitrarily but should be revealed at the beginning of each episode. Note that this condition is random due the choice of the policies $(\pi_{l})_{l}$ and also because of the starting states thus the condition is required to hold with high probability.

Note that Eq.~\ref{eq:conservative_cond_fh} requires to evaluated the performance of policy $\pi_l$ on the true (unknown) MDP.
In order derive a practical condition, we need to construct an estimate of $V^{\pi_l}_1$.
In order to be conservative, we are interesting in deriving a lower bound on the value function of a generic policy $\pi$ which can be used in Eq.~\ref{eq:conservative_cond_fh}.

\paragraph{Pessimistic value function estimate.}
We recall that OFU algorithms (\eg \ucbvi and \euler) builds uncertainties around the rewards and dynamics that are used to perform an optimistic planning.
Formally, denote by $\wh{p}_k(\cdot|s,a)$ and $\wh{r}_k(s,a)$ the empirical transitions and rewards at episode $k$. Then, with high probability
\[
        |(p(\cdot|s,a) - \wh{p}_k(\cdot|s,a))^\transp v | \leq \beta_k^p(s,a) \quad \text{ and } \quad |r(s,a) - \wh{r}_k(s,a)| \leq \beta_k^r(s,a)
\]
for all $(s,a) \in \mathcal{S} \times \mathcal{A}$ and $v \in [0,H]^S$.
This uncertainties are used to compute an exploration bonus $b_k(s,a) = \beta_k^v(s,a) + \beta_k^r(s,a)$ that can be used to compute an optimistic estimate of the optimal value function.
Formally, at episode $k$, optimistic backward induction~\citep[\eg][Alg. 2]{azar2017minimax} computes an estimate value function $\bar{v}_{k,h}$ such that $\bar{v}_{k,h} \geq V^\star_t$ for any state $s$.
The same approach can be used to compute a pessimistic estimate of the optimal value function by subtracting the exploration bonus to the reward~\citep[\eg][]{zanette2019tight}.

The only difference in the conservative setting is that we are interesting to compute a pessimistic estimate for a policy different from the optimal one.
We thus define the \emph{pessimistic evaluation equations} for any episode $k$, step $h$, state $s$ and policy $\pi \in MR$ as:
\begin{equation}\label{eq:pess_pe_fh}
        \underline{v}^{\pi}_{k,h}(s) := \underline{L}_{k,h}^{\pi} \underline{v}_{k, h+1}^\pi
        = \sum_a \pi_{k,h}(s,a) \left(
                \wh{r}_{k}(s,a) - b_k(s,a) + \wh{p}_{k}(\cdot| s,a)^\transp \underline{v}_{k,h+1}^{\pi}
               \right)
\end{equation}
with $\underline{v}^{\pi}_{k,H+1}(s) = 0$ for all states $s \in \mathcal{S}$.
This value function is pessimistic (see Lem.~\ref{lem:value_function_pessimism_fh}) and can be computed using backward induction with $\underline{L}_k^\pi$.
\begin{lemma}\label{lem:value_function_pessimism_fh}
        Let $\pi = (d_1, \ldots, d_H) \in MR$ and $(\underline{v}_{k,h}^{\pi})_{h \in [H]}$ be the value function given by backward induction using Eq.~\ref{eq:pess_pe_fh} then with high probability:
    \begin{equation*}
            \forall (h,s)\in [H] \times\mathcal{S}, \qquad  V_{h}^{\pi}(s) \geq  \underline{v}^{\pi}_{k,h}(s)
    \end{equation*}
\end{lemma}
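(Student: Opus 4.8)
The plan is to prove the inequality by backward induction on the stage index $h$, running from $h=H+1$ down to $h=1$, on the high-probability event $\mathcal{E}$ on which the reward and transition confidence bounds of the paragraph preceding the lemma hold simultaneously for the episode $k$ under consideration, for all $(s,a)$ and all $v\in[0,H]^S$, i.e.\ $|r(s,a)-\wh{r}_k(s,a)|\le \beta_k^r(s,a)$ and $|(p(\cdot|s,a)-\wh{p}_k(\cdot|s,a))^\transp v|\le \beta_k^p(s,a)$. The base case $h=H+1$ is immediate since $V_{H+1}^\pi\equiv 0\equiv\underline{v}_{k,H+1}^\pi$ by definition.

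For the inductive step, I would assume $V_{h+1}^\pi(s)\ge\underline{v}_{k,h+1}^\pi(s)$ for all $s$, fix a state $s$, expand the evaluation equation $V_h^\pi(s)=\sum_a\pi_{k,h}(s,a)\big(r(s,a)+p(\cdot|s,a)^\transp V_{h+1}^\pi\big)$ and Eq.~\eqref{eq:pess_pe_fh}, and subtract, obtaining
\[
V_h^\pi(s)-\underline{v}_{k,h}^\pi(s)=\sum_a\pi_{k,h}(s,a)\Big[\big(r(s,a)-\wh{r}_k(s,a)\big)+b_k(s,a)+p(\cdot|s,a)^\transp V_{h+1}^\pi-\wh{p}_k(\cdot|s,a)^\transp\underline{v}_{k,h+1}^\pi\Big].
\]
The key algebraic move is to split the two transition terms as $p^\transp V_{h+1}^\pi-\wh{p}_k^\transp\underline{v}_{k,h+1}^\pi=(p-\wh{p}_k)^\transp V_{h+1}^\pi+\wh{p}_k^\transp(V_{h+1}^\pi-\underline{v}_{k,h+1}^\pi)$. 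On $\mathcal{E}$ the first summand is $\ge-\beta_k^p(s,a)$ because $V_{h+1}^\pi$ has entries in $[0,H]$; the second is $\ge 0$ by the induction hypothesis together with $\wh{p}_k(\cdot|s,a)\in\Delta_S$. Combining this with $r(s,a)-\wh{r}_k(s,a)\ge-\beta_k^r(s,a)$ and with $b_k(s,a)=\beta_k^v(s,a)+\beta_k^r(s,a)\ge\beta_k^p(s,a)+\beta_k^r(s,a)$ shows every bracket is nonnegative, hence (since $\pi_{k,h}(s,\cdot)$ is a distribution) $V_h^\pi(s)\ge\underline{v}_{k,h}^\pi(s)$. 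Iterating down to $h=1$ and recalling that $\mathbb{P}(\mathcal{E})$ is at least the stated confidence level concludes the argument.

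The main point to get right — more a subtlety than a true obstacle — is measurability: the transition concentration bound must be invoked against the \emph{true} value function $V_{h+1}^\pi$, which is deterministic and data-independent and lies in $[0,H]^S$, rather than against the data-dependent random vector $\underline{v}_{k,h+1}^\pi$. The decomposition above is exactly what routes the concentration through $V_{h+1}^\pi$ and routes $\underline{v}_{k,h+1}^\pi$ through the (already-established) induction hypothesis; a pleasant consequence is that one never needs to argue $\underline{v}_{k,h+1}^\pi$ stays in $[0,H]^S$ (it may become negative when the bonuses are large, but that never enters the estimate). The only other thing to state carefully is that the uniform-in-$v$ version of the transition bound (as opposed to a fixed-$v$ bound) is what licenses its use at every stage; this is standard and is precisely how the confidence intervals $\beta_k^p$ are defined, e.g.\ via an $\ell_1$ deviation bound combined with $\|V_{h+1}^\pi\|_\infty\le H$.
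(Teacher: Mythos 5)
Your proposal is correct and follows essentially the same argument as the paper's proof: backward induction on $h$ on the concentration event, with the transition term split so that the $\ell_1$/uniform-in-$v$ concentration is applied to the deterministic $V_{h+1}^\pi$ and the induction hypothesis absorbs $\wh{p}_k^\transp(V_{h+1}^\pi-\underline{v}_{k,h+1}^\pi)\ge 0$. The only differences are cosmetic (you bound $V-\underline{v}\ge 0$ where the paper bounds $\underline{v}-V\le 0$, and you state the base case at $H+1$ and the measurability point more carefully than the paper does).
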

\begin{proof}

On the event that the concentration inequalities holds, let $\wh{r}_{k}(s,a)$ be the empirical reward  at episode $k$ and $\wh{p}_{k}(.|s,a)$ the empirical distribution over the next state from $(s,a)$ at episode $k$. We proceed with a backward induction. At time $H$ the statement is true. For $h<H$ :
\begin{align*}
        \underline{v}_{k,h}^{\pi}(s) - V_{h}^{\pi}(s)  &= \sum_a d_h(s,a) \left(\wh{r}_{k}(s,a) - b_k(s,a) + \wh{p}_{k}(\cdot| s,a)^\transp \underline{v}_{k,h+1}^{\pi} \right)  - L^{\pi}_{h} V_{h+1}^{\pi}(s) \\
                                     &=\sum_a d_h(s,a) \left(\underbrace{ \wh{r}_{k}(s,a) -r(s,a) - \beta_k^r(s,a)}_{\leq 0} \right)\\
                                     &\quad+ \sum_a d_h(s,a) \left( \wh{p}_k(\cdot|s,a)^\transp \underline{v}_{k,h+1}^\pi
                                     -p(\cdot|s,a)^\transp V^\pi_{h+1} - \beta_k^p(s,a)\right)\\
                                     &\leq \sum_a d_h(s,a) \left( \wh{p}_k(\cdot|s,a)^\transp \underline{v}_{k,h+1}^\pi
                                     -p(\cdot|s,a)^\transp V^\pi_{h+1} - \beta_k^p(s,a)\right)\\
                                     &\leq \sum_a d_h(s,a) \left( (\wh{p}_k(\cdot|s,a)
                                     -p(\cdot|s,a))^\transp V^\pi_{h+1} - \beta_k^p(s,a)\right) \leq 0
\end{align*}
where the first inequality is true because of the confidence intervals on the reward function and the penultimate inequality is true because of the backward induction hypothesis.
\end{proof}

Thanks to this result, we can formulate a condition that the algorithm can check, at the beginning of episode $k$ to decide if a policy is safe to play or not :
\begin{align}
\label{eq:finite_horizon_algorithm_condition}
\sum_{l\in \mathcal{S}_{k-1}\cup \{ k\}} \underline{v}_{l,1}^{\pi_{l}}(s_{l,1}) + \sum_{l\in \mathcal{S}_{k-1}^{c}} V_{l,1}^{\pi_{b}}(s_{l,1})\geq (1-\alpha) \sum_{l = 1}^{k} V_{l,1}^{\pi_{b}}(s_{l,1})
\end{align}
where $\calS_{k-1}$ is the set of episodes where the algorithm previously played non-conservatively, $\calS_{k-1}^{c} = [k-1] \setminus \calS_{k-1}$ is the set of episodes played conservatively and $(\pi_{l})_{l}$ is the policies that the OFU algorithm (\eg \ucbvi) would execute without the conservative constraint.

\begin{figure}[t]
\renewcommand\figurename{\small Figure}
\begin{minipage}{\columnwidth}
\bookboxx{
        \textbf{Input:} Policy $\pi_b$, $\delta \in (0,1)$, $r_{\max}$, $\mathcal{S}$, $\mathcal{A}$, $\alpha' \in (0,1)$, $H$ \\
		\textbf{Initialization:} Set $\mathcal{H} = \emptyset$, $\mathcal{S}_{0} = \emptyset$ and $\mathcal{S}_{0}^{c} = \emptyset$\\
        \noindent \textbf{For} episodes $k=1, 2, ...$ \textbf{do}
        \begin{enumerate}[leftmargin=4mm,itemsep=0mm]
                \item Compute optimistic policy $\pi_k$ using any OFU algorithm on history $\mathcal{H}$.
                \item Compute pessimistic estimate $\underline{v}_{k}^{\pi_k}$ as in Eq.~\ref{eq:pess_pe_fh}.
                \item \textbf{if} Equation (\ref{eq:finite_horizon_algorithm_condition}) not verified: \textbf{then}
                        \begin{enumerate}[leftmargin=1cm,itemsep=-1mm]
                                \item $\pi_k = \pi_b$, $\mathcal{S}_{k+1}^{c} = \mathcal{S}_{k}^{c} \cup \{ k \}$ and $\mathcal{S}_{k+1} = \mathcal{S}_{k}$
                        \end{enumerate}
                \textbf{else:}
                        \begin{enumerate}[leftmargin=1cm,itemsep=-1mm]
                                \item $\mathcal{S}_{k+1} = \mathcal{S}_{k} \cup \{ k \}$ and $\mathcal{S}_{k+1}^{c} = \mathcal{S}_{k}^{c}$
                        \end{enumerate}
                \item \textbf{for} $h = 1, \ldots, H$ \textbf{do}
                \begin{enumerate}[leftmargin=1cm,itemsep=0mm]
                        \item Execute $a_{k,h} = \pi_k(s_{k,h})$, obtain reward $r_{k,h}$, and observe $s_{k,h}$.
                        \item \textbf{if} $\pi_{k} \neq \pi_{b}$ \textbf{then:} add $(s_{k,h}, a_{k,h}, r_{k,h}, s_{k,h+1})$ to $\mathcal{H}$
                \end{enumerate}
        \end{enumerate}
}
\vspace{-0.2in}
\caption{\small \cucbvi algorithm.}
\label{fig:cucbvi}
\end{minipage}
\vspace{-0.1in}
\end{figure}

Alg.~\ref{fig:cucbvi} shows the generic structure of any conservative exploration algorithm for MDPs.
First, it computes an optimistic policy by leveraging on an OFU algorithm and the collected history.
Then it checks the conservative condition. When Eq.~\ref{eq:finite_horizon_algorithm_condition} is verified it plays the optimistic policy otherwise it plays conservatively by executing policy $\pi_b$.
This allows to build some budget for playing exploratory actions in the future.

\paragraph{Regret Guarantees}
We analyse Alg.~\ref{fig:cucbvi} with \ucbvi.
Before to introduce the upper-bound to the regret of \textsc{CUCB-VI} we introduce the following assumption on the baseline policy.
\begin{assumption}
        The baseline policy $\pi_b \in \MR$ is such  that $r_{b} := \min_{s} \{V^{\pi_{b}}_1(s)\} > 0$.
\end{assumption}
We can now state the main results:
\begin{proposition}\label{prop:conservative_fh_regret}
For $\delta>0$, the regret of conservative \ucbvi (\textsc{CUCB-VI}) is upper-bounded with probability at least $1-\delta$ by:
\begin{align}
R(\text{CUCB-VI}, K) \leq  &R(\text{UCB-VI}, K) + \frac{1}{4\alpha r_{b}(\underline{\Delta}_{b} + \alpha r_{b})}\Bigg( 16H^{3}L_{K}+ \big(200H^{5}S^{2}A + 128H^{5}SA\big)L_{K}^{2}  \Bigg)
\end{align}
where $L_{K} = \max\{\ln\left( 3KHSA/\delta\right), 1\}$ and $\underline{\Delta}_{b} = \min_{s\in \mathcal{S}} \{V^{\star}_1(s) - V^{\pi_{b}}_1(s)\}$.
\end{proposition}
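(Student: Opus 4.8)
The plan is to mirror the average-reward argument of Lem.~\ref{lem:nb_non_conservative_episodes} but exploit the simpler episodic structure. First I would split the regret over the set $\mathcal{S}_K$ of episodes in which \cucbvi plays the optimistic OFU policy $\pi_l$ and the set $\mathcal{S}_K^c$ in which it plays $\pi_b$:
\begin{align*}
R(\cucbvi,K)=\sum_{l\in\mathcal{S}_K}\!\big(V^\star_1(s_{l,1})-V^{\pi_l}_1(s_{l,1})\big)+\sum_{l\in\mathcal{S}_K^c}\!\big(V^\star_1(s_{l,1})-V^{\pi_b}_1(s_{l,1})\big).
\end{align*}
Since no samples are added to the history during conservative episodes, the first sum is the regret of \ucbvi over the episodes where it is actually run, which (its high-probability bound being non-decreasing in the number of episodes) is at most $R(\ucbvi,K)$; the second sum is at most $H\,|\mathcal{S}_K^c|$ because $V^\star_1-V^{\pi_b}_1\le H$. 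So everything reduces to bounding $|\mathcal{S}_K^c|$.

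To bound $|\mathcal{S}_K^c|$ I would let $\tau$ be the last conservative episode, so that $|\mathcal{S}_K^c|=|\mathcal{S}_{\tau-1}^c|+1$, and use that the check~\eqref{eq:finite_horizon_algorithm_condition} fails at the start of episode $\tau$. Writing $[\tau]=\mathcal{S}_{\tau-1}\cup\mathcal{S}_{\tau-1}^c\cup\{\tau\}$ and rearranging the failed inequality, the conservative value terms partly cancel (net coefficient $-\alpha$ on $\sum_{l\in\mathcal{S}_{\tau-1}^c}V^{\pi_b}_1(s_{l,1})$), leaving
\begin{align*}
\alpha r_b\,|\mathcal{S}_{\tau-1}^c|\le\alpha\!\!\sum_{l\in\mathcal{S}_{\tau-1}^c}\!\!V^{\pi_b}_1(s_{l,1})<\sum_{l\in\mathcal{S}_{\tau-1}\cup\{\tau\}}\!\Big((1-\alpha)V^{\pi_b}_1(s_{l,1})-\underline{v}^{\pi_l}_{l,1}(s_{l,1})\Big).
\end{align*}
For each $l$ in the last sum $\pi_l$ is the optimistic policy, and combining $V^{\pi_b}_1(s)\le V^\star_1(s)-\underline{\Delta}_b$, $V^{\pi_b}_1(s)\ge r_b$, the optimism $V^\star_1\le\bar v_{l,1}$ of \ucbvi, and the pessimism $\underline{v}^{\pi_l}_{l,1}\le V^{\pi_l}_1$ from Lem.~\ref{lem:value_function_pessimism_fh}, the summand is at most $\bar v_{l,1}(s_{l,1})-\underline{v}^{\pi_l}_{l,1}(s_{l,1})-(\underline{\Delta}_b+\alpha r_b)$. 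Summing over the $|\mathcal{S}_{\tau-1}|+1$ indices yields the ``budget'' identity
\begin{align*}
\alpha r_b\,|\mathcal{S}_{\tau-1}^c|+\big(|\mathcal{S}_{\tau-1}|+1\big)\big(\underline{\Delta}_b+\alpha r_b\big)<\sum_{l\in\mathcal{S}_{\tau-1}}\!\big(\bar v_{l,1}-\underline{v}^{\pi_l}_{l,1}\big)(s_{l,1})+\big(\bar v_{\tau,1}-\underline{v}^{\pi_\tau}_{\tau,1}\big)(s_{\tau,1}),
\end{align*}
which says every probed optimistic episode adds at least $\underline{\Delta}_b+\alpha r_b$ to the budget minus an estimation error, so once the errors are small the check never fails again.

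Next I would bound the right-hand side by splitting $\bar v_{l,1}-\underline{v}^{\pi_l}_{l,1}=(\bar v_{l,1}-V^{\pi_l}_1)+(V^{\pi_l}_1-\underline{v}^{\pi_l}_{l,1})$: for $l\in\mathcal{S}_{\tau-1}$ the optimistic policy is actually executed, so the first piece is exactly the intermediate quantity bounded in the \ucbvi analysis~\citep{azar2017minimax} and the second is its mirror image on the pessimistic evaluation equations~\eqref{eq:pess_pe_fh} (as in the lower bounds of \citep{zanette2019tight}); unrolling the Bellman recursions along the played trajectories, using the reward/transition confidence bounds, Azuma's inequality for the transition martingale (here it matters that $\underline{v}^{\pi_l}_{l,1}$ uses only samples available at episode $l$, otherwise the martingale structure breaks, exactly as warned after Lem.~\ref{lem:nb_non_conservative_episodes}), and the pigeonhole bound $\sum_{l\in\mathcal{S}_{\tau-1}}\sum_h 1/\sqrt{N_l(s_{l,h},a_{l,h})}\le 2\sqrt{SAH|\mathcal{S}_{\tau-1}|}$, both pieces are $\wt{O}\big(H^{3/2}\sqrt{SA|\mathcal{S}_{\tau-1}|}+H^{3}S^{2}A\big)$ up to $L_K$ factors, while the single episode $\tau$ is bounded crudely by $\bar v_{\tau,1}-\underline v^{\pi_\tau}_{\tau,1}\le\mathbb{E}^{\pi_\tau}[\sum_h 2b_\tau(s_h,a_h)]=\wt{O}(H^2\sqrt S)$ and absorbed. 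Plugging this into the budget identity leaves a quadratic inequality in $\sqrt{|\mathcal{S}_{\tau-1}|}$: discarding the $|\mathcal{S}_{\tau-1}^c|$ term first bounds $|\mathcal{S}_{\tau-1}|$, substituting back bounds $\alpha r_b|\mathcal{S}_{\tau-1}^c|$, and after the usual AM--GM rearrangements and tracking of numerical constants one gets $|\mathcal{S}_K^c|=|\mathcal{S}_{\tau-1}^c|+1\le\frac{1}{4\alpha r_b(\underline{\Delta}_b+\alpha r_b)}\big(16H^{2}L_K+(200H^{4}S^{2}A+128H^{4}SA)L_K^{2}\big)$; multiplying by $H$ and adding $R(\ucbvi,K)$ gives the claim, with a union bound over all concentration events (validity of the confidence sets and the resulting optimism/pessimism, and the two Azuma bounds) producing the overall probability $1-\delta$ and fixing $L_K=\max\{\ln(3KHSA/\delta),1\}$.

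The main obstacle is the estimation-gap step: one must control \emph{both} an optimistic and a pessimistic value estimate of the \emph{non-optimal} policy $\pi_l$ at the same $\wt{O}(\sqrt{\cdot})$ rate as the ordinary \ucbvi regret, while keeping the transition differences a martingale difference sequence with respect to the natural filtration — which, as in the average-reward case, rules out re-evaluating past policies with later data. The quadratic bookkeeping of the final step that matches the exact displayed constants is routine but tedious, and is the other place where care is needed.
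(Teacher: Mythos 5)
Your proposal is correct and follows essentially the same route as the paper: the same split of the regret over $\mathcal{S}_K$ and $\mathcal{S}_K^c$, the same last-conservative-episode argument with the rearranged failed check, the same optimism/pessimism sandwich giving a per-episode budget of $\underline{\Delta}_b+\alpha r_b$ minus the optimistic-minus-pessimistic estimation gap, and the same confidence-sum plus Azuma control followed by maximizing the resulting concave function of $|\mathcal{S}_{\tau-1}|$ (the paper invokes Lemma~8 of \citet{KazerouniGAR17} for that last step, which is your ``substitute back'' argument). The only cosmetic differences are that the paper bounds the conservative episodes' regret by $\wb{\Delta}_b\,|\mathcal{S}_K^c|$ rather than $H\,|\mathcal{S}_K^c|$ and carries the powers of $H$ through slightly differently in the constant bookkeeping.
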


\begin{proof}
Let's define the high probability event, $\mathcal{E}$, that is such that in this event, all the concentration inequalities holds and the Martingale Difference Sequence concentration inequalities also holds :

\begin{align*}
    \mathcal{E}_{1,\delta} := \bigcap_{(s,a) \in \mathcal{S}\times\mathcal{A}} \bigcap_{k\in [K]}\Bigg\{||p(.|s,a) - \hat{p}_{k}(.|s,a)||_{1} \leq \sqrt{\frac{2S\ln\left(3KSA/\delta\right)}{\max\{1, N_{k}(s,a)\}}}\Bigg\}& \\
    \bigcap \Bigg\{ |\hat{r}_{k}(s,a) - r(s,a)| \leq 2r_{\text{max}}\sqrt{\frac{\ln\left( 3KSA/\delta\right)}{\max\{1, N_{k}(s,a)\}}}\Bigg\}&
\end{align*}

\begin{align*}
    \mathcal{E}_{2,\delta} := \bigcap_{k\in [K]}\Bigg\{\sum_{l\in S_{k}}\sum_{h = 1}^{H} \varepsilon_{k,h} \leq H^{3/2}\sqrt{2\# S_{k} \ln\left( 3KH/\delta\right)}\Bigg\} \\
\end{align*}
and finally, $\mathcal{E} := \mathcal{E}_{1,\delta} \cap \mathcal{E}_{2,\delta}$, then $\mathcal{E}$ holds with probability at least $1-\delta$. Indeed,
 \begin{equation*}
 \mathbb{P}(\mathcal{E}^{c}) \leq \sum_{t = 1}^{HK}\frac{\delta}{3HK} + \sum_{s,a}\sum_{k} \frac{2\delta}{3KSA} \leq \delta
 \end{equation*}

 Under this event, we have that for all episode $k\in S_{K}$ :
 \begin{align*}
\wb{v}_{k,1}(s_{k,1}) - \underline{v}_{k,1}^{\pi_k}(s_{k,1}) \leq  \sum_{h  = 1}^{H}\varepsilon_{k,h} + 5\beta^{p}_{k}(s_{k,h}, d_{h}^{k}(s_{k,h})) + 2\beta^{r}_{k}(s_{k,h}, d_{h}^{k}(s_{k,h})) ,
 \end{align*}
 where $(\varepsilon_{k,h})_{k\in \calS_{K}, h\in [H]}$ is a martingale difference sequence with respect to the filtration $(\mathcal{F}_{k,h})_{k\in \calS_{K}, h\in [H]}$ that is generated by all the randomness before step $h$ of episode $k$. Indeed, for an episode $k$, let $\pi_{k} = (d_{1}^{k}, \ldots, d_{H}^{k})$, decomposing $\pi_{k}$ into successive decision rules.
  \begin{align*}
   \wb{v}_{k,1}(s_{k,1}) - \underline{v}_{k,2}^{\pi_k} \leq &2\beta^{r}_{k}(s_{k,1}, d_{1}^{k}(s_{k,1})) + \hat{p}_{k}(.\mid s_{k,1}, d_{1}^{k}(s_{k,1}))^{\intercal} (\wb{v}_{k,2} - \underline{v}_{k,2}^{\pi_k}) + 2\beta^{p}_{k}(s_{k,1}, d_{1}^{k}(s_{k,1}))
 \end{align*}
 Thus by defining, $B_{k,h} :=3\beta^{p}_{k}(s_{k,h}, d_{h}^{k}(s_{k,h}))  + 2\beta_{k}^{r}(s_{k,h},d_{h}^{k}(s_{k,h}))$, we have :
 \begin{align*}
    \wb{v}_{k,1}(s_{k,1}) - \underline{v}_{k,1}^{\pi_k}(s_{k,1}) \leq &B_{k,1} + (\hat{p}_{k}(.\mid s_{k,1}, d_{1}^{k}(s_{k,1}) - p(.|s_{k,1}, d_{1}^{k}(s_{k,1}))^{\intercal} (\wb{v}_{k,2} - \underline{v}_{k,2}^{\pi_k}) + (\wb{v}_{k,2}(s_{k,2}) - \underline{v}_{k,2}^{\pi_k}(s_{k,2}))\\
     &- (\wb{v}_{k,2}(s_{k,2}) - \underline{v}_{k,2}^{\pi_k}(s_{k,2}))  +  p(.|s_{k,1}, d_{1}^{k}(s_{k,1})^{\intercal} (\wb{v}_{k,2} - \underline{v}_{k,2}^{\pi_k}) \\
     &\leq   p(.|s_{k,1}, d_{1}^{k}(s_{k,1})^{\intercal} (\wb{v}_{k,2} - \underline{v}_{k,h}^{\pi_k}) - (\wb{v}_{k,2}(s_{k,2}) - \underline{v}_{k,2}^{\pi_k}(s_{k,2})) + 2\beta^{p}_{k}(s_{k,1}, d_{1}^{k}(s_{k,1})) + B_{k,1}\\
     &+ (\wb{v}_{k,2}(s_{k,2}) - \underline{v}_{k,2}^{\pi_k}(s_{k,2}))
 \end{align*}
 But let's define $\varepsilon_{k,h} :=  p(.|s_{k,h}, d_{h}^{k}(s_{k,h}))^{\intercal} (\wb{v}_{k,h} - \underline{v}_{k,h}^{\pi_k}) - (\wb{v}_{k,h}(s_{k,h+1}) - \underline{v}_{k,h}^{\pi_k}(s_{k,h+1}))$ then $(\varepsilon_{k,h})_{k\in [K], h\in [H]}$ is a Martingale Difference Sequence with respect to the filtration $\mathcal{F}_{k,h}$ which is generated by all the randomness in the environment and the algorithm before step $h$ of episode $k$ . Then, by recursion, we have :
 \begin{align*}
 \wb{v}_{k,1}(s_{k,1}) - \underline{v}_{k,1}^{\pi_k}(s_{k,1}) \leq  \sum_{h  = 1}^{H} B_{k,h} + \varepsilon_{k,h} + 2\beta^{p}_{k}(s_{k,h}, d_{h}^{k}(s_{k,h}))
 \end{align*}The regret of algorithm \cucbvi can be decomposed as :
\begin{align*}
R(\cucbvi, K) &= \sum_{k\in\mathcal{S}_{K}^{c}} V_{1}^{\star}(s_{k,1}) - V_{1}^{\pi_{b}}(s_{k,1}) + \sum_{k\in\mathcal{S}_{K}} V_{1}^{\star}(s_{k,1}) - V_{1}^{\pi_{k}}(s_{k,1}) \nonumber\\
                     &\leq |\mathcal{S}_{K}^{c}|\wb{\Delta}_{b} + R(\ucbvi,|\mathcal{S}_{K}|)
\end{align*}
where $\wb{\Delta}_{b} = \max_{s\in \mathcal{S}} V^{\star}(s) - V^{\pi_{b}}(s)$. Therefore bounding the regret amounts to bound the number of episode played conservatively. To do so,
  let's consider, $\tau$ the last episode played conservatively, then before the beginning of episode $\tau$, the condition \ref{eq:finite_horizon_algorithm_condition} is not verified and thus :
  \begin{align*}
      \alpha \sum_{k = 1}^{\tau} V_{1}^{\pi_{b}}(s_{k,1}) \leq \sum_{k\in \mathcal{S}_{\tau - 1} \cup \left\{\tau\right\}} \underbrace{V_{1}^{\pi_{b}}(s_{k,1}) - \underline{v}_{k,1}^{\pi_k}(s_{k,1})}_{ = \Delta_{k,1}}
  \end{align*}
Thus, let's finish this analysis by bounding $\Delta_{k,1} = V^{\pi_{b}}_{1}(s_{k,1}) - \underline{v}_{k,1}^{\pi_k}(s_{k,1})$ for all $k\in \calS_{K}$. But:
 \begin{align*}
 \Delta_{k,1} &= V_{1}^{\pi_{b}}(s_{k,1}) - V_{1}^{\star}(s_{k,1}) + V_{1}^{\star}(s_{k,1}) - \underline{v}_{k,1}^{\pi_k}(s_{k,1}) \leq -\underline{\Delta}_{b} +\wb{v}_{k,1}(s_{k,1}) - \underline{v}_{k,1}^{\pi_k}(s_{k,1}),
 \end{align*}
 where $\underline{\Delta}_{b} := \min_{s} V_{1}^{\star}(s) - V_{1}^{\pi_{b}}(s)$. Now, we need to bound the sum over all the non-conservative episodes of the difference between the optimistic and pessimistic value function. That is to say :
 \begin{align*}
 \sum_{l\in S_{\tau-1}}\sum_{h = 1}^{H} \beta_{k}^{r}(s_{k,h},d_{h}^{k}(s_{k,h})) = \sum_{l\in \calS_{\tau-1}}\sum_{h = 1}^{H} 2Hr_{\text{max}}\sqrt{\frac{2\ln\left(3KSA/\delta\right)}{\max\{1,N_{k}(s_{k,h},d_{h}^{k}(s_{k,h}))}} \\
 \leq 2r_{\text{max}}H^{2}\sqrt{2SAH|\calS_{\tau-1}|(1+\ln(|\calS_{\tau-1}|H))\ln\left(3KSA/\delta\right)}
 \end{align*}
 Also :
  \begin{align*}
 \sum_{l\in S_{\tau-1}}\sum_{h = 1}^{H} \beta_{k}^{p}(s_{k,h}, d_{h}^{k}(s_{k,h})) = \sum_{l\in S_{\tau-1}}\sum_{h = 1}^{H} H\sqrt{\frac{2S\ln\left(3KSA/\delta\right)}{\max\{1,N_{k}(s_{k,h},d_{h}^{k}(s_{k,h}))}}\\
 \leq H^{2}S\sqrt{2AH\# S_{\tau-1}(1+\ln(\# S_{\tau-1}H))\ln\left(3KSA/\delta\right)}
 \end{align*}
 and, under the event $\mathcal{E}$, $\sum_{l\in \calS_{\tau-1}} \sum_{h=1}^{H} \varepsilon_{k,h} \leq 2H^{3/2} \sqrt{2|\calS_{\tau-1}|\ln(3KH/\delta)}$. On the other hand, for the episode $\tau$, we can only bound the difference in value function by $H$. Finally, we have that $\tau = 1 + |\calS_{\tau-1}^{c}| + |\calS_{\tau-1}|$ and thus if we assume that $r_{b} := \min_{s}V^{\pi_{b}}(s) > 0$  :
 \begin{align*}
\alpha r_{b} (|\calS_{\tau-1}^{c}| + 1) \leq \alpha \sum_{k = 1}^{\tau} V_{1}^{\pi_{b}}(s_{k,1})  \leq -(\underline{\Delta}_{b} + \alpha r_{b})|\calS_{\tau-1}|+ 2H^{3/2} \sqrt{2|\calS_{\tau-1}|\ln(3KH/\delta)}& \\
+5H^{2}S\sqrt{2AH| \calS_{\tau-1}|(1+ \calS_{\tau-1}|H))\ln\left(3KSA/\delta\right)}& \\
+ 4r_{\text{max}}H^{2}\sqrt{2SAH|\calS_{\tau-1}|(1+\ln(|\calS_{\tau-1}|H))\ln\left(3KSA/\delta\right)}&
 \end{align*}
 Thus, the function on the RHS in bounded and using lemma $8$ of \cite{KazerouniGAR17}, we have :
 \begin{align*}
 \alpha r_{b} (| \calS_{\tau - 1}^{c}| + 1) \leq \frac{1}{4(\underline{\Delta}_{b} + \alpha r_{b})}\Bigg( 16H^{3}\ln\left( \frac{3KH}{\delta}\right) + \big(200H^{5}S^{2}A + 128r_{\text{max}}^{2}H^{5}SA\big)\times \nonumber\\
 \times(1 + \ln\left( HK\right))\ln\left(\frac{3KSA}{\delta}\right)  \Bigg)
 \end{align*} But by definition, $|\calS_{\tau-1}^{c}| + 1 = |\calS_{K}^{c}|$. Hence the result.
 \end{proof}

\paragraph{Experiments}

Finally, we end this presentation of conservativeness in finite horizon MDPs with some experiments. We consider a classic $3\times4$ gridworld problem with one goal state, a starting state and one trap state, we set $H = 10$, and the reward of any action in all the state to $-2$, the reward in the goal state to $10$ and the reward of falling in the trapping state to $-20$. We normalize the rewards to be in $[0,1]$. The baseline policy  is describing a path around the pit, see Fig~\ref{fig:baseline_policy}.
\begin{figure}[h]
\centering
\includegraphics[width=0.2\linewidth]{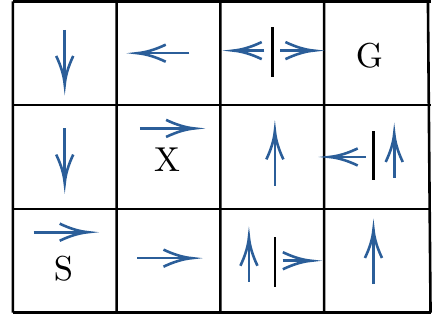}
\caption{Illustration of the baseline policy. S is the starting state, X is the pit a,d G is the goal state.}
\label{fig:baseline_policy}
\end{figure}
On the two position adjacent to the goal the baseline policy is stochastic with a probability of reaching the goal of $1/2$ for the position on the right of the goal and below the goal, respectively. On the last line the probability of going up or right is also uniform. Figure~\ref{fig:grid_fh} shows the impact of the conservative constraint on the regret of \ucbvi for a conservative coefficient $\alpha = 0.05$. Fig~\ref{fig:grid_fh} also shows the constraint as a function of the time for \ucbvi and \cucbvi that is to say: $\sum_{l=1}^{t} V^{\pi_{l}}(s_{0}) - (1-\alpha)V^{\pi_{b}}(s_{0})$ as a function of episode $t$ with $s_{0}$ the starting state of the gridworld. In the first $10\%$ episodes (i.e until episode $300$) the condition was violated by \ucbvi $83\%$ of the time.
\begin{figure}[h]
        \centering
        \includegraphics[width=.3\textwidth]{./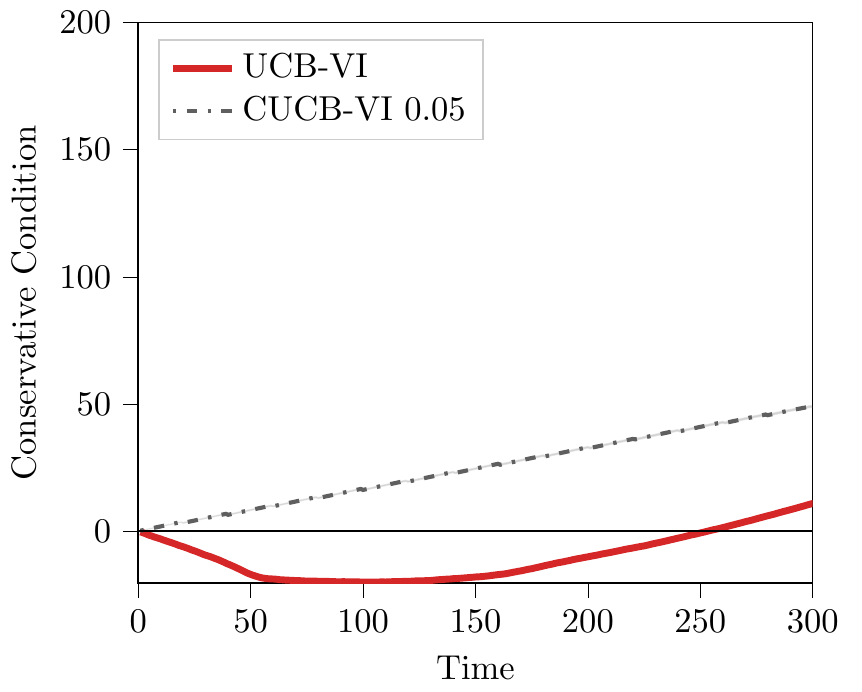}
        \includegraphics[width=.3\textwidth]{./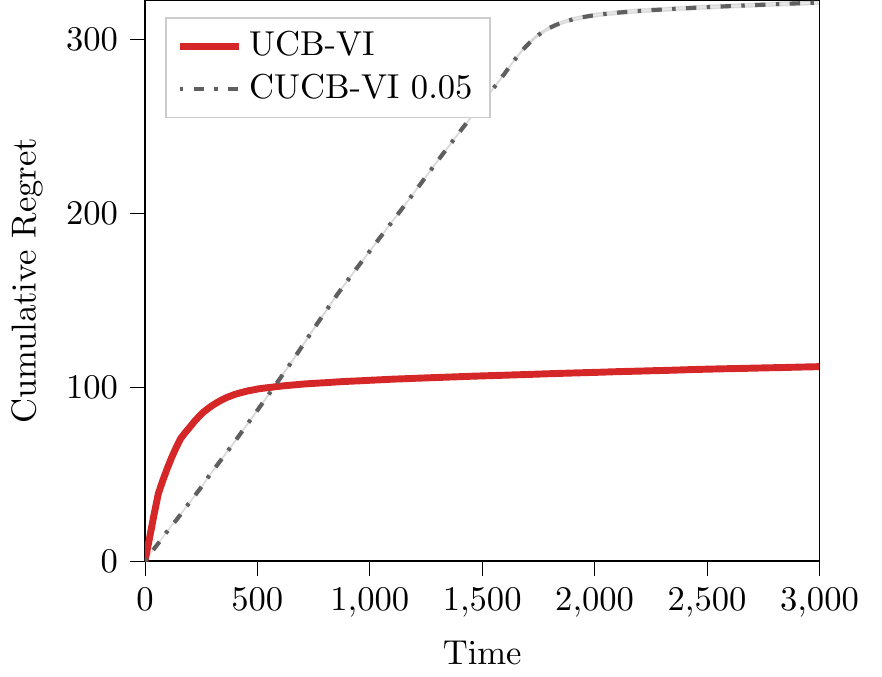}
        \vspace{-.1in}
        \caption{Regret and Conservative Condition for the gridworld problem}
        \label{fig:grid_fh}
        \vspace{-.1in}
\end{figure}

\section{Experiments}\label{app:experiments}
For average reward problems we consider ``simplified'' Bernstein confidence intervals given by:
\[
        \beta_r^k(s,a) = \sigma_r(s,a) \sqrt{\frac{\ln(SA/\delta)}{N_k^+(s,a)}} + \rmaxbound\frac{\ln(SA/\delta)}{N_k^+(s,a)} \quad \text{and} \quad \beta_{p}^k(s,a,s') = \sigma_p(s,a,s')\sqrt{\frac{\ln(SA/\delta)}{N_k^+(s,a)}} + \frac{\ln(SA/\delta)}{N_k^+(s,a)}
\]
where $N_k^+(s,a) = \max\{1, N_k(s,a)\}$, $\sigma_r(s,a)$ is the empirical standard deviation and $\sigma_p(s,a,s') = \sqrt{\wh{p}(s'|s,a) (1-\wh{p}(s'|s,a))}$.
\subsection{Single-Product Stochastic Inventory Control}
Maintaining inventories is necessary for any company dealing with physical products.
We consider the case of single product without backlogging.
The state space is the amount of products in the inventory, $\calS = \{0, \ldots, M\}$ where $M$ is the maximum capacity.
Given the state $s_t$ at the beginning of the month, the manager (agent) has to decide the amount of units $a_t$ to order.
We define $D_t$ to be the random demand of month $t$ and we assume a time-homogeneous probability distribution for the demand.
The inventory at time $t+1$ is given by
\[
        s_{t+1} = \max \{0, s_t + a_t - D_t \}
\]
The action space is $\A_s = \{0, \ldots, M-s\}$.
As in~\citep{puterman1994markov}, we assume a fixed cost $K>0$ for placing orders and a varible cost $c(a)$ that increases with the quantity ordered: $O(a) = \begin{cases}K+c(a) & a>0\\ 0 & \text{otherwise}\end{cases}$.
The cost of maintaining an inventory of $s$ items is defined by the nondecreasing function $h(s)$.
If the inventory is available to meet a demand $j$, the agent receives a revenue of $f(j)$.
The reward is thus defined as $r(s_t,a_t,s_{t+1}) = - O(a_t) - h(s_t + a_t) +f(s_t +a_t -s_{t+1})$. In the experiments, we use $K=4$, $c(x) = 2x$, $h(x) = x$ and $f(x) = 8x$.

In all the experiments, we normalize rewards such that the support is in $[0,1]$ and we use noise proportional to the reward mean: $r_t(s,a) = (1+c\eta_t) r(s,a)$ where $\eta_t \sim \mathcal{N}(0, 1)$ (we set $c = 0.1$).

\begin{figure}[t]
\centering
        \includegraphics[width=0.4\textwidth]{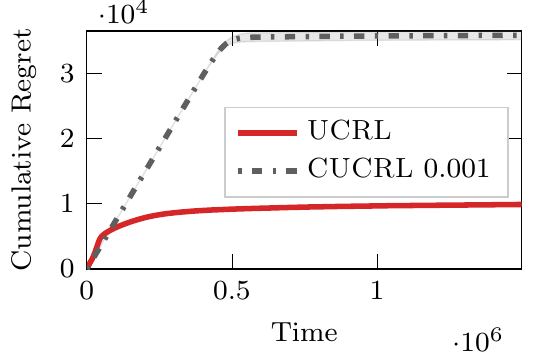}
        \caption{Regret of \ucrl and \cucrl on the Cost-Based Maintenance problem described in \ref{app:experiments}}
        \label{fig:cbm}
\end{figure}
\subsection{Cost-Based Maintenance}
The system is composed by $N$ components in an active redundant, parallel setting, which are subject to economic and stochastic dependence through load sharing. Each component $j \in [N]$ is described by its operational level $x_j = \{0, \ldots, L\}$. The level $L$ denotes that the component has failed. The deterioration process is modelled using a Poisson process. If all components have failed, the system is shut down and a penalty cost $p$ is paid. The replacement of a failed component cost $c_c$, while the same operation on an active component cost $c_p$ (usually $c_c \geq c_p$). There is also a fixed cost for maintenance $c_s$. At each time step, it is possible to replace simultaneously multiple components.
Please refer to~\citep{keizer2016cbm} for a complete description of dynamics and rewards.

We terminate the analysis of \cucrl with a more challenging test. We consider the condition-based maintenance problem~\citep[CBM,][]{keizer2016cbm} a multi-component system subject to structural, economic and stochastic dependences. We report a complete description of the problem in App.~\ref{app:experiments}.
The resulting MDP has $S = 121$ states and $A=4$ actions. The maintenance policy is often implemented as a threshold policy based on the deterioration level.
Such a threshold policy is not necessarily optimal for a system with economic dependence and redundancy.
We simulate this scenario by considering a strong (almost optimal) threshold policy for CBM without economic dependence as baseline. We make it stochastic by selecting with probability $0.3$ a random action. As a result we have that the optimal gain $g^\star=0.89$ while the baseline gain is $g^{\pi_b} = 0.82$.
Fig.~\ref{fig:cbm} shows the cumulative regret for \ucrl and \cucrl with $\alpha=0.001$.
\ucrl explores faster than \cucrl but violates the conservative condition $53\%$ of times in the initial phase (up to $t=140000$), incurring in multiple complete system failures.
On the other hand, \cucrl never violates the conservative condition.

\end{document}